\definecolor{azure(colorwheel)}{rgb}{0.0, 0.5, 1.0}
\global\long\def\n{\mathbf{n}}%
\global\long\def\divp#1{\nabla\cdot\left(#1\right)}%
\global\long\def\G{\mathcal{G}}%
\global\long\def\L{\mathcal{L}}%
\global\long\def\si{\sigma}%
\global\long\def\cv#1#2{\left\{  #1\,\middle|\,#2\right\}  }%
\let\originalleft\left
\let\originalright\right
\renewcommand{\left}{\mathopen{}\mathclose\bgroup\originalleft}
\renewcommand{\right}{\aftergroup\egroup\originalright}
\global\long\def\X{\mathcal{X}}%
\global\long\def\K{\mathcal{K}}%
\global\long\def\P{\mathcal{P}}%
\global\long\def\L{\mathcal{L}}%
\global\long\def\S{\mathcal{S}}%
\newcommand{\om}{\omega}
\global\long\def\g{\nabla}%
\global\long\def\gr{\nabla}%
\global\long\def\fr#1#2{\frac{#1}{#2}}%
\newcommand{\cd}{\cdot}
\newcommand{\diver}[1]{\g \cdot \left( #1 \right)}
\global\long\def\norm#1{\left\lVert #1\right\rVert }%
\global\long\def\pa{\mathbf{\partial}}%
\global\long\def\EE{\mathbb{E}}%
\global\long\def\RR{\mathbb{R}}%
\global\long\def\eval#1{\left.#1\right|}%
\DeclareMathOperator*{\argmin}{arg\,min}
\newcommand{\pat}{\frac{\pa}{\pa t}}
\global\long\def\fr#1#2{\frac{#1}{#2}}%
\newcommand{\hl}[1]{#1}
\newcommand{\deriv}[2]{\frac{\partial #1}{\partial #2}}
\newcommand{\id}{{\mathrm{id}}}
\newcommand{\mean}{\mathbb{E}}%
\newcommand{\var}{{\rm I\kern-.3em D}}
\renewcommand{\vec}[1]{\mathbf{#1}}
\newcommand{\cond}{\,|\,}
\newcommand{\Normal}{\mathcal{N}}
\newcommand{\inner}[2]{\left\langle #1, #2\right\rangle}
\newcommand{\eps}{\varepsilon}
\newtheorem*{theorem*}{Theorem}
\newtheorem*{proposition*}{Proposition}
\newtheorem*{example*}{Example}
\DeclareMathSymbol{\shortminus}{\mathbin}{AMSa}{"39}
\newcommand{\lagr}{L}
\newcommand{\hamil}{H}
\newcommand{\vheader}{\vspace*{-.14cm}}
\crefname{section}{Sec.}{Sec.}
\crefname{appendix}{App.}{App.}
\crefname{proposition}{Prop.}{Prop.}
\theoremstyle{plain}
\newtheorem{theorem}{Theorem}[section]
\newtheorem{proposition}[theorem]{Proposition}
\theoremstyle{definition}
\newtheorem{definition}[theorem]{Definition}
\theoremstyle{remark}
\icmltitlerunning{Action Matching}
\begin{document}

\twocolumn[
\icmltitle{Action Matching: \\ Learning Stochastic Dynamics from Samples}

\begin{icmlauthorlist}
\icmlauthor{Kirill Neklyudov}{yyy}
\icmlauthor{Rob Brekelmans}{yyy}
\icmlauthor{Daniel Severo}{yyy,ut}
\icmlauthor{Alireza Makhzani}{yyy,ut}
\end{icmlauthorlist}

\icmlaffiliation{yyy}{Vector Institute}
\icmlaffiliation{ut}{University of Toronto}

\icmlcorrespondingauthor{}{k.necludov@gmail.com}
\icmlcorrespondingauthor{}{makhzani@vectorinstitute.ai}

\icmlkeywords{Machine Learning, ICML}

\vskip 0.3in
]

\printAffiliationsAndNotice{} %

\begin{abstract}
Learning the continuous dynamics of a system from snapshots of its temporal marginals is a problem which appears throughout natural sciences and machine learning, including in quantum systems, single-cell biological data, and generative modeling. In these settings, we assume access to cross-sectional samples that are uncorrelated over time, rather than full trajectories of samples. In order to better understand the systems under observation, we would like to learn a model of the underlying process that allows us to propagate samples in time and thereby simulate entire individual trajectories. In this work, we propose Action Matching, a method for learning a rich family of dynamics using only independent samples from its time evolution. We derive a tractable training objective, which does not rely on explicit assumptions about the underlying dynamics and does not require back-propagation through differential equations or optimal transport solvers. Inspired by connections with optimal transport, we derive extensions of Action Matching to learn stochastic differential equations and dynamics involving creation and destruction of probability mass. Finally, we showcase applications of Action Matching by achieving competitive performance in a diverse set of experiments from biology, physics, and generative modeling.
\end{abstract}

\section{Introduction}\label{sec:intro}

Understanding the time evolution of systems of particles or individuals is a fundamental problem appearing across machine learning and
the natural sciences.   In many scenarios, it is expensive or even physically impossible to observe entire individual trajectories.   For example, in quantum mechanics, the act of measurement at a given point collapses the wave function \citep{griffiths2018introduction}, while in biological applications, single-cell RNA- or ATAC- sequencing techniques destroy the cell in question \citep{macosko2015highly, klein2015droplet, buenrostro2015single}.   

Instead, from `cross-sectional' or independent 
samples at various points in time, we would like to learn a 
model which simulates particles such that their density matches that of the observed samples.
The problem of learning stochastic dynamics from marginal samples is variously referred to as learning \textit{population dynamics} \citep{hashimoto2016learning} or as \textit{trajectory inference} \cite{lavenant2021towards}, in contrast to time series modeling where entire trajectories are assumed to be available.   Learning such models to predict entire trajectories 
holds the promise of facilitating simulation of complex chemical or physical systems 
\citep{vazquez2007porous, noe2020machine} 
and understanding developmental processes or treatment effects in biology \citep{schiebinger2019optimal, tong2020trajectorynet, schiebinger2021reconstructing, bunne2021learning}.

\begin{figure*}[t]
    \centering
    \includegraphics[width=0.69\textwidth]{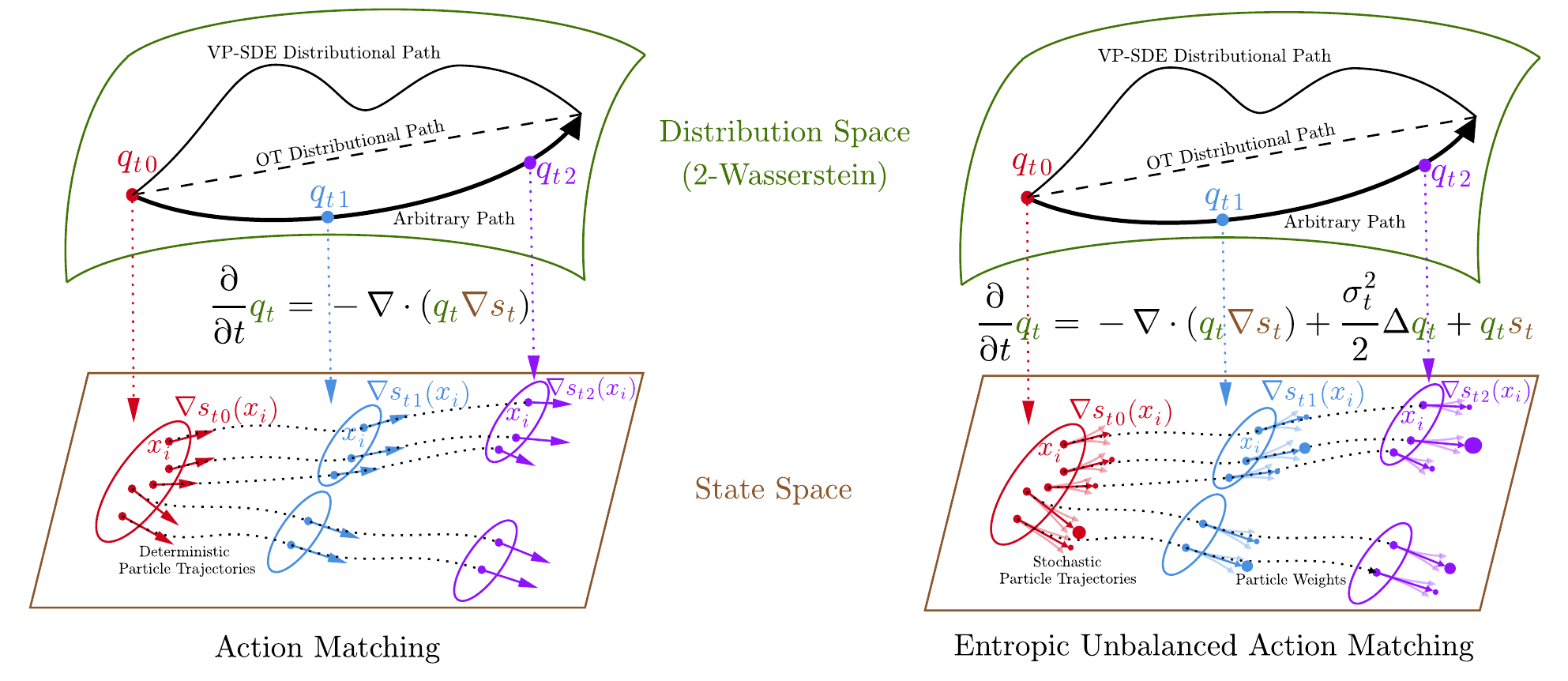}
    \vskip -10pt
    \caption{Action Matching, and its entropic (eAM) and unbalanced (uAM) variants, can learn to trace any arbitrary %
    distributional path. 
    For a given path, AM learns deterministic trajectories, eAM learns stochastic trajectories, and uAM learns weighted trajectories.}
    \label{fig:continuity_eq}
    \vspace*{-.2cm}
\end{figure*}

Furthermore, recent advances in generative modeling have been built upon learning stochastic dynamics which interpolate between the data distribution and a prior distribution.   
In particular, score-based diffusion models \cite{song2020score, ho2020denoising} construct a stochastic differential equation (SDE) to move samples from the data distribution to a prior distribution, while score matching \citep{hyvarinen2005estimation} is used to learn a reverse SDE which models the gradients of intermediate distributions.  
However, these methods rely on analytical forms of the SDEs and/or the tractability of intermediate Gaussian distributions \citep{lipman2022flow}.
Since our proposed method can learn dynamics which simulate an arbitrary path of marginal distributions, it can also be applied in the context of generative modeling. 
Namely, we can approach generative modeling by constructing an interpolating path between the data and an arbitrary prior distribution, and learning to model the resulting dynamics.

In this work, we propose \textit{Action Matching}, a method for learning 
population dynamics 
{from samples of their temporal marginals $q_t$}. 
Our contributions are as follows:
\begin{itemize}[nosep]
    \item  In \cref{th:ac_cont_short}, we establish the existence of a unique gradient field $\nabla s_t^*$ which traces any given time-continuous distributional path $q_t$. Notably, our restriction to gradient fields is without loss of expressivity for this class of $q_t$. To learn this gradient field, we propose the tractable Action Matching training objective in \cref{prop:am_loss}.
    \item In \cref{sec:entropic_am}-\ref{sec:ccosts}, we extend the above approach in several ways: an `entropic' version which can approximate ground-truth dynamics involving stochasticity, an `unbalanced' version which allows for creation and destruction of probability mass, and a version which can minimize an arbitrary convex cost function in the Action Matching objective.
    \item We discuss the close relationship between Action Matching and dynamical optimal transport, along with other related works in \cref{sec:discussion} and \cref{app:am_ot}.
    \item Since Action Matching relies only on samples and does not require tractable intermediate densities or knowledge of the underlying stochastic dynamics, it is applicable in a wide variety of problem settings. 
    In particular, we demonstrate competitive performance of Action Matching in a number of experiments, including trajectory inference in biological data (\cref{sec:bio}), evolution of quantum systems (\cref{sec:schrodinger}), and a variety of tasks in generative modeling (\cref{sec:gen_modeling}).
\end{itemize}

\vspace{-.2cm}
\section{Action Matching}\label{sec:action_matching}
\label{sec:well_defined}
\subsection{Continuity Equation}
Suppose we have a set of particles in space $\X \subset \RR^d$, initially distributed as $q_{t=0}$. Let each particle follow a time-dependent ODE (continuous flow) with the velocity field $v:[0,1]\times \X \to \RR^d$ as follows 
\begin{align}\label{eq:am_ode}
    \frac{d}{dt}  x(t) = v_t(x(t))\,, \quad x(t=0) = x\,.
\end{align}
The continuity equation describes how the density of the particles $q_t$ evolves in time $t$, i.e.,
\begin{align}
  \pat q_t = - \diver{q_t v_t}\,,        
\end{align}
which holds in the distributional sense, where $\g \cd$ denotes the divergence operator.
Under mild conditions, the following theorem shows that \emph{any continuous dynamics can be modeled by the continuity equation}, and moreover any continuity equation results in a continuous dynamics.
\begin{restatable}[Adapted from Theorem 8.3.1 of \citet{ambrosio2008gradient}]{theorem}{amunique}\label{th:ac_cont_short}
    Consider a continuous dynamic with the density evolution of $q_t$, which satisfies mild conditions (absolute continuity in the 2-Wasserstein space of distributions $\P_2(\X)$).
    Then, there exists a unique (up to a constant) function $s_t^*(x)$, called the ``action'',
    \footnote{The Hamilton-Jacobi formulation of classical mechanics describes the velocity of particles using a gradient field of the ``action'', which matches our usage throughout this work.}
    such that vector field  $v_t^*(x)=\g s_t^*(x)$ and $q_t$ satisfies the continuity equation
    \begin{align}
        \pat q_t = -\diver{q_t \g s_t^*(x)}\,.
        \label{eq:cont_eq}
    \end{align}
    In other words, the ODE $\frac{d}{dt} x(t) = \g s_t^*(x)$ can be used to move samples in time such that the marginals are $q_t$.
\end{restatable}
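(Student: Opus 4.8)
The plan is to reduce the statement to the cited result of \citet{ambrosio2008gradient} (their Theorem 8.3.1), which characterizes absolutely continuous curves in the Wasserstein space $\P_2(\X)$. The key fact is that a curve $t \mapsto q_t$ is absolutely continuous in $\P_2(\X)$ if and only if there exists a Borel velocity field $v_t \in L^2(q_t)$ solving the continuity equation $\pat q_t = -\diver{q_t v_t}$ in the distributional sense, and moreover one may choose $v_t$ with minimal norm $\norm{v_t}_{L^2(q_t)}$ equal to the metric derivative $|q_t'|$ of the curve. So first I would invoke this theorem to extract \emph{some} velocity field $v_t$ realizing the given dynamics.

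The heart of the argument is then the structural claim that the minimal-norm velocity field is in fact a gradient field, $v_t^* = \g s_t^*$ for a scalar potential $s_t^*$. The standard way to see this is a Helmholtz-type / $L^2(q_t)$-orthogonal decomposition: for each fixed $t$, the tangent space to $\P_2(\X)$ at $q_t$ is identified with the closure in $L^2(q_t)$ of gradients of smooth functions, $\overline{\{\g\varphi : \varphi \in C_c^\infty(\X)\}}$. Any $v_t$ solving the continuity equation can be orthogonally projected onto this subspace; writing $v_t = \g s_t^* + w_t$ with $w_t$ orthogonal to all gradients, the divergence-free remainder $w_t$ satisfies $\diver{q_t w_t} = 0$ and hence does not contribute to $\pat q_t$. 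Thus $\g s_t^*$ alone also solves the continuity equation, which establishes that the restriction to gradient fields is without loss of expressivity for this class of $q_t$. Projecting further decreases the $L^2(q_t)$ norm, so the gradient field is exactly the minimal-norm choice already produced by the cited theorem.

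Finally I would address \textbf{uniqueness up to a constant}. Suppose $\g s_t$ and $\g \tilde s_t$ both realize the same $q_t$ in the continuity equation; then $\diver{q_t(\g s_t - \g \tilde s_t)} = 0$, i.e.\ $\g(s_t - \tilde s_t)$ is $q_t$-divergence-free while also being a gradient. A gradient that is orthogonal to all gradients in $L^2(q_t)$ must vanish $q_t$-almost everywhere, forcing $\g s_t = \g \tilde s_t$ on the support of $q_t$ and hence $s_t - \tilde s_t$ constant (on each connected component of the support). The closing sentence of the theorem — that the ODE $\frac{d}{dt}x(t) = \g s_t^*(x)$ transports samples so that their marginals are exactly $q_t$ — then follows from the equivalence between the Lagrangian (flow) and Eulerian (continuity-equation) descriptions, via the method of characteristics / the superposition principle for the continuity equation.

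I expect the main obstacle to be technical rather than conceptual: making the orthogonal-decomposition argument rigorous requires care about the precise definition of the tangent space to $\P_2(\X)$ as the $L^2(q_t)$-closure of smooth gradients, and about the measurability and integrability of $s_t^*$ jointly in $(t,x)$. Most of this regularity is already packaged inside the statement of Theorem 8.3.1 of \citet{ambrosio2008gradient}, so the cleanest route is to cite that theorem for both the existence of the minimal-norm field and its characterization as the tangent (gradient) component, and to supply only the short divergence-free-remainder argument for the uniqueness and the ``without loss of expressivity'' claims.
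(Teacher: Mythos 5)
Your proof is correct, but it takes a genuinely different route from the one the paper writes down. The paper's own proof fixes $t$, treats $\partial_t q_t$ as a known source, and regards $-\nabla\cdot(q_t\nabla s_t^*)=\partial_t q_t$ with the Neumann condition $\langle \nabla s_t^*,\n\rangle=0$ on $\partial\X$ as an elliptic PDE in $s_t^*$; existence and uniqueness up to a constant then follow from classical elliptic theory (the compatibility condition holds because $\partial_t q_t$ integrates to zero). You instead go through the Wasserstein-geometric characterization of absolutely continuous curves (AGS Thm.\ 8.3.1) plus the Helmholtz/tangent-space decomposition, projecting an arbitrary admissible velocity field onto the $L^2(q_t)$-closure of gradients and discarding the $q_t$-divergence-free remainder; this is in fact the argument of the cited reference, and it is the same decomposition the paper itself invokes later (via AGS Lemma 8.4.2) when contrasting the AM target with the Flow Matching target. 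Each route buys something: yours gets the minimal-kinetic-energy/optimal-transport interpretation of $\nabla s_t^*$ for free (the paper defers this to its appendix on infinitesimal OT), whereas the paper's elliptic formulation directly produces a genuine scalar potential (rather than an element of a closure of gradients, a gap you rightly flag) and, more importantly, delivers the Neumann boundary condition $\langle\nabla s_t^*,\n\rangle=0$ that is explicitly used downstream to kill the boundary term in the integration by parts in the proof of the Action Matching loss decomposition; your argument does not surface that boundary condition, so if one adopted your proof one would still need to justify that step separately. Your uniqueness argument (a gradient that is $q_t$-divergence-free is $q_t$-a.e.\ zero, by testing against itself) is the energy-method core of the paper's elliptic uniqueness claim, so the two coincide there.
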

Using \cref{th:ac_cont_short}, the problem of learning the dynamics can be boiled down to learning the unique vector field $\nabla s_t^*$, only using samples from $q_t$. Motivated by this, we restrict our search space 
to the family of gradient vector fields
\begin{align}
    \S_t = \cv{\g s_t}{s_t:\X \to \mathbb{R}}\,.
\end{align}
We use a neural network to parameterize the set of functions $s_t(x; \theta)$, and will propose the Action Matching objective in \cref{sec:action_matching} to learn 
parameters $\theta$ such that
$s_t(x; \theta)$ approximates $s_t^*(x)$. Once we have learned the vector field, 
we can move samples forward or backward in time by simulating the ODE in \cref{eq:am_ode} with the velocity $\nabla s_t$.
The continuity equation ensures that  for $\nabla s_t^*$, samples at any given time 
$t \in [0,1]$
are distributed according to $q_{t}$. 

Note that, even though we arrived at the continuity equation and ground truth vector field $\nabla s_t^*(x)$ using ODEs,
continuity equation can describe a rich family of density evolutions, including diffusion equation (see Equation 37 of \citet{song2020score}), or even more general evolutions such as porous medium equations \citep{otto2001geometry} in fluid mechanics.
Since these processes also define an absolutely continuous curve in the density space, \cref{th:ac_cont_short} applies.   Thus, for the task of modeling the marginal evolution of $q_t$, our restriction to ODEs using gradient vector fields does not sacrifice expressivity.

\vheader
\subsection{Action Matching Loss}\label{sec:am_loss}
The main development of this paper is the Action Matching method, which allows us to recover the \emph{true action} $s_t^*$ while having access only to samples from $q_t$.   With this action in hand, we can simulate the continuous dynamics whose evolution matches $q_t$ using the vector field $\nabla s_t^*$ (see \cref{fig:continuity_eq}).

In order to do so, we define the \emph{variational action} $s_t(x)$ parameterized by a neural network, which approximates $s_t^*(x)$ by minimizing the ``$\textsc{action-gap}$'' objective
\fontsize{8.5pt}{11pt}\selectfont
\begin{align}\label{eq:am_action_gap}
   \textsc{action-gap}(s,s^*) \coloneqq \fr 12\int_0^1
   \EE_{q_{t}\left(x\right)}{\norm{\g s_{t}\left(x\right)-\g s_{t}^{*}\left(x\right)}^{2}}dt  \,.
\end{align}
\normalsize
Note that this objective is intractable, as we do not have access to $\g s^*$. However as the following proposition shows, we can still derive a tractable objective for minimizing the action gap.

\begin{restatable}{theorem}{amloss}\label{prop:am_loss}
For an arbitrary variational action $s$, the $\textsc{action-gap}(s,s^*)$ can be decomposed as the sum of an intractable constant
$\K$,
and a tractable term $\mathcal{L}_{\text{AM}}\left(s\right)$
\begin{align}
    &\textsc{action-gap}(s_t,s_t^*) =  
    \K 
    + \mathcal{L}_{\text{\emph{AM}}}\left(s_t\right) \,.
\end{align}
where $\mathcal{L}_{\text{\emph{AM}}}(s)$ is the Action Matching objective, which we minimize
\begin{align}
\begin{split}
\mathcal{L}_{\text{\emph{AM}}}(s)&\coloneqq \EE_{q_{0}\left(x\right)}\big[s_{0}\left(x\right)\big]-\EE_{q_{{1}}\left(x\right)}\big[s_{{1}}\left(x\right)\big] \\
&+\int_{0}^{1} \EE_{q_{t}\left(x\right)}\left[\fr 12\norm{\g s_{t}\left(x\right)}^{2}+\fr{\pa s_{t}}{\pa t}\left(x\right) \right]dt\,
\end{split}\label{eq:kilbo}
\end{align}
\end{restatable}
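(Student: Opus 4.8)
The plan is to expand the squared norm inside the $\textsc{action-gap}$ and then rewrite the resulting cross term using the continuity equation guaranteed by \cref{th:ac_cont_short}. Writing $\fr12\norm{\g s_t - \g s_t^*}^2 = \fr12\norm{\g s_t}^2 - \inner{\g s_t}{\g s_t^*} + \fr12\norm{\g s_t^*}^2$ and integrating against $q_t$ over $t\in[0,1]$, the last term involves only the ground-truth action and is completely independent of the variational $s$; I would simply declare it the intractable constant $\K \coloneqq \fr12\int_0^1 \EE_{q_t(x)}[\norm{\g s_t^*(x)}^2]\,dt$. The first term is already the kinetic contribution appearing in $\mathcal{L}_{\text{AM}}$. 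Hence the whole proposition reduces to showing that the negated cross term $-\int_0^1\EE_{q_t(x)}[\inner{\g s_t}{\g s_t^*}]\,dt$ equals the boundary plus time-derivative terms of $\mathcal{L}_{\text{AM}}$.

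The key step is to handle the cross term by differentiating the scalar $\EE_{q_t(x)}[s_t(x)] = \int s_t\, q_t\,dx$ in time. By the product rule, $\frac{d}{dt}\int s_t q_t\,dx = \int (\pat s_t)\,q_t\,dx + \int s_t\,(\pat q_t)\,dx$. Into the second integral I would substitute the continuity equation $\pat q_t = -\diver{q_t\g s_t^*}$ and integrate by parts, moving the divergence off $q_t\g s_t^*$ and onto $s_t$; this turns it into $+\int \inner{\g s_t}{\g s_t^*}\,q_t\,dx$. Rearranging yields the pointwise-in-time identity $\EE_{q_t(x)}[\inner{\g s_t}{\g s_t^*}] = \frac{d}{dt}\EE_{q_t(x)}[s_t] - \EE_{q_t(x)}[\pat s_t]$.

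Finally, I would integrate this identity over $t\in[0,1]$. The total-derivative term telescopes by the fundamental theorem of calculus to $\EE_{q_1(x)}[s_1] - \EE_{q_0(x)}[s_0]$, so the negated cross term becomes exactly $\EE_{q_0(x)}[s_0] - \EE_{q_1(x)}[s_1] + \int_0^1\EE_{q_t(x)}[\pat s_t]\,dt$, which together with the retained kinetic term is precisely $\mathcal{L}_{\text{AM}}(s)$; assembling the three pieces gives $\textsc{action-gap}(s,s^*) = \K + \mathcal{L}_{\text{AM}}(s)$. The main obstacle is rigorously justifying the integration by parts: the spatial boundary term $\int_{\partial\X} s_t\,q_t\,\g s_t^*\cd\n\,dS$ must vanish, which I expect to follow from the mild regularity already assumed in \cref{th:ac_cont_short} (decay of $q_t$ at spatial infinity, or a compact/periodic domain) together with enough smoothness of $s_t,s_t^*$ to differentiate under the integral and exchange limits. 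I would record these as standing integrability/decay assumptions rather than dwell on the analytic technicalities.
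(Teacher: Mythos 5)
Your proposal is correct and follows essentially the same route as the paper's proof: expand the square, identify $\K$ as the kinetic energy of $\nabla s_t^*$, convert the cross term via spatial integration by parts plus the continuity equation (with the boundary term killed by the Neumann condition $\langle \nabla s_t^*, \n\rangle = 0$ from the existence proof), and finish with integration by parts in time. The only cosmetic difference is that you package the spatial and temporal steps into a single product-rule identity for $\frac{d}{dt}\EE_{q_t}[s_t]$, whereas the paper performs them sequentially.
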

See \cref{app:am_proof} for the proof. The term $\mathcal{L}_{\text{AM}}$ is tractable, since we can use the samples from marginals $q_t$ to obtain an unbiased low variance estimate.   We show in \cref{app:am_proof} that the intractable constant $\K$ is the \emph{kinetic energy} of the distributional path, defined as $\K(\g s_t^*)\coloneqq\fr 12\int_{0}^{{1}}\EE_{q_{t}\left(x\right)}\norm{\g s_{t}^{*}(x)}^{2}dt$, and 
thus minimizing $\mathcal{L}_{\text{\emph{AM}}}(s)$ can be viewed as maximizing a variational lower bound on the kinetic energy.
\paragraph{Connection with Optimal Transport}
In \cref{app:infinitesimal}, we show that the optimal dynamics of AM along the curve is also optimal in the sense of optimal transport with the 2-Wasserstein cost.  
More precisely, at any given time $t$, 
the optimal vector field in the AM objective defines a mapping between two infinitesimally close distributions $q_t$ and $q_{t+h}$, which is of the form $x \mapsto x+h \nabla s^*_t(x)$. This mapping is indeed the same as the Brenier map \citep{brenier1987decomposition} in optimal transport, which is of the form $x \mapsto x+\nabla \varphi_t(x)$, where $\varphi_t$ is the (c-convex) Kantorovich potential.

Finally, in \cref{app:am_proof}, we adapt reasoning from \citet{albergo2022building} to show that the 2-Wasserstein distance between the ground truth marginals and those simulated using our learned $\nabla s_t(x)$ can be upper bounded in terms of $\textsc{action-gap}(s_t, s_t^*)$.

\newcommand{\ii}{{i}}
\newcommand{\jj}{{j}}
\newcommand{\sampledt}{{t^i}}
\newcommand{\tforst}{t}
\newcommand{\stt}{{\tforst^i}}
\begin{algorithm}[t]
  \caption{Action Matching \footnotemark}
  \begin{algorithmic}
    \REQUIRE data $\{x^\jj_t\}_{j=1}^{N_t}, \; x^{\jj}_t 
    \sim q_t(x)$ 
    \REQUIRE parametric model $s_t(x,\theta)$
    \FOR{learning iterations}
        \STATE get batch of samples from boundaries:
        \STATE \qquad $\{x_0^\ii\}_{i=1}^n \sim q_0(x), \; \{x_1^\ii\}_{i=1}^n \sim q_1(x)$ \\[1.5ex]
        \STATE sample times $\{\sampledt\}_{i=1}^n \sim \text{Uniform}[0,1]$\\[1.25ex]
        \STATE get batch of intermediate samples $\{x_{\sampledt}^\ii\}_{i=1}^n \sim q_{t}(x)$\\[1.25ex]
        \STATE $\L_{\text{AM}}(\theta) = \frac{1}{n}\sum \limits_{i=1}^n \bigg[s_{0}(x_0^\ii,\theta) - s_{1}(x_1^\ii,\theta) $
        \STATE {$\phantom{\L_{\text{AM}} =(\theta) = \frac{1}{n} } + \frac{1}{2}\norm{\nabla s_{\stt}(x_{{\sampledt}}^\ii,\theta)}^2 + \deriv{s_{\stt}(x_{{\sampledt}}^\ii,\theta)}{\tforst}\bigg]$} \\[1.25ex]
        \STATE update the model $\theta \gets \text{Optimizer}(\theta, \nabla_\theta\L_{\text{AM}}(\theta))$
    \ENDFOR
    \OUTPUT trained model $s_t(x,\theta^*)$
  \end{algorithmic}
  \label{alg:amgl}
\end{algorithm}
\footnotetext{Notebooks with pedagogical examples of AM are given at \href{https://github.com/necludov/jam\#tutorials}{github.com/necludov/jam\#tutorials}.}

\vheader
\subsection{Learning, Sampling, and Likelihood Evaluation}
\paragraph{Learning} 
We provide pseudo-code for learning with the Action Matching objective in \cref{alg:amgl}.
With our learned $\nabla s_t(x, \theta)$, we now describe how to simulate the dynamics 
and evaluate likelihoods 
when
the initial 
density 
$q_0$ is known.
\paragraph{Sampling}
We sample from the target distribution via the trained function $s_t(x(t),\theta^*)$ by solving the following ODE %
forward 
in time:
\begin{align}
    \frac{d}{dt} x(t) = \nabla_x s_t(x(t),\theta^*), \;\;\; x(t=0)
    \sim q_{0}(x). \label{eq:sampling}
\end{align}
Recall that this sampling process is justified by \cref{eq:cont_eq}, where $s_t(x(t),\theta^*)$ approximates $s_t^*(x(t))$.
\paragraph{Evaluating the Log-Likelihood} 
When the density for $q_0$ is available, we can evaluate the log-likelihood of a sample $x \sim q_1$ using the continuous change of variables formula \citep{chen2018neural}. Integrating the ODE backward in time,
\begin{equation}
\begin{aligned}
    \log q_1(x) &= \log q_0(x(0)) - \int_0^1 dt\; \Delta s_t^*(x(t)), \\
    \frac{d}{dt} x(t) &= \nabla_x s_t^*(x(t)),\quad
    x(t=1) = x, \label{eq:lkd_eval}
\end{aligned}
\end{equation}
where $\frac{d}{dt}\log q_t = -\Delta s_t^*$ can be confirmed using a simple calculation and we approximate $s_t^*(x(t))$ by $s_t(x(t),\theta^*)$. 

\vheader
\section{Extensions of Action Matching \protect\footnote{This section can be skipped without loss of understanding of our core contributions, although our experiments also evaluate the entropic \textsc{am} method from \cref{sec:entropic_am}.}}

In this section, we propose several extensions of Action Matching, which can be used to learn dynamics which include stochasticity (\cref{sec:entropic_am}), allow for teleportation of probability mass (\cref{sec:unbalanced_am}), and minimize alternative kinetic energy costs (\cref{sec:ccosts}).

\vheader
\subsection{Entropic Action Matching}\label{sec:entropic_am}

In this section, we propose \textit{entropic} Action Matching (eAM), which can recover the ground-truth dynamics arising from diffusion processes with gradient field drift term and known diffusion term.  
This setting takes place in biological applications 
studying the Brownian motion of cells in a medium \citep{schiebinger2019optimal, tong2020trajectorynet}.
We will show in \cref{prop:entropic_am} that, at optimality,
entropic AM can \textit{also} learn 
{to trace any absolutely continuous distributional path under mild conditions}, so that the choice between entropic AM and deterministic AM should be made based on prior knowledge of the true underlying dynamics.
Consider the stochastic differential equation
\begin{align}
    dx(t) = v_t(x)dt+\sigma_t dW_t\,, \quad x(t=0) = x\,.
\end{align}
where $W_t$ is the Wiener process. We know that the evolution of density of this diffusion process is described by the Fokker–Planck equation:
\begin{align}
  \pat q_t = - \diver{q_t v_t} + \frac{\sigma_t^2}{2} \Delta q_t\,,        
\end{align}
In the following proposition, we extend \cref{th:ac_cont_short} and prove that any continuous distributional path, regardless of ground truth generating dynamics, can be modeled with the diffusion dynamics in the state-space.

\begin{restatable}{proposition}{eamexist}\label{prop:entropic_am}
    Consider a continuous dynamic with the density evolution of $q_t$, and suppose $\si_t$ is given.
    Then, there exists a unique (up to a constant) function $\tilde{s}_t^*(x)$, called the ``entropic action'', such that vector field  $v_t^*(x)=\g \tilde{s}_t^*(x)$ and $q_t$ satisfies the Fokker-Planck equation
    \begin{align}
        \pat q_t = - \diver{q_t \nabla \tilde{s}_t^*} + \frac{\sigma_t^2}{2} \Delta q_t\,, 
        \label{eq:fokker_planck_eq}
    \end{align}
\end{restatable}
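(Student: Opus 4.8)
The plan is to reduce the Fokker--Planck equation \eqref{eq:fokker_planck_eq} to the plain continuity equation \eqref{eq:cont_eq}, so that \cref{th:ac_cont_short} can be applied verbatim. The key observation is that the diffusion term can be written as a divergence of a gradient-weighted flux using the log-density identity $\Delta q_t = \nabla \cdot (\nabla q_t) = \nabla \cdot (q_t \nabla \log q_t)$, which holds wherever $q_t > 0$ since $\nabla q_t = q_t \nabla \log q_t$. This turns the stochastic (diffusion) part into a deterministic transport term whose velocity is itself a gradient field, $\fr{\sigma_t^2}{2}\nabla \log q_t$.

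First I would substitute this identity into \eqref{eq:fokker_planck_eq} and collect the two divergence terms:
\begin{align*}
\pat q_t = -\nabla \cdot \left( q_t \nabla \tilde{s}_t^* \right) + \fr{\sigma_t^2}{2}\, \nabla \cdot \left(q_t \nabla \log q_t\right) = -\nabla \cdot \left( q_t \nabla \left( \tilde{s}_t^* - \fr{\sigma_t^2}{2} \log q_t \right) \right).
\end{align*}
This is exactly a continuity equation of the form \eqref{eq:cont_eq} driven by the gradient field $\nabla s_t^*$, where I set $s_t^* \coloneqq \tilde{s}_t^* - \fr{\sigma_t^2}{2} \log q_t$. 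Since $\sigma_t$ is given and $q_t$ is the prescribed path, the shift $\fr{\sigma_t^2}{2}\log q_t$ is a fixed, fully determined function at each time.

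Next, because $q_t$ is absolutely continuous in $\P_2(\X)$ by hypothesis, \cref{th:ac_cont_short} applies and yields a function $s_t^*$, unique up to an additive constant, solving this continuity equation. I then simply define $\tilde{s}_t^* \coloneqq s_t^* + \fr{\sigma_t^2}{2} \log q_t$. Since the map $s_t^* \mapsto \tilde{s}_t^*$ merely adds a fixed known term, it is a bijection that preserves the additive-constant ambiguity, so existence and uniqueness (up to a constant) of $\tilde{s}_t^*$ transfer immediately from the corresponding properties of $s_t^*$. Reversing the substitution gives a one-line verification that this $\tilde{s}_t^*$ satisfies \eqref{eq:fokker_planck_eq}.

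The main obstacle is a regularity caveat rather than a conceptual difficulty: justifying $\Delta q_t = \nabla \cdot (q_t \nabla \log q_t)$ and manipulating $\nabla \log q_t$ requires $q_t$ to be strictly positive and smooth enough for $\log q_t$ to be differentiable, with all equalities read in the distributional sense consistent with \cref{th:ac_cont_short}. Under the same ``mild conditions'' already assumed there, these steps are valid, and the reduction is essentially the entire content of the argument.
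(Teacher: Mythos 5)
Your proposal is correct and matches the paper's own proof essentially verbatim: both use the identity $\Delta q_t = \nabla\cdot(q_t\nabla\log q_t)$ and the reparametrization $s_t^* = \tilde{s}_t^* - \fr{\sigma_t^2}{2}\log q_t$ to reduce the Fokker--Planck equation to the continuity equation, then invoke the existence and uniqueness result of \cref{th:ac_cont_short}. The only detail the paper makes explicit that you omit is how the boundary condition $\left\langle \g\tilde{s}_t^*,\n\right\rangle = \fr{\sigma_t^2}{2}\left\langle \g\log q_t,\n\right\rangle$ transforms into the Neumann condition $\left\langle \g s_t^*,\n\right\rangle = 0$, but this is a minor point and your regularity caveat is well placed.
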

See \cref{app:eam_proof} for the proof. This proposition indicates that the we can use the the SDE $dx(t) = \g \tilde{s}_t^*dt+\sigma_t dW_t$ to move samples in time such that the marginals are $q_t$. 

Entropic AM objective aims to recover the unique $\tilde{s}_t^*(x)$, as described by the above proposition. In order to learn the diffusion velocity vector, we define the \emph{variational action} $s_t(x)$, parameterized by a neural network, that approximates $\tilde{s}_t^*(x)$, by minimizing the ``$\textsc{e-action-gap}$'' objective
\small
\begin{align*}%
   \textsc{e-action-gap}(s,\tilde{s}^*) \coloneqq \fr 12\int_0^1 \EE_{q_{t}\left(x\right)}{\norm{\g s_{t}\left(x\right)-\g \tilde{s}_{t}^{*}\left(x\right)}^{2}}dt \,.
\end{align*}
\normalsize
Note that while the \textsc{e-action-gap} is similar to the original \textsc{action-gap} objective, it minimizes the distance to $\tilde{s}_t^*$, which is different than $s_t^*$.  As in AM, this objective is intractable since we do not have access to $\g \tilde{s}_t^*$.   However, we derive a tractable objective in the following proposition.
\begin{restatable}{proposition}{eamloss}\label{prop:eam_loss}
For an arbitrary variational action $s$, the $\textsc{e-action-gap}(s,s^*)$ can be decomposed as the sum of an intractable constant $\K$, and a tractable term $\mathcal{L}_{\text{\emph{eAM}}}\left(s\right)$ which can be minimized:
\begin{align}
    \textsc{e-action-gap}(s,\tilde{s}^*) = \mathcal{L}_{\text{\emph{eAM}}}\left(s\right) + \K_\text{\emph{eAM}}\,, \nonumber 
\end{align}
where $\mathcal{L}_{\text{\emph{eAM}}}(s)$ is the entropic Action Matching objective, which we minimize
\begin{align}
    \mathcal{L}_{\text{\emph{eAM}}}&(s)\coloneqq \EE_{q_{{0}}\left(x\right)}\left[s_{{0}}\left(x\right)\right]-\EE_{q_{{1}}\left(x\right)}\left[s_{{1}}\left(x\right)\right]\\
&+\int_{0}^{1} \EE_{q_{t}\left(x\right)}\left[\fr 12\norm{\g s_{t}\left(x\right)}^{2}+\fr{\pa s_{t}}{\pa t}\left(x\right)+\fr{\sigma_t^2}{2}\Delta s_t \right]dt\, \nonumber
\end{align}
\end{restatable}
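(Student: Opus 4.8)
The plan is to mirror the proof of \cref{prop:am_loss}, introducing the Fokker–Planck equation \cref{eq:fokker_planck_eq} at the single point where the governing PDE enters. First I would expand the square inside the \textsc{e-action-gap}, writing it as the sum of three integrals: $\fr12\int_0^1\EE_{q_t}\norm{\g s_t}^2dt$, the cross term $-\int_0^1\EE_{q_t}\inner{\g s_t}{\g\tilde s_t^*}dt$, and $\fr12\int_0^1\EE_{q_t}\norm{\g\tilde s_t^*}^2dt$. The first integral is precisely the $\fr12\norm{\g s_t}^2$ contribution to $\mathcal{L}_{\text{eAM}}(s)$, and the third is the $s$-independent constant $\K_{\text{eAM}}\coloneqq\fr12\int_0^1\EE_{q_t}\norm{\g\tilde s_t^*}^2dt$. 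All the work therefore lies in rewriting the cross term so that $\tilde s_t^*$ is eliminated.

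For the cross term I would express the expectation as an integral against $q_t$ and integrate by parts in $x$, giving $\int_\X q_t\inner{\g s_t}{\g\tilde s_t^*}dx = -\int_\X s_t\,\diver{q_t\g\tilde s_t^*}\,dx$. Rearranging \cref{eq:fokker_planck_eq} as $\diver{q_t\g\tilde s_t^*} = \fr{\si_t^2}{2}\Delta q_t - \pat q_t$ and substituting splits the cross term into a \emph{drift} piece $\int_\X s_t\,\pat q_t\,dx$ and a genuinely new \emph{diffusion} piece $-\fr{\si_t^2}{2}\int_\X s_t\,\Delta q_t\,dx$.

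The drift piece is handled exactly as in \cref{prop:am_loss}: applying the product rule $\frac{d}{dt}\int_\X s_t q_t\,dx = \int_\X(\pat s_t)q_t\,dx + \int_\X s_t(\pat q_t)\,dx$ together with the fundamental theorem of calculus in $t$ yields $\EE_{q_1}[s_1]-\EE_{q_0}[s_0]-\int_0^1\EE_{q_t}[\pat s_t]\,dt$. The diffusion piece is the new ingredient: integrating by parts twice in $x$ (Green's second identity) transfers the Laplacian onto $s_t$, so $-\fr{\si_t^2}{2}\int_\X s_t\,\Delta q_t\,dx = -\fr{\si_t^2}{2}\EE_{q_t}[\Delta s_t]$. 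Collecting everything and accounting for the leading minus sign on the cross term reproduces $\mathcal{L}_{\text{eAM}}(s)+\K_{\text{eAM}}$, with the extra $+\fr{\si_t^2}{2}\Delta s_t$ appearing inside the time integral exactly as stated.

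The main obstacle is rigor rather than calculation: justifying that the boundary terms vanish in both the single integration by parts (for the drift) and the double integration by parts (for the diffusion term). This requires sufficient decay of $q_t$, $s_t$, and $\g\tilde s_t^*$ at the boundary of $\X$ (e.g.\ compact support, rapid decay on $\X=\RR^d$, or periodicity), integrability of $q_t\g\tilde s_t^*$ and $q_t\Del s_t$, and enough regularity to apply Fubini when swapping the $x$- and $t$-integrals and to differentiate $\int_\X s_t q_t\,dx$ under the integral sign. Under the same mild regularity assumptions that underlie \cref{th:ac_cont_short} and \cref{prop:entropic_am}, these conditions are met and the formal manipulations above constitute a valid proof.
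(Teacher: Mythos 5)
Your proposal is correct and follows essentially the same route as the paper's proof: expand the square, peel off $\K_{\text{eAM}}=\fr12\int_0^1\EE_{q_t}\norm{\g\tilde s_t^*}^2dt$, integrate the cross term by parts in $x$, substitute the Fokker--Planck equation, and handle the drift and diffusion pieces by integration by parts in $t$ and Green's second identity respectively. The only point where the paper is more careful is at the boundary: it uses the non-homogeneous Neumann condition $\left\langle\g\tilde s_t^*,\n\right\rangle=\fr{\si_t^2}{2}\left\langle\g\log q_t,\n\right\rangle$ from the existence proof, so the boundary terms produced by the two integrations by parts partially cancel and the residual $\fr{\si_t^2}{2}\int_0^1\oint_{\pa\X}q_t\left\langle\g s_t,d\n\right\rangle dt$ is dropped only under the assumption that $q_t$ vanishes on $\pa\X$ --- a condition subsumed by the decay assumptions you explicitly flag.
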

See \cref{app:eam_proof} for the proof. 
The constant $\K_\text{eAM}$ is the entropic kinetic energy, discussed in \cref{app:eam_proof}.

\paragraph{Connection with Entropic Optimal Transport}
In \cref{app:infinitesimal_eam}, we describe connections between the eAM objective and dynamical formulations of  entropy-regularized optimal transport \citep{cuturi2013sinkhorn} or Schrödinger Bridge \citep{leonard2014survey, chen2016relation, chen2021stochastic} problems. 

\vheader
\subsection{Unbalanced Action Matching}\label{sec:unbalanced_am}

In this section, we further extend the scope of underlying dynamics which can be learned by Action Matching by allowing for the creation and destruction of probability mass via a growth rate $g_t(x)$.   
This term is useful to account for cell growth and death in trajectory inference for single-cell biological dynamics \citep{schiebinger2019optimal,  tong2020trajectorynet, baradat2021regularized, lubeck2022neural, chizat2022trajectory}, and is well-studied in relation to \textit{unbalanced} optimal transport problems \citep{chizat2018interpolating, chizat2018scaling,  chizat2018unbalanced, liero2016optimal,liero2018optimal, kondratyev2016new}.

To introduce \textit{unbalanced} Action Matching (uAM), consider the following ODE, which attaches importance weights to each sample and updates the weights according to a growth rate $g_t(x)$ while transporting the samples in space, 
\begin{align}\label{eq:uam_ode}
    \frac{d}{dt}  x(t) &= v_t(x(t))\,, \quad x(t=0) = x\,.\\
    \frac{d}{dt}  \log w_t(x(t)) &= g_t(x(t))\,, \quad w(t=0) = w\,. \label{eq:iw_ode}
\end{align}
where $v_t$ is the vector field moving particles, similar to continuity equation, $w_t(x)$ is the importance weight of particles, and $g_t(x(t))$ the growth rate of particles.   These importance weights can grow or shrink over time, allowing the particles to be destroyed or create mass probability without needing to transport the particles. The evolution of density governing the importance weighted ODE is given by the following %
continuity equation:
\begin{align}
\pat q_t = - \diver{q_t v_t} + q_t g_t \,. \label{eq:growth_continuity}
\end{align}
In the following proposition, we extend \cref{th:ac_cont_short} to show that any distributional path (under mild conditions), regardless of how it was generated in the state-space, can be modeled with the importance weighted ODE.

\begin{restatable}{proposition}{uamexist}\label{prop:uam_exist}
    Consider a continuous dynamic with density evolution $q_t$ satisfying mild conditions.
    Then, there exists a unique function $\hat{s}_t^*(x)$, called the  ``unbalanced action'', such that velocity field  $v_t^*(x)=\g \hat{s}_t^*(x)$ and growth term $g_t^*(x)= \hat{s}_t^*(x)$ satisfy the importance weighted continuity equation:
    \begin{align}
        \pat q_t = - \diver{q_t \nabla \hat{s}_t^*} + q_t \hat{s}_t^*\,, 
        \label{eq:unbalanced_cont_eq}
    \end{align}
\end{restatable}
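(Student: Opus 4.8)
The plan is to treat \eqref{eq:unbalanced_cont_eq} as an elliptic problem at each fixed time $t$, exploiting the fact that a single scalar potential $\hat{s}_t^*$ simultaneously controls the drift $v_t^* = \g\hat{s}_t^*$ and the growth $g_t^* = \hat{s}_t^*$. Expanding the divergence, \eqref{eq:unbalanced_cont_eq} reads
\[
  \pat q_t = -\g q_t \cdot \g \hat{s}_t^* - q_t \Delta \hat{s}_t^* + q_t \hat{s}_t^*,
\]
so for each $t$ we must invert the operator $L_t s := -\diver{q_t \g s} + q_t s$ against the known source $f_t := \pat q_t$. I would recognize $L_t$ as a self-adjoint, Schr\"odinger-type elliptic operator with non-negative potential $q_t$, and attack existence and uniqueness variationally rather than by explicit construction.

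First I would pose the weak formulation on the weighted Sobolev space $H_t := \{ s : \int_\X q_t(\norm{\g s}^2 + s^2)\,dx < \infty\}$ with the symmetric bilinear form
\[
  a_t(s,\phi) = \int_\X q_t\, \g s \cdot \g \phi \, dx + \int_\X q_t\, s\, \phi \, dx,
\]
seeking $\hat{s}_t^*$ with $a_t(\hat{s}_t^*, \phi) = \langle f_t, \phi\rangle$ for all test functions $\phi$. Coercivity is immediate, since $a_t(s,s) = \norm{s}_{H_t}^2$ is exactly the squared weighted-$H^1$ norm, and boundedness is Cauchy--Schwarz. Crucially, and in contrast to \cref{th:ac_cont_short}, the zeroth-order term $q_t s$ removes the constant-shift degeneracy: if $a_t(s,s) = 0$ then $\int_\X q_t s^2\, dx = 0$, forcing $s = 0$ on $\operatorname{supp} q_t$. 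This is precisely why the proposition asserts a \emph{unique} action, rather than one unique only up to an additive constant as in \cref{th:ac_cont_short}. Lax--Milgram (or Riesz representation, by symmetry) then delivers existence and uniqueness of the weak solution, and standard elliptic regularity upgrades $\hat{s}_t^*$ to a classical solution whenever $q_t$ and $f_t$ are sufficiently smooth.

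The main obstacle I anticipate is verifying that $f_t = \pat q_t$ defines a bounded linear functional on $H_t$, which is where the ``mild conditions'' enter. Following the absolute-continuity structure of \cref{th:ac_cont_short} (Theorem 8.3.1 of \citet{ambrosio2008gradient}), the curve $q_t$ admits a velocity (and, in the unbalanced setting, a growth) field in $L^2(q_t)$ controlling $\pat q_t$; pairing against $\phi$ and integrating by parts yields $\langle f_t, \phi\rangle \le C_t \norm{\phi}_{H_t}$, so the functional is well-defined and bounded, with $C_t$ the metric speed of the curve. A secondary technical point is the degeneracy of the weight $q_t$ where the density vanishes: I would either assume $q_t > 0$ on $\X$ or restrict all statements to $\operatorname{supp} q_t$, on which the weighted norm is equivalent to a standard one. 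Finally, assembling the per-time solutions into a measurable family $t \mapsto \hat{s}_t^*$ completes the argument, yielding a unique unbalanced action that generates $q_t$ via \eqref{eq:uam_ode}--\eqref{eq:iw_ode}.
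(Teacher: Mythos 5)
Your proposal is correct and takes essentially the same route as the paper: the paper's proof simply observes that the equation is an elliptic PDE with a Neumann boundary condition and cites classical existence/uniqueness theory, which is exactly the variational Lax--Milgram argument you carry out in detail. Your additional observation that the zeroth-order term $q_t s$ eliminates the constant-shift degeneracy (so uniqueness here is exact rather than up to a constant, in contrast to \cref{th:ac_cont_short}) is a correct and worthwhile clarification that the paper leaves implicit.
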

See \cref{app:uam_proof} for the proof. This proposition indicates that we can use the importance weighted ODE
\begin{align}\label{eq:uam_ode_solution}
    \frac{d}{dt}  x(t) &= \g \hat{s}_t^*(x(t))\,, \quad x(t=0) = x\,,\\
    \frac{d}{dt}  \log w_t(x(t)) &= \hat{s}_t^*(x(t))\,, \quad w(t=0) = w\,, \label{eq:uam_ode_solution_weights}
\end{align}
to move the particles and update their weights in time, such that the marginals are $q_t$.

Remarkably, the optimal velocity vector field $v_t^* = \g \hat{s}_t^*$ and growth rate $g_t^* = \hat{s}_t^*$ in \cref{prop:uam_exist} are linked to a single action function $\hat{s}_t^*(x)$.   Thus, for learning the variational action $s_t(x)$ in unbalanced AM, we add a term to the ``$\textsc{unbalanced-action-gap}$'' objective which encourages $s_t$ to match $\hat{s}_t^*$,
\small
\begin{align}\label{eq:uam_action_gap}
   \textsc{u-action-gap}(s,\hat{s}^*) \coloneqq \fr 12\int_0^1 \EE_{q_{t}\left(x\right)}{\norm{\g s_{t}\left(x\right)-\g \hat{s}_{t}^{*}\left(x\right)}^{2}}dt  \nonumber \\
   + \fr 12\int_0^1 \EE_{q_{t}\left(x\right)}{\norm{s_{t}\left(x\right)- \hat{s}_{t}^{*}\left(x\right)}^{2}}dt \nonumber  \,.
\end{align}
\normalsize
As before, $\textsc{u-action-gap}(s,\hat{s}^*)$ objective is intractable since we do not have access to $\hat{s}_t^*$. However, as the following proposition shows, 
we can still derive a tractable objective.
\begin{restatable}{proposition}{uamloss}\label{prop:uam_loss}
For an arbitrary variational action $s$, the $\textsc{u-action-gap}(s,\hat{s}^*)$ can be decomposed as the sum of intractable constants 
$\K$ and $\G$, 
and a tractable term $\mathcal{L}_{\text{\emph{uAM}}}\left(s\right)$
\begin{align}
    \hspace*{-.2cm}\textsc{u-}\textsc{action-gap}(s,\hat{s}^*) =  \K_\text{\emph{uAM}}
    +\G_\text{\emph{uAM}}
    + \mathcal{L}_{\text{\emph{uAM}}}\left(s\right) \nonumber
\end{align}
where $\mathcal{L}_{\text{\emph{uAM}}}(s)$ is the unbalanced Action Matching objective, which we minimize
\begin{align}
\mathcal{L}_{\text{\emph{uAM}}}(s)&\coloneqq \EE_{q_{{0}}\left(x\right)}\left[s_{{0}}\left(x\right)\right]-\EE_{q_{{1}}\left(x\right)}\left[s_{{1}}\left(x\right)\right]\\
&+\int_0^1 \EE_{q_{t}\left(x\right)}\left[\fr 12\norm{\g s_{t}\left(x\right)}^{2}+\fr{\pa s_{t}}{\pa t}\left(x\right)+\fr{1}{2} s_t^2 \right]dt\,. \nonumber
\end{align}
\end{restatable}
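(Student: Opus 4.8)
The plan is to mirror the decomposition used for \cref{prop:am_loss}, now tracking the two squared-distance terms in \textsc{u-action-gap} simultaneously. First I would expand both integrals via $\fr12\norm{a-b}^2 = \fr12\norm{a}^2 - \inner{a}{b} + \fr12\norm{b}^2$. The first integral contributes the tractable piece $\fr12\int_0^1 \EE_{q_t(x)}\norm{\g s_t}^2 dt$, the cross term $-\int_0^1\EE_{q_t(x)}\inner{\g s_t}{\g \hat s_t^*}dt$, and the $s$-independent constant $\K_{\text{uAM}} \coloneqq \fr12\int_0^1 \EE_{q_t(x)}\norm{\g \hat s_t^*}^2 dt$ (the unbalanced kinetic energy). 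The second integral splits analogously into the tractable $\fr12\int_0^1\EE_{q_t(x)}[s_t^2]dt$, the cross term $-\int_0^1 \EE_{q_t(x)}[s_t \hat s_t^*]dt$, and the constant $\G_{\text{uAM}} \coloneqq \fr12\int_0^1 \EE_{q_t(x)}[(\hat s_t^*)^2]dt$ (the growth energy). After collecting the tractable and constant pieces, the whole claim reduces to a single identity for the two cross terms.

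The key step is to establish
\[
-\int_0^1 \EE_{q_t(x)}\inner{\g s_t}{\g \hat s_t^*}dt - \int_0^1 \EE_{q_t(x)}[s_t \hat s_t^*]dt = \EE_{q_0(x)}[s_0] - \EE_{q_1(x)}[s_1] + \int_0^1 \EE_{q_t(x)}\left[\fr{\pa s_t}{\pa t}\right]dt.
\]
I would obtain this by differentiating $\EE_{q_t}[s_t] = \int s_t q_t\, dx$ in $t$ and substituting the unbalanced continuity equation \cref{eq:unbalanced_cont_eq}, $\pat q_t = -\diver{q_t \g \hat s_t^*} + q_t \hat s_t^*$, which yields
\[
\frac{d}{dt}\EE_{q_t}[s_t] = \EE_{q_t}\left[\fr{\pa s_t}{\pa t}\right] - \int s_t\,\diver{q_t \g \hat s_t^*}\,dx + \int s_t q_t \hat s_t^*\, dx.
\]
Integrating the divergence term by parts turns it into $\EE_{q_t}\inner{\g s_t}{\g \hat s_t^*}$, while the source term is already $\EE_{q_t}[s_t \hat s_t^*]$. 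Integrating over $t\in[0,1]$ and applying the fundamental theorem of calculus on the left produces the desired identity after rearrangement. Crucially, the extra source term $q_t \hat s_t^*$ in the unbalanced continuity equation is exactly what supplies the $s_t \hat s_t^*$ cross term, so tying $g_t^* = \hat s_t^*$ to the single action $\hat s_t^*$ is what makes both cross terms collapse into the same boundary-plus-time-derivative expression. Substituting back, the tractable remainder is precisely $\mathcal{L}_{\text{uAM}}(s)$, completing the decomposition.

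The main obstacle I expect is rigorously justifying the integration by parts, i.e.\ that the boundary flux $\int_{\partial \X} s_t\, q_t\, \g \hat s_t^* \cd \n\, dS$ vanishes; this is where the ``mild conditions'' of \cref{prop:uam_exist} enter, requiring $q_t$ to decay sufficiently at the boundary of $\X$ (or that $\X$ have no boundary). I would also need enough regularity in $t$ to exchange $\frac{d}{dt}$ with the spatial integral and to invoke the fundamental theorem of calculus, which again follows from the absolute-continuity assumption on the curve $q_t$. Apart from these analytic technicalities, the argument is a direct algebraic rearrangement built on the same product-rule and integration-by-parts identity used to prove \cref{prop:am_loss}, with the only genuinely new ingredient being the bookkeeping of the second squared term and its matching source contribution.
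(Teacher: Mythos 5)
Your proposal is correct and follows essentially the same route as the paper's proof: expand both squared terms, peel off the constants $\K_{\text{uAM}}$ and $\G_{\text{uAM}}$, and collapse the two cross terms via the unbalanced continuity equation together with integration by parts in space and time (your presentation via $\frac{d}{dt}\EE_{q_t}[s_t]$ is just the same computation organized in the reverse order). The only small difference is that the paper kills the boundary flux using the Neumann condition $\left\langle \g \hat{s}_t^*, \n\right\rangle = 0$ built into the existence result of \cref{prop:uam_exist}, rather than decay of $q_t$, but either suffices.
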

See \cref{app:uam_proof} for the proof. The constants $\K_{\text{uAM}}$ and $\G_{\text{uAM}}$ are the unbalanced kinetic and growth energy, defined in \cref{app:uam_proof} and \ref{app:infinitesimal_uam}.
We note that the entropic and unbalanced extensions of Action Matching can also be combined, as is common in biological applications \citep{schiebinger2019optimal, chizat2022trajectory}.
To showcase how uAM can handle creation and destruction of mass, without transporting particles, we provide a mixture of Gaussians example in \cref{app:uam}.

\paragraph{Connection with Unbalanced Optimal Transport}
In \cref{app:infinitesimal_uam}, we show that at any given time $t$, the optimal dynamics of uAM along the curve is optimal in the sense of the unbalanced optimal transport \citep{chizat2018interpolating, liero2016optimal, kondratyev2016new} between two infinitesimally close distributions $q_t$ and $q_{t+h}$.

\vheader
\subsection{Action Matching with Convex Costs} \label{sec:ccosts}
In \cref{app:lagr_am}, we further extend AM to minimize kinetic energies defined using an arbitrary strictly convex cost $c(v_t)$ (\citet[Ch. 7]{villani2009optimal}).   For a given path $q_t$, consider 
\small
\begin{align*}
\mathcal{K}_{c\text{AM}} \coloneqq &\inf \limits_{v_t} \int_{0}^{1} \EE_{q_{t}(x)}[c(v_t)] dt
 \quad \text{s.t.} \quad \fr{\pa}{\pa t}q_{t}=-\g\cd\left(q_{t}v_{t}\right).
\nonumber
\end{align*}
\normalsize
In this case, the unique vector field tracing the density evolution of $q_t$ becomes $v_t^*=  \nabla c^*(\nabla \bar{s}^*_t)$, where $c^*$ is the convex conjugate of the $c$.   The corresponding action gap becomes an integral of the Bregman divergence generated by $c^*$,
\begin{align}
\textsc{action-gap}_{c^*}(s_t, \bar{s}_t^*) \coloneqq  \int \limits_0^1  \mathbb{E}_{q_t(x)} \big[ D_{c^*}[ \nabla s_t : \nabla \bar{s}_t^*] \big] dt . \nonumber
\end{align}
In practice, we can minimize $\textsc{action-gap}_{c^*}$ using the following
$c$-Action Matching loss:
\small
\begin{align}
\mathcal{L}_{\text{cAM}}(s_t) \coloneqq \int s_0 (x_0) ~q_0(x_0)dx_0 - \int s_1 (x_1) ~q_1(x_1)dx_1 \nonumber \\ 
   + \int \limits_0^1 \int 
      \left[
      {  c^*(\nabla s_t(x_t))  } + \frac{\partial s_t(x_t)}{\partial t} \right]  ~q_t(x_t)dx_t dt \,.  \nonumber 
\end{align}
\normalsize
For $c(\cdot) = c^*(\cdot) = \frac{1}{2}\| \cdot \|^2$, we recover standard AM. %
Importantly, the continuity equation for this formulation is
\begin{align}
    \pat q_t = - \diver{q_t \g c^*(\g \bar{s}_t^*)}\,.
\end{align}
Thus, the sampling is done by integrating the following vector field
\begin{align}
    \frac{d}{dt} x(t) = \g c^*(\g \bar{s}_t^*, t), \;\;\; x(t=0)=x\,.
\end{align}

\vheader
\vheader
\section{Applications of Action Matching 
\protect\footnote{The code is available at \href{https://github.com/necludov/jam}{github.com/necludov/jam}}}
\label{sec:applications}
In this section, we discuss and empirically study applications of Action Matching.
We first consider the scenario when the samples from the dynamics $q_t$ are given as a dataset, which is the case for applications in biology and physics.
Furthermore, we demonstrate applications of Action Matching in generative modeling, 
where we would like to learn a tractable model of a target distribution represented by a dataset of samples $x_1 \sim q_1$ without given intermediate samples or densities.
\vheader
\subsection{Population Dynamics in Biology}\label{sec:bio}

\begin{table*}[t]
\vspace*{-.1cm}
\centering
\resizebox{0.8\textwidth}{!}{
\begin{tabular}{ccccccc}
\toprule
Distance$\downarrow$ & OT-flow & Trajectory-Net & IPF & Neural SDE & NLSB & eAM (ours)\\
\midrule
$W_2(q_{t_1}, \hat{q}_{t_1})$ & $0.75$ & $0.64$ & $0.65 \pm 0.016$ & $0.62 \pm 0.016$ & $0.63 \pm 0.015$ & $\mathbf{0.58 \pm 0.015}$ \\
$W_2(q_{t_2}, \hat{q}_{t_2})$ & $0.93$ & $0.87$ & $0.78 \pm 0.021$ & $0.78 \pm 0.021$ & $\mathbf{0.75 \pm 0.017}$ & $0.77 \pm 0.016$\\
$W_2(q_{t_3}, \hat{q}_{t_3})$ & $0.93$ & $0.78$ & $0.76 \pm 0.018$ & $0.77 \pm 0.017$ & $0.75 \pm 0.014$ & $\mathbf{0.72 \pm 0.007}$\\
$W_2(q_{t_4}, \hat{q}_{t_4})$ & $0.88$ & $0.89$ & $0.76 \pm 0.017$ & $0.75 \pm 0.017$ & $\mathbf{0.72 \pm 0.010}$ & $0.74 \pm 0.017$\\
\bottomrule
\end{tabular} }
\vspace{-.2cm}
\caption{Performance for embryoid body scRNA-seq data as measured by the $W_2$ distance computed between test data marginals and predicted marginals from the previous test data marginals. Numbers for concurrent methods are taken from \citep{koshizuka2022neuralsb}.}
\label{tab:scRNA}
\vspace*{-.2cm}
\end{table*}

Action Matching is a natural approach to population dynamics inference in biology.
Namely, given the marginal distribution of cells at several timesteps one wants to learn the model of the dynamics of cells \citep{tong2020trajectorynet, bunne2021learning, huguet2022manifold, koshizuka2022neuralsb}.
It is crucial that Action Matching does not rely on the trajectories of cells since the measurement process destroys cells, thus the individual trajectories are unavailable.
We use entropic Action Matching for this task since the ground truth processes are guided by the Brownian motion. 
Moreover, if the diffusion coefficient is known from the experiment conditions and the drift term is a gradient flow, entropic Action Matching recovers the ground truth drift term.
\paragraph{Synthetic Data} We consider synthetic data from \citep{huguet2022manifold}, which simulates natural dynamics arising in cellular differentiation, including branching and merging of cells.
We generate three datasets with $5,10,15$ time steps and learn entropic Action Matching with a fixed diffusion coefficient.
In \cref{fig:petal_results}, we compare our method with MIOFlow \citep{huguet2022manifold} by measuring the distance between the empirical marginals from the dataset and the generated distributions. 
We see that Action Matching outperforms MIOFlow both in terms of Wasserstein-2 distance and Maximum Mean Discrepancy (MMD) \citep{gretton2012kernel}.
Moreover, unlike MIOFlow, Action Matching performance does not degrade with more timesteps or finer granulation of the dynamics in time.
In \cref{fig:petal_trajectories}, we demonstrate that the learned trajectories for Action Matching stay closer to the data marginals compared to the trajectories of MIOFlow.
\vspace{-10pt}
\paragraph{Embryoid scRNA-Seq Data}
For a real data example, we consider a embryoid body single-cell RNA sequencing dataset from \citet{moon2019visualizing}.
We follow the experimental setup from NLSB \citep{koshizuka2022neuralsb} and learn entropic Action Matching where the  sample space is a $5$-dimensional PCA decomposition of the original data.
For Action Matching, we interpolate the data using mixtures between given time steps to obtain data which is more dense in time.
In \cref{tab:scRNA}, we compare our method with OT-flow \citep{onken2021ot}, Trajectory-NET \citep{tong2020trajectorynet}, Iterative Proportional Fitting (IPF) \citep{de2021diffusion}, Neural SDE \citep{li2020scalable}, and Neural Lagrangian Schrodinger Bridge (NLSB) \citep{koshizuka2022neuralsb}. We find that eAM performs on par with NLSB and outperforms other methods. 

\begin{figure}[t]
    \centering
    \includegraphics[width=0.49\textwidth]{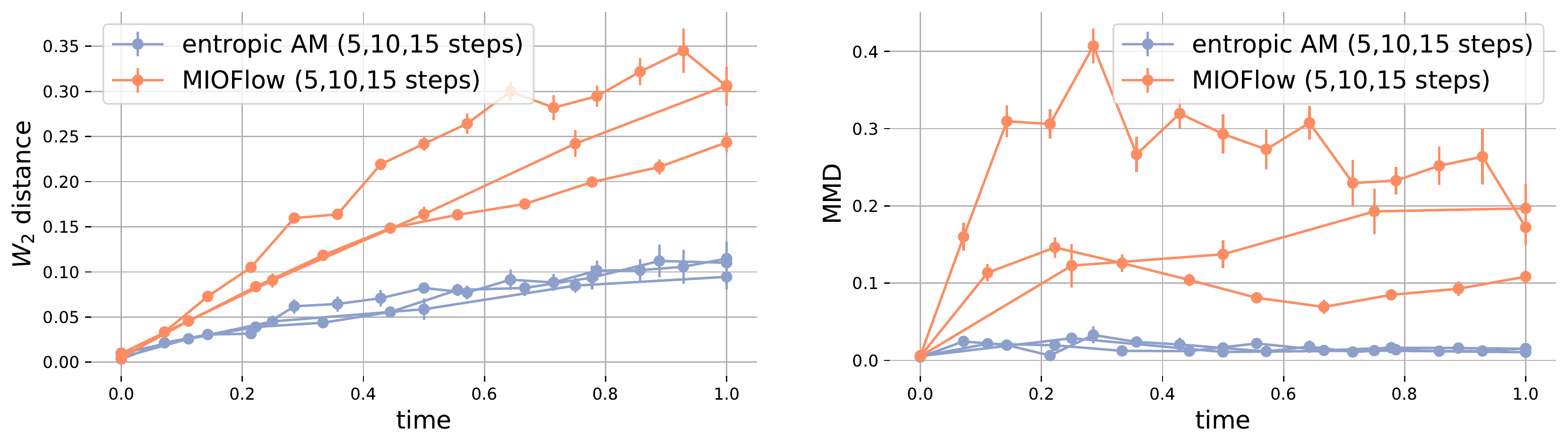}
    \vskip -10pt
    \caption{Performance of entropic Action Matching and MIOFlow on the synthetic data. We simulate the dynamics starting from the initial data distribution and estimate Wasserstein-2 distance and MMD between generated distributions and dataset marginals.}
    \label{fig:petal_results}
\end{figure}%
\begin{figure}[t]
    \centering
    \includegraphics[width=0.49\textwidth]{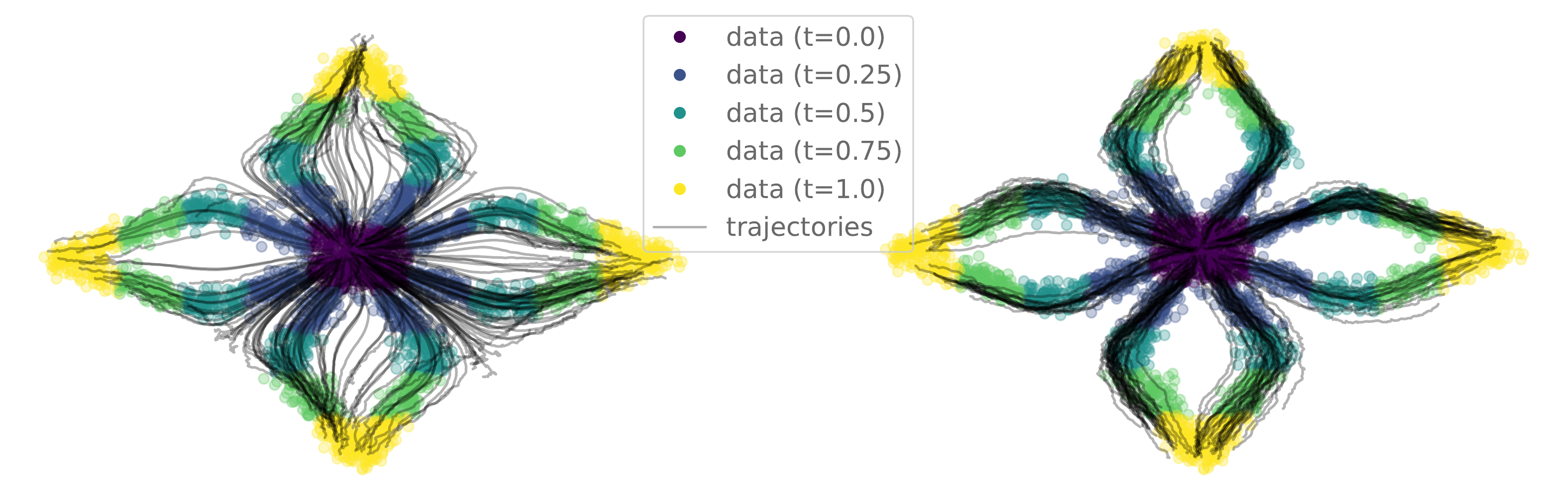}
    \vskip -15pt
    \caption{Generated trajectories by MIOFlow (left) and entropic Action Matching (right). The training data is scattered.}
    \label{fig:petal_trajectories}
    \vspace*{-.2cm}
\end{figure}

\vheader
\subsection{Quantum System Simulation}\label{sec:schrodinger}
In this section, we apply Action Matching to learn the dynamics of a quantum system evolving according to the Schrödinger equation.
The Schrödinger equation describes the evolution of many quantum systems, and in particular, it describes the physics of molecular systems.
Here, for the ground truth dynamics, we take the dynamics of an excited state of the hydrogen atom, which is described by the following equation
\begin{align}\label{eq:schroedinger_hydrogen}
    i\deriv{}{t}\psi(x,t) = -\frac{1}{\norm{x}}\psi(x,t) -\frac{1}{2}\nabla^2\psi(x,t).
\end{align}
The function $\psi(x,t): \mathbb{R}^3\times \mathbb{R} \to \mathbb{C}$ is called a wavefunction and completely describes the state of the quantum system.
In particular, it defines the distribution of the coordinates $x$ by 
the density
$q_t(x) := |\psi(x,t)|^2$, which is defined through the evolution of $\psi(x,t)$ in \cref{eq:schroedinger_hydrogen}.

For the baseline, we take Annealed Langevin Dynamics (ALD) as considered in \citep{song2019generative}.
It approximates the ground truth dynamics using only scores of the distributions by running the approximate MCMC method (which does not have access to the densities) targeting the intermediate distributions of the dynamics (see \cref{alg:ALD}).
For the estimation of scores, we consider Score Matching (SM) \citep{hyvarinen2005estimation}, Sliced Score Matching (SSM) \citep{song2020sliced}, and additionally evaluate the baseline using the ground truth scores.
For further details, we refer the reader to \cref{app:schrodinger} and the supplemented code.

Action Matching outperforms both Score Matching and Sliced Score Matching, precisely simulating the true dynamics (see \cref{tab:schrodinger_results}).
Despite that both SM and SSM accurately recover the ground truth scores for the marginal distributions (see the right plot in \cref{fig:mmd_se} of \cref{app:schrodinger}), one cannot efficiently use them for the sampling from the ground truth dynamics.
Note, that even using the ground truth scores in ALD does not match the performance of Action Matching (see \cref{tab:schrodinger_results}) since it is itself an approximation to the Metropolis-Adjusted Langevin Algorithm.
Finally, we provide animations of the learned dynamics for different methods (see \href{https://github.com/necludov/action-matching}{github.com/necludov/action-matching}) to illustrate the performance difference.

\begin{table}
    \centering
    \resizebox{0.7\columnwidth}{!}{
    \begin{tabular}{lc}
    \toprule
    Method & Average MMD$\downarrow$ \\
    \midrule
    AM (ours) & $\mathbf{5.7\cdot 10^{-4}\pm 3.1\cdot 10^{-4}}$ \\ 
    ALD + SM & $4.8\cdot 10^{-2}\pm 4.8\cdot 10^{-3}$ \\
    ALD + Sliced SM & $4.7\cdot 10^{-2}\pm 4.0\cdot 10^{-3}$ \\
    ALD + True Scores & $3.6\cdot 10^{-2}\pm 4.1\cdot 10^{-4}$\\
    \bottomrule
    \end{tabular}}
    \caption{
    Performance of Action Matching and ALD for the Schrödinger equation simulation. We report average MMD over time between the data and the generated samples. For ALD, we use different estimations of scores: Score Matching (SM), Sliced SM, ground truth scores.}
    \label{tab:schrodinger_results}
    \vspace*{-.5cm}
\end{table}

\vheader
\subsection{Generative Modeling}\label{sec:gen_modeling}
In addition to learning
stochastic dynamics in natural sciences, Action Matching has a wide range of applications in generative modeling. 
The key feature of Action Matching for generative modeling is the flexibility to choose \textit{any} dynamics $q_t$  that starts from the prior distribution $q_0$ and arrives to the given data $q_1$.
After using Action Matching to learn a vector field $\nabla s^\star$ which simulates the appropriate dynamics, 
we can sample and evaluate likelihoods using \cref{eq:sampling} and \cref{eq:lkd_eval}.
We consider a flexible approach for specifying dynamics through interpolants in the sample-space in \cref{eq:interpolants}, and instantiate it for specific problem settings in the following paragraphs.   
We defer analysis of experimental results in each setting to \cref{sec:experiments}.

There are many degrees of freedom in the definition of the density path between a given $q_0$ and $q_1 = \text{dataset}$. 
For instance, we consider interpolating between samples $x_0 \sim q_0$ and $x_1 \sim q_1$ from the prior and data distributions using 
\begin{align}
    x_t = \alpha_t(x_0) + \beta_t(x_1), \;\; x_0 \sim q_0(x), \;\; x_1 \sim q_1(x)\,, \label{eq:interpolants}
\end{align}
where $\alpha_t, \beta_t$ are some continuous transformations of the prior and the data correspondingly.
To respect the endpoint marginals $q_0, q_1$, we select $\alpha_t$ and $\beta_t$ such that $\alpha_0(x_0) = x_0, \beta_0(x_1) = 0$ and $\alpha_1(x_0) = 0, \beta_1(x_1) = x_1$.
The path in density space $q_t$ is implicitly defined via these samples.

\paragraph{Unconditional Generation}
A simple choice of $\alpha_t, \beta_t$ is the linear interpolation between a simple prior and the data, 
\begin{align*}
    x_t = (1-t)\cdot x_0 + t\cdot x_1, \;\; x_0 \sim \Normal(0,1), \;\; x_1 \sim \text{dataset}\,,
\end{align*}
We demonstrate below that these dynamics can be learned with Action Matching, which allows us to perform unconditional image generation and log-likelihood evaluation.

\paragraph{Conditional Generation}
We can also consider 
conditional generation tasks by simulating the dynamics only for the subset of dimensions.
For instance, we can lower the resolution of images in the dataset and learn the dynamics that performs super-resolution using
\begin{align*}
    x_t = \text{mask}\cdot x_1 + (1-\text{mask})\cdot ((1-t)\cdot x_0 + t\cdot x_1),\\
    x_0 \sim \Normal(0,1), \;\; x_1 \sim \text{dataset}\,,
\end{align*}
where \texttt{mask} is a binary-valued vector that defines the subset of dimensions to condition on, e.g. which pixels we generate.
Note, we still can learn the dynamics for all the dimensions of vector $x_t$.
For super-resolution, we keep one pixel in each $2\times2$ block, while the remaining pixels are gradually transformed to the standard normal random variable.

\vspace{-.2cm}
\paragraph{Conditional Coupled Generation}
In some applications, the conditions and the predicted variables might be coupled, or share common dimensions.
We still can perform conditional generation by considering the marginal distribution of conditions as a prior.
To showcase this scenario, we consider image colorization, for which we define the following dynamics
\begin{align*}
    x_t = (1-t) \cdot (10^{-1}\cdot x_0 + \text{gray}(x_1)) + t \cdot x_1,\\
    x_0 \sim \Normal(0,1), \;\; x_1 \sim \text{dataset}\,,
\end{align*}
where the function $\text{gray}(x_1)$ returns the grayscale version of image $x_1$.
We found that minor distortion of the inputs with Normal noise stabilizes the training significantly,
while barely destroying 
the visual information of the image (see \cref{fig:evolutions}).
We also can preserve this information by passing the non-distorted grayscale image to the model in addition to the original input $(x,t)$.

\subsubsection{Empirical Study for Generative Modeling}\label{sec:experiments}
We demonstrate the performance of Action Matching for the aforementioned generative modeling tasks.
For evaluation, we choose the CIFAR-10 dataset of natural images.
Despite the fact that we do not consider image generation to be the main application of Action Matching, we argue that it demonstrates the scalability and applicability of the proposed method for the following reasons: (i) the data is high-dimensional and challenging; (ii) the quality of generated samples is easy to assess; (iii) we use known deep learning architectures, hence, remove this component from the scope of our study.  %

\begin{table}
    \centering
    \resizebox{0.48\textwidth}{!}{
    \begin{tabular}{lcccc}
    \toprule
    Models generating from a Gaussian & BPD$\downarrow$  & FID$\downarrow$ & IS$\uparrow$ & NFE$\downarrow$\\
    \midrule
    VP-SDE \text{\footnotesize (uses extra information) }
    & $3.25$ & $3.71$ & $9.12$ & 199\\
    Flow Matching \text{\footnotesize (uses extra information) }
    & $2.99$ & $6.35$ & $-$ & 142  \\
    Baseline (ALD + SSM) & $-$ & $86.29$ & $5.43$ & 1090\\
    AM - generation (ours) & $3.54$ & $10.04$ & $8.60$ & 132\\
    \midrule
    Models for conditional generation &&&&\\
    \midrule
    Baseline (ALD + SSM) - superres / color & $-$ & n/a & n/a & n/a\\
    AM - superres (ours) & $-$ & $1.44$ & $10.93$ & 166\\
    AM - color (ours) & $-$ & $2.47$ & $9.88$ & 89\\
    \bottomrule
    \end{tabular}
    }
    \vskip -5pt
    \caption{
    Model performance for CIFAR-10. When possible, we report log-likelihood in bits per dimension (BPD). For all tasks, we report FID and IS evaluated for $50$k generated images.}
    \label{tab:generation_results}
    \vspace*{-0.4cm}
\end{table}

\begin{figure*}
    \centering
    \includegraphics[width=0.88\textwidth]{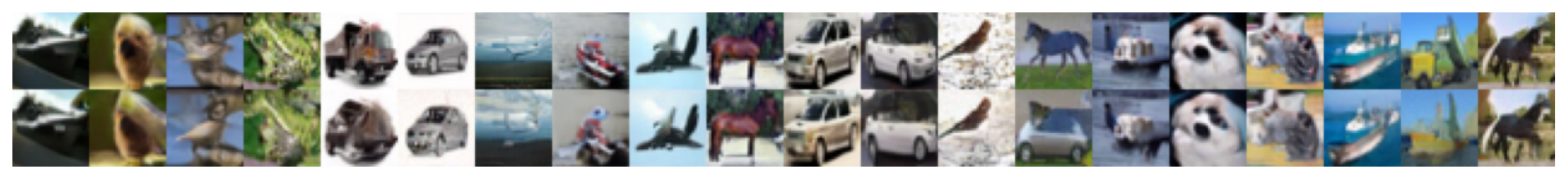}
    \vskip -10pt
    \caption{Generated samples for CIFAR-10 using VP-SDE model (top row) and Action Matching (bottom row).}
    \vskip -10pt
    \label{fig:generation_comparison}
\end{figure*}

For the sake of proper comparison, we consider a baseline model using Annealed Langevin Dynamics (ALD) \citep{song2019generative} and Sliced Score Matching (SSM) \citep{song2020sliced}, which estimates the scores of the marginals $\nabla \log q_t$ \textit{without knowledge} of the underlying dynamics.
As stronger baselines which rely on a known form for the dynamics, we consider the variance-preserving diffusion model (VP-SDE) from \citet{song2020score} \hl{and Flow Matching from \citep{lipman2022flow}.}
We train all models using the same architecture for $500$k iterations and evaluate the negative log-likelihood, FID \citep{heusel2017gans} and IS \citep{salimans2016improved}. \hl{For Flow Matching, we report the numbers from \citep{lipman2022flow}.}

In \cref{tab:generation_results}, we compare the performance of all models, finding that
Action Matching performs much closer to VP-SDE \hl{and Flow Matching (which rely} 
on the knowledge of the dynamics) than to the baseline, \hl{with fewer function evaluations (NFE)}.
Moreover, for the conditional generation, the ALD+SSM baseline fails to learn any meaningful scores.
We argue that this is due to the dynamics starting from a complicated prior distribution, whose scores are difficult to learn without the annealing process to the noise distribution \citep{song2019generative}.
Although VP-SDE achieves better FID and IS scores than AM in \cref{tab:generation_results},  in \cref{fig:generation_comparison}, we find only minor qualitative differences in comparing images generated from the corresponding ODEs starting from the same noise values.
We provide all generations in \cref{app:images} and discuss implementation details in \cref{app:generative_modeling} and \ref{app:implementation}.

\vheader
\section{Related Works}\label{sec:discussion}

A large body of previous work on continuous normalizing flows \citep{chen2018neural} has considered regularization inspired by dynamical optimal transport.  In particular, \citet{finlay2020train, tong2020trajectorynet} regularize the kinetic energy $\frac{1}{2}\mathbb{E}_{q_t}[\|v_t(x)\|^2]$ (and its Jacobian) across time, while \citet{liu2020potential, onken2021ot, koshizuka2022neuralsb} parameterize $v_t$ as the gradient of a potential function $s_t$ and \textit{additionally}  regularize $\mathbb{E}_{q_t}\big[\big|\frac{1}{2}\|\nabla s_t(x)\|^2 + \frac{\partial s_t}{\partial t}\big|\big]$ to be close to zero.   However, these methods all require simulation of and backpropagation through the differential equations during training.  Further, the loss functions for these include maximum likelihood or adversarial losses at the endpoints or entropic OT losses to intermediate distributions.  By contrast, Action Matching is simulation-free and evaluates the action function at the endpoints.
The Action Matching objective in \cref{eq:kilbo} naturally includes minimization of $\mathbb{E}_{q_t}[\frac{1}{2}\|\nabla s_t(x)\|^2 + \frac{\partial s_t}{\partial t}]$, while we show in \cref{app:infinitesimal} that the minimum of $\frac{1}{2}\mathbb{E}_{q_t}[\|v_t(x)\|^2]$ is \textit{also} achieved by the action-minimizing $\nabla s_t(x)$.  In fact, these are dual optimizations, as shown in App. \ref{app:lagr_am} or \citet{mikami2006duality}.  Thus, including \textit{both} regularizers is redundant from the perspective of the Action Matching loss.

Existing methods for trajectory inference often optimize an entropy-regularized (unbalanced) optimal transport loss between the observed samples $q_t$ and samples simulated from a current model $\hat{q}_t$, which requires backpropagation through OT solvers \citep{cuturi2013sinkhorn, chizat2018scaling, cuturi2022optimal}.  In particular, \citet{tong2020trajectorynet, koshizuka2022neuralsb, huguet2022manifold} sample $\hat{q}_t$ using continuous normalizing flows with the regularization described above.  The \textsc{JKOnet} scheme of \citet{bunne2022proximal} seeks to learn a time-independent potential function $\mathcal{F}[q] = \int q(x) s(x)$ for which the observed $q_t$ path is a gradient flow, and generates samples from $\hat{q}_t$ using learned optimal transport maps.   As we discuss in \cref{app:grad_flows}, Action Matching can be viewed as learning a time-dependent potential via $s_t(x)$, without simulation of $\hat{q}_t$ during training, using a simple objective. 

\newcommand{\fmv}{u}
\newcommand{\gtvf}{u(x_t | x_0,x_1)}%

\subsection{Connection with Flow Matching Methods}

Recent related work on Flow Matching (FM) \citep{lipman2022flow, pooladian2023multisample, tong2023conditional}, stochastic interpolants \citep{albergo2022building}, and Rectified Flow \citep{liu2022flow, liu2022rectified} have recently been understood under the umbrella of \textit{bridge matching} (BM) methods \citep{shi2023diffusion, peluchetti2023diffusion, liu2023learning}.   
We highlight several distinctions between Action Matching and flow matching methods in this section.

In particular, consider a deterministic \textit{bridge} which produces intermediate samples $x_t = f_t(x_0, x_1)$ with a tractable conditional vector field $u_t(x_t|x_0,x_1)$.
A natural example is linear interpolation of the endpoint samples.
A marginal vector field $\fmv_t^\theta(x_t)$, which does not condition on a data sample $x_1$ and thus can be used for unconditional generation, is learned to match $u_t(x_t|x_0,x_1)$ using a squared error loss
\small
\begin{align}
   \mathcal{L}_{\text{FM}}(\fmv_t^\theta) = \inf \limits_{\fmv_t} \int_0^1 \mathbb{E}_{q(x_0,x_1)} \big[ \| \gtvf - \fmv_t^\theta(x_t)\|^2 \big] dt . \nonumber
\end{align}
\normalsize
\citet{liu2022rectified} justifies other Bregman divergence losses, which parallels our $c$-Action Matching method in \cref{sec:ccosts}.

The $\mathcal{L}_{\text{FM}}$ objective can be decomposed as in \citep{banerjee2005clustering},
\small
\begin{align}
&\mathbb{E}_{q_{0,1}} \big[ \|\gtvf - \fmv_t^\theta(x_t)\|^2 \big] = \label{eq:pythag} \\
&\mathbb{E}_{q_{0,1}}[ \| \gtvf - \fmv^*_t(x_t)\|^2 ] + \mathbb{E}_{q_t}[ \| \fmv^*_t(x_t) - \fmv_t^\theta(x_t)\|^2 ] \nonumber,
\end{align}
\normalsize
where $\fmv^*_t(x_t)$ is the unique minimizer of $\mathcal{L}_{\text{FM}}$ given by 
\begin{align}
    \fmv^*_t(x_t) = \mathbb{E}_{q(x_0,x_1)}[\gtvf]\,.  \label{eq:expected_vf}
\end{align}
We highlight several key differences between FM and AM.
\paragraph{Target Vector Fields} We can see that the \textsc{action-gap} (\cref{eq:am_action_gap}) in AM is analogous to the second term on the right-hand side of \cref{eq:pythag}, using a different target and parameterization. While both the FM target vector field $\fmv^*_t$ and the AM target vector field $\nabla s_t^*$ yield the same marginals $q_t$ (\cref{th:ac_cont_short}), they have several key differences that can be understood through the Helmholtz decomposition~\citep[Lemma 8.4.2]{ambrosio2008gradient}.
This theorem states that any vector field, such as $\fmv^*_t$, can be uniquely decomposed as $\fmv^*_t=\nabla s_t^* + w_t$, where $\nabla s_t^*$ is the gradient-field component, and $w_t$ is the divergence-free component, $\diver{q_t w_t}=0$.
Among all vector fields respecting marginals $q_t$, the AM target $\nabla s_t^*$ is the unique vector field without a divergence-free
component, moving the particles optimally in the sense of optimal transport. 
By contrast, the FM target $\fmv^*_t$ may contain a divergence-free component, moving the particles in a way that reflects the underlying \textit{path measure} \citep{shi2023diffusion} defined through the endpoint distribution $q(x_0,x_1)$ and the bridge process. Indeed, the gradient-field component of $\fmv^*_t$ is identical to $\nabla s_t^*$, 
and the
divergence-free component of $\fmv^*_t$ does not influence the marginals.
As the result, the optimal $\nabla s_t^*$ has a lower kinetic energy or dynamical cost 
\begin{align}
 \hspace*{-.2cm} \frac{1}{2} \mathbb{E}_{q_t(x_t)}  \|\nabla s_t^*\|^2  \leq  \frac{1}{2} \mathbb{E}_{q_t(x_t)}  \|\fmv_t^*\|^2   \quad \forall~t\,.
\end{align}

\paragraph{Applications} Starting from the \textsc{action-gap}, Action Matching derives a tractable dual objective which removes the need for a known conditional vector field $u_t(x_t|x_0,x_1)$. 
Thus, Action Matching may be applied in settings where linear interpolation is not a reasonable inductive bias, or no bridge vector field $u_t(x_t|x_0,x_1)$ is available.  
In particular, in \cref{sec:applications}, we applied AM to scientific applications where the dynamics of interest are only given via a dataset of samples from the temporal marginals.
\paragraph{Optimization} We note that FM methods are able to optimize the simple $L_2$ loss $\mathbb{E}[ \| \gtvf - \fmv^\theta_t(x_t) \|^2]$ due to the assumption that the target {conditional} vector field $\gtvf$ is available in closed form.   
Furthermore, parametrizing $ \fmv^\theta_t : [0,1] \times \mathbb{R}^d \rightarrow \mathbb{R}^d$ directly using a neural network, rather than parametrizing $s_t : [0,1] \times \mathbb{R}^d \rightarrow \mathbb{R}$ and differentiating to obtain $\nabla s_t \in \mathbb{R}^d$ in AM, provides computational benefits by saving a backward pass.  
Nevertheless, we have previously noted the benefits of the gradient-field parametrization and the generality of Action Matching. %
\vheader
\section{Conclusion}
\label{sec:conclusion}
In this work, we have presented a family of Action Matching methods for learning continuous dynamics from samples along arbitrary absolutely continuous paths on the 2-Wasserstein space. 
We propose the tractable optimization objective that relies \textit{only} on samples from intermediate distributions.
We further derived three extensions of Action Matching: Entropic AM, that can model stochasticity in the state space dynamics; Unbalanced AM, that allows for creation and destruction of probability mass; and $c$-AM, that incorporates convex cost functions on the state space.

The key property of the proposed objective is that it can be efficiently optimized for a wide range of applications.  We demonstrate this empirically in the physical and natural sciences, where snapshots of samples are often given. Further, we demonstrated competitive performance of Action Matching for generative modeling of images, where a prior distribution and density path can be flexibly constructed depending on the task of interest.

\paragraph{Acknowledgement}
The authors thank Viktor Oganesyan for helpful discussions regarding reproducing VP-SDE results.
AM acknowledges support from the Canada CIFAR AI Chairs program.

\bibliography{icml}
\bibliographystyle{icml2023}

\clearpage

\appendix
\onecolumn
\section{Proofs}
\label{app:proof}

\subsection{Action Matching Proofs}
\label{app:am_proof}
\amunique*
\begin{proof}[Proof of existence and uniqueness]\label{app:am_unique_proof}
\hl{The existence of and uniqueness of the solution could be argued by observing that
\begin{align*}
\pat q_{t} & =-\divp{q_{t}\g{s}_{t}^{*}} & \text{in \ensuremath{X}},\\
\left\langle \g{s}_{t}^{*},\n\right\rangle  & =0 & \text{on \ensuremath{\pa X}},
\end{align*}
where $\n$ is the surface normal, is an elliptic PDE with the Neumann boundary condition, and that it is a classical fact that these PDEs have a solution under mild conditions on $q_{t}$ \citep{mikhailov1976partial}. See \citet{ambrosio2008gradient} for a proof in more general settings.}
\end{proof}
\amloss*
\begin{proof}
{\footnotesize{}
\begin{align*}
 & \textsc{action-gap}\left(s_t,s_t^{*}\right)\\
 & =\fr 12\int_{0}^{1}\om_{t}\EE_{q_{t}\left(x\right)}\norm{\gr s_{t}-\gr s_{t}^{*}}^{2}dt\\
 & =\fr 12\int_{0}^{1}\int_{X}\om_{t}q_{t}\left(x\right)\norm{\gr s_{t}-\gr s_{t}^{*}}^{2}dxdt\\
 & \stackrel{}{=}\fr 12\int_{0}^{1}\int_{X}\om_{t}q_{t}\left(x\right)\norm{\gr s_{t}}^{2}dxdt-\int_{0}^{1}\om_{t}\int_{X}q_{t}\left(x\right)\left\langle \gr s_{t}(x),\gr s_{t}^{*}(x)\right\rangle dxdt+\smash{\overbrace{\fr 12\int\EE_{q_{t}\left(x\right)}\norm{\gr s_{t}^{*}}^{2}dt}^{\K_{\textsc{am}}}}\\
 & =\fr 12\int_{0}^{1}\int_{X}\om_{t}q_{t}\left(x\right)\norm{\gr s_{t}}^{2}dxdt-\int_{0}^{1}\om_{t}\int_{X}\left\langle \gr s_{t}(x),q_{t}\left(x\right)\gr s_{t}^{*}(x)\right\rangle dxdt+\K_{\textsc{am}}\\
 & \stackrel{(1)}{=}\fr 12\int_{0}^{1}\int_{X}\om_{t}q_{t}\left(x\right)\norm{\gr s_{t}}^{2}dxdt+\int_{0}^{1}\om_{t}\int_{X}s_{t}(x)\left[\gr\cd\left(q_{t}\left(x\right)\gr s_{t}^{*}(x)\right)\right]dxdt-\int_{0}^{1}\om_{t}\oint_{\pa X}q_{t}\left(x\right)s_{t}(x)\cancelto{0}{\left\langle \gr s_{t}^{*},d\n\right\rangle }dt+\K_{\textsc{am}}\\
 & =\fr 12\int_{0}^{1}\int_{X}\om_{t}q_{t}\left(x\right)\norm{\gr s_{t}}^{2}dxdt-\int_{0}^{1}\left(\int_{X}\om_{t}s_{t}(x)\fr{\pa q_{t}\left(x\right)}{\pa t}dx\right)dt+\K_{\textsc{am}}\\
 & \stackrel{(2)}{=}\int_{0}^{1}\om_{t}\EE_{q_{t}\left(x\right)}\left[\fr 12\norm{\gr s_{t}\left(x\right)}^{2}\right]dt-\left(\eval{\om_{t}\EE_{q_{t}\left(x\right)}\left[s_{t}(x)\right]}_{t={0}}^{t={1}}-\int_{X}\EE_{q_{t}\left(x\right)}\left[s_{t}(x)\fr{d\om_{t}}{dt}+\om_{t}\fr{\pa s_{t}\left(x\right)}{\pa t}\right]dt\right)+\K_{\textsc{am}}\\
 & =\int_{0}^{1}\om_{t}\EE_{q_{t}\left(x\right)}\left[\fr 12\norm{\gr s_{t}\left(x\right)}^{2}+\fr{\pa s_{t}\left(x\right)}{\pa t}+s_{t}(x)\fr{d\log\om_{t}}{dt}\right]dt-\om_{{1}}\EE_{q_{1}\left(x\right)}\left[s_{1}(x)\right]+\om_{{0}}\EE_{q_{0}\left(x\right)}\left[s_{0}(x)\right]+\K_{\textsc{am}}\\
 & =\L_{\textsc{am}}\left(s\right)+\K_{\textsc{am}}
\end{align*}
}where in (1), we have used integration by parts for divergence operator $\int_{X}\left\langle \g g,\vec f\right\rangle dx=\oint_{\pa X}\left\langle \vec fg,d\vec{n}\right\rangle -\int_{X}g\left(\g\cd\vec f\right)dx$ \hl{and that $\eval{\left\langle \gr s_{t}^{*},d\n\right\rangle }_{\pa X}=0$ due to the Neumann boundary condition (see proof of \cref{th:ac_cont_short} above)},
and in (2) we have used integration by parts.%
\end{proof}
For each distributional path $q_t$, the kinetic energy term only depends on the true actions $s_t^*$ and is defined as
\begin{align}
        \K(\g s_t^*)&\coloneqq\fr 12\int_{0}^{{1}}\EE_{q_{t}\left(x\right)}\norm{\g s_{t}^{*}(x)}^{2}dt \,.\label{eq:kinetic_energy}
\end{align}
Thus, minimizing the $\L_{\text{AM}}\left(s\right)$ can be interpreted as maximizing a variational lower bound on kinetic energy.

\hl{
\begin{restatable}[Adapted from \citet{albergo2022building}]{proposition}{albergo} 
Let $\nabla s_t(x)$ be a learned vector field, continuously differentiable in $(t,x)$ and Lipschitz in $x$ uniformly on $[0,1] \times \mathbb{R}^d$ with Lipschitz constant $K$.   Let $\hat{q}_t$ denote the density induced by $\frac{\partial x}{\partial t} =\nabla s_t(x)$ with $x_0 \sim q_0$.  Then, the squared Wasserstein-2 distance between $\hat{q}_t$ and the ground truth $q_t$ (induced by $\nabla s^*_t(x)$) at each time $\tau \in [0,1]$ is bounded by
\begin{align}
    W_2^2(\hat{q}_\tau, q_\tau) \leq e^{(1+2K)\tau} 
    \int_{0}^{\tau} \EE_{q_{t}\left(x\right)}{\norm{\g s_{t}\left(x\right)-\g s_{t}^{*}\left(x\right)}^{2}}dt \nonumber
\end{align}
where the right-hand side includes the action gap in \cref{eq:am_action_gap}.
\end{restatable}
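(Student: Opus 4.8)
The plan is to construct an explicit coupling of the two densities by flowing both from a common initial point, and then to control the growth of the expected squared distance between the two trajectories via Gr\"onwall's inequality.

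Concretely, I would first fix a shared initial condition $x_0 = y_0$ drawn from $q_0$, and evolve it under two flows: $\frac{d}{dt}x_t = \g s_t(x_t)$ and $\frac{d}{dt}y_t = \g s_t^*(y_t)$. The Lipschitz assumption on $\g s_t$ (together with the regularity of $\g s_t^*$ from \cref{th:ac_cont_short}) guarantees, by Picard--Lindel\"of, that both ODEs admit unique solutions, so the flow maps are well-defined. By construction and by \cref{th:ac_cont_short}, the law of $y_t$ is the ground-truth marginal $q_t$, while the law of $x_t$ is the learned marginal $\hat{q}_t$. Hence the joint law of $(x_\tau, y_\tau)$ is an admissible transport plan between $\hat{q}_\tau$ and $q_\tau$, so
\[
W_2^2(\hat{q}_\tau, q_\tau) \le \EE\norm{x_\tau - y_\tau}^2 =: E(\tau),
\]
where the expectation is taken over the shared draw $x_0 = y_0 \sim q_0$.

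Next I would bound the growth of $E(t)$. Differentiating under the expectation (justified by the continuous differentiability and Lipschitz assumptions), $\frac{d}{dt}E(t) = 2\,\EE\inner{x_t - y_t}{\g s_t(x_t) - \g s_t^*(y_t)}$. Adding and subtracting $\g s_t(y_t)$ splits the right-hand side into two terms. The first, $2\,\EE\inner{x_t - y_t}{\g s_t(x_t) - \g s_t(y_t)}$, is bounded by $2K\,E(t)$ using Cauchy--Schwarz and the Lipschitz constant $K$. The second, $2\,\EE\inner{x_t - y_t}{\g s_t(y_t) - \g s_t^*(y_t)}$, is bounded by Young's inequality by $E(t) + \EE\norm{\g s_t(y_t) - \g s_t^*(y_t)}^2$; since $y_t$ has law $q_t$, this latter expectation equals $\EE_{q_t(x)}\norm{\g s_t(x) - \g s_t^*(x)}^2$. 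Combining gives
\[
\frac{d}{dt}E(t) \le (1 + 2K)\,E(t) + \EE_{q_t(x)}\norm{\g s_t(x) - \g s_t^*(x)}^2 .
\]

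Finally, since $E(0) = 0$, applying Gr\"onwall's inequality in integrating-factor form, and using $e^{-(1+2K)t} \le 1$ on $[0,\tau]$, yields
\[
E(\tau) \le e^{(1+2K)\tau} \int_0^\tau \EE_{q_t(x)}\norm{\g s_t(x) - \g s_t^*(x)}^2 \, dt ,
\]
which is exactly the claimed bound, the integrand coinciding with that of the action gap in \cref{eq:am_action_gap} up to the factor $\tfrac12$. The main obstacle I anticipate is the rigorous justification of the coupling step, namely establishing that the deterministic flows are well-defined and that their time-$t$ pushforwards of $q_0$ indeed realize $\hat{q}_t$ and $q_t$; this is precisely where the Lipschitz regularity and \cref{th:ac_cont_short} are needed. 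The differential-inequality argument is then routine once the coupling is in place.
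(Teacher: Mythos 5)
Your proposal is correct and follows essentially the same route as the paper's proof: couple the two flows from a shared initial draw $x_0\sim q_0$, differentiate the expected squared discrepancy, split via adding and subtracting $\g s_t(y_t)$, bound the two terms by Lipschitzness and Young's inequality respectively, and conclude with Gr\"onwall. Your write-up of the Gr\"onwall step is in fact cleaner than the paper's (whose final displayed inequality omits the additive structure that your integrating-factor form makes explicit).
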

}
\begin{proof}
\newcommand{\pfx}{{x}}%
\hl{
Consider two flows $\varphi_t(\pfx)$ and $\varphi^*_t(\pfx)$ defined by 
\begin{align}
    \deriv{\varphi_t(\pfx)}{t} = \nabla s_t(\varphi_t(\pfx)) \;\; \text{and} \;\; \deriv{\varphi^*_t(\pfx)}{t} = \nabla s^*_t(\varphi^*_t(\pfx))
\end{align}
correspondingly.
Consider the quantity
\begin{align}
    Q_t = \int dx\; q_{0}(\pfx)\norm{\varphi_t(\pfx) - \varphi^*_t(\pfx)}^2 .
\end{align}
Note that $W_2^2(\hat{q}_\tau,q_\tau) \leq Q_\tau$, since $(\varphi_\tau)_{\#}{q_0} = \hat{q}_\tau$ and $(\varphi^*_\tau)_{\#}{q_0} = q_\tau$ have the correct marginals and thus constitute an admissible coupling for the $W_2$ distance.  
The time derivative of $Q_t$ is
\begin{align}
    \deriv{Q_t}{t} =~& 2\int dx\; q_{0}(\pfx)\inner{\varphi_t(\pfx) - \varphi^*_t(\pfx)}{\deriv{\varphi_t(\pfx)}{t} - \deriv{\varphi^*_t(x)}{t}}\\
    =~& 2\int dx\; q_{0}(\pfx)\inner{\varphi_t(\pfx) - \varphi^*_t(\pfx)}{\nabla s_t(\varphi_t(\pfx)) - \nabla s^*_t(\varphi^*_t(\pfx))} \\
    =~& 2\int dx\; q_{0}(\pfx)\inner{\varphi_t(\pfx) - \varphi^*_t(\pfx)}{\nabla s_t(\varphi_t(\pfx)) - \nabla s_t(\varphi^*_t(\pfx))} \\
    ~&~+ 2\int dx\; q_{0}(\pfx)\inner{\varphi_t(\pfx) - \varphi^*_t(\pfx)}{\nabla s_t(\varphi^*_t(\pfx)) - \nabla s^*_t(\varphi^*_t(\pfx))}. \nonumber
\end{align}
\normalsize
By the Lipshitzness of $\nabla s_t(\pfx)$, we bound the first term as
\begin{align}
    2\inner{\varphi_t(\pfx) - \varphi^*_t(\pfx)}{\nabla s_t(\varphi_t(\pfx)) - \nabla s_t(\varphi^*_t(\pfx))} \leq 2K \norm{\varphi_t(\pfx) - \varphi^*_t(\pfx)}^2.
\end{align}
The second term is bounded as follows
\begin{align}
    \norm{\varphi_t(\pfx) - \varphi^*_t(\pfx)}^2-2\inner{\varphi_t(\pfx) - \varphi^*_t(\pfx)}{\nabla s_t(\varphi^*_t(\pfx)) - \nabla s^*_t(\varphi^*_t(\pfx))} + \norm{\nabla s_t(\varphi^*_t(\pfx)) - \nabla s^*_t(\varphi^*_t(\pfx))}^2 \geq 0,\\
    2\inner{\varphi_t(\pfx) - \varphi^*_t(\pfx)}{\nabla s_t(\varphi^*_t(\pfx)) - \nabla s^*_t(\varphi^*_t(\pfx))} \leq \norm{\varphi_t(\pfx) - \varphi^*_t(\pfx)}^2 + \norm{\nabla s_t(\varphi^*_t(\pfx)) - \nabla s^*_t(\varphi^*_t(\pfx))}^2.
\end{align}
Thus,
\begin{align}
    \deriv{Q_t}{t} \leq (1+2K) Q_t + \int dx\; q_0(\pfx)\norm{\nabla s_t(\varphi^*_t(\pfx)) - \nabla s^*_t(\varphi^*_t(\pfx))}^2,
\end{align}
and by Gronwall's lemma, we have
\begin{align}
    Q_\tau \leq Q_0\exp(\tau(1+2K))\int_0^\tau dt\int dx\; q_0(\pfx)\norm{\nabla s_t(\varphi^*_t(\pfx)) - \nabla s^*_t(\varphi^*_t(\pfx))}^2. \label{eq:final_line_albergo}
\end{align}
Clearly, $Q_0 = 0$.   Using \cref{eq:final_line_albergo} and the fact that $W_2^2(\hat{q}_\tau,q_\tau) \leq Q_\tau$ from above, the proposition is proven.
}
\end{proof}

\subsection{Entropic Action Matching Proofs}
\label{app:eam_proof}
\eamexist*
\begin{proof}[Proof of existence and uniqueness]\label{app:eam_unique_proof}
Consider the PDE
\begin{align*}
\pat q_{t} & =-\divp{q_{t}\gr\tilde{s}_{t}^{*}}+\fr{\si_{t}^{2}}2\Delta q_{t} & \text{in \ensuremath{X}}\\
\left\langle \gr\tilde{s}_{t}^{*},\n\right\rangle  & =\fr{\si_{t}^{2}}2\left\langle \gr\log q_{t},\n\right\rangle  & \text{on \ensuremath{\pa X}}
\end{align*}
with the reparametrization $s_{t}^{*}=\tilde{s}_{t}^{*}-\fr{\si_{t}^{2}}2\log q_{t}$, we can write this PDE as 
\begin{align*}
\pat q_{t} & =-\divp{q_{t}\g\tilde{s}_{t}^{*}}+\fr{\si_{t}^{2}}2\Delta q_{t}\\
 & =-\g \cdot\left[q_{t}\left(\g\tilde{s}_{t}^{*}-\fr{\si_{t}^{2}}2\g\log q_{t}\right)\right]\\
 & =-\g\cdot\left[q_{t}\left(\g\left(\tilde{s}_{t}^{*}-\fr{\si_{t}^{2}}2\log q_{t}\right)\right)\right] \\
 & = -\g\cdot\left(q_{t}\g{s}_{t}^{*}\right)
\end{align*}
with the boundary condition of 
\begin{align*}
\left\langle \gr\left(\tilde{s}^{*}-\fr{\si_{t}^{2}}2\log q_{t}\right),\n\right\rangle &=\langle\nabla s^*_t,\n \rangle  = 0
\end{align*}
From \cref{app:am_unique_proof}, we know $s_{t}^{*}$ exists and is unique (up to a constant), thus $\tilde{s}_{t}^{*}$ also exists and is unique (up to a constant).
\end{proof}
\eamloss*
\begin{proof}{\footnotesize{}
\begin{align*}
 & \textsc{e-action-gap}\left(s_t,\tilde{s}^{*}_t\right)\\
 & =\fr 12\int_{0}^{1}\EE_{q_{t}\left(x\right)}\norm{\gr s_t-\gr\tilde{s}_t^{*}}^{2}dt\\
 & =\fr 12\int_{0}^{1}\int_{X}q_{t}\left(x\right)\norm{\gr s_t-\gr\tilde{s}_t^{*}}^{2}dxdt\\
 & \stackrel{}{=}\fr 12\int_{0}^{1}\int_{X}q_{t}\left(x\right)\norm{\gr s_t}^{2}dxdt-\int_{0}^{1}\int_{X}q_{t}\left(x\right)\left\langle \gr s_{t}(x),\gr\tilde{s}_{t}^{*}(x)\right\rangle dxdt+\smash{\overbrace{\fr 12\int\EE_{q_{t}\left(x\right)}\norm{\gr\tilde{s}_{t}^{*}}^{2}dt}^{\K_{\text{eAM}}}}\\
 & =\fr 12\int_{0}^{1}\int_{X}q_{t}\left(x\right)\norm{\gr s_t}^{2}dxdt-\int_{0}^{1}\int_{X}\left\langle \gr s_{t}(x),q_{t}\left(x\right)\gr\tilde{s}_{t}^{*}(x)\right\rangle dxdt+\K_{\text{eAM}}\\
 & \stackrel{(1)}{=}\fr 12\int_{0}^{1}\int_{X}q_{t}\left(x\right)\norm{\gr s_t}^{2}dxdt+\int_{0}^{1}\int_{X}s_{t}(x)\left[\gr\cd\left(q_{t}\left(x\right)\gr\tilde{s}_{t}^{*}(x)\right)\right]dxdt-\fr{\si_{t}^{2}}2\int_{0}^{1}\oint_{\pa X}s_{t}\left\langle \gr q_{t},d\n\right\rangle dt+\K_{\text{eAM}}\\
 & \stackrel{(2)}{=}\fr 12\int_{0}^{1}\int_{X}q_{t}\left(x\right)\norm{\gr s_t}^{2}dxdt-\int_{0}^{1}\left(\int_{X}s_{t}(x)\fr{\pa}{\pa t}q_{t}\left(x\right)dx\right)dt+\fr{\si_{t}^{2}}2\int_{0}^{1}\left(\int_{X}s_{t}(x)\Delta q_{t}dx\right)dt-\fr{\si_{t}^{2}}2\int_{0}^{1}\oint_{\pa X}s_{t}\left\langle \gr q_{t},d\n\right\rangle dt+\K_{\text{eAM}}\\
 & \stackrel{(3)}{=}\int_{0}^{1}\EE_{q_{t}\left(x\right)}\left[\fr 12\norm{\gr s_{t}\left(x\right)}^{2}\right]dt-\left(\eval{\EE_{q_{t}\left(x\right)}\left[s_{t}(x)\right]}_{t={0}}^{t={1}}-\int_{X}\EE_{q_{t}\left(x\right)}\left[\fr{\pa s_{t}\left(x\right)}{\pa t}\right]dt\right)+\fr{\si_{t}^{2}}2\int_{0}^{1}\left(\int_{X}q_{t}(x)\Delta s_{t}dx\right)dt\\
 & \hphantom{=}-\fr{\si_{t}^{2}}2\int_{0}^{1}\oint_{\pa X}q_{t}\left\langle \gr s_{t},d\n\right\rangle dt+\K_{\text{eAM}}\\
 & =\int_{0}^{1}\EE_{q_{t}\left(x\right)}\left[\fr 12\norm{\gr s_{t}\left(x\right)}^{2}+\fr{\pa s_{t}\left(x\right)}{\pa t}+\fr{\si_{t}^{2}}2\Delta s_{t}\right]dt-\EE_{q_{1}\left(x\right)}\left[s_{1}(x)\right]+\EE_{q_{0}\left(x\right)}\left[s_{0}(x)\right]-\fr{\si_{t}^{2}}2\int_{0}^{1}\oint_{\pa X}q_{t}\left\langle \gr s_{t},d\n\right\rangle dt+\K_{\text{eAM}}\\
 & =\mathcal{L}_{\text{eAM}}\left(s\right)+\K_{\text{eAM}}
\end{align*}}
where in (1), we have used the integration by parts for divergence operator $\int_{X}\left\langle \g g,\vec f\right\rangle dx=\oint_{\pa X}\left\langle \vec fg,d\vec{n}\right\rangle -\int_{X}g\left(\g\cd\vec f\right)dx$, \hl{and the Neumann boundary condition from \cref{app:eam_unique_proof}}
{\footnotesize{}
\begin{align*}
\int_{X}\left\langle \gr s_{t}(x),q_{t}\left(x\right)\gr\tilde{s}_{t}^{*}(x)\right\rangle dx & =\oint_{\pa X}q_{t}\left(x\right)s_{t}(x)\left\langle \gr\tilde{s}_{t}^{*},d\n\right\rangle -\int_{X}s_{t}(x)\gr\cd\left(q_{t}\left(x\right)\gr\tilde{s}_{t}^{*}(x)\right)dx & {\color{gray}\text{integration by parts}}\\
 & =\fr{\si_{t}^{2}}2\oint_{\pa X}s_{t}q_{t}\left\langle \gr\log q_{t},d\n\right\rangle -\int_{X}s_{t}(x)\gr\cd\left(q_{t}\left(x\right)\gr\tilde{s}_{t}^{*}(x)\right)dx & {\color{gray}\text{boundary condition}}\\
 & =\fr{\si_{t}^{2}}2\oint_{\pa X}s_{t}\left\langle \gr q_{t},d\n\right\rangle -\int_{X}s_{t}(x)\gr\cd\left(q_{t}\left(x\right)\gr\tilde{s}_{t}^{*}(x)\right)dx
\end{align*}
}

In (2), we have used the Fokker-Planck equation: $\divp{q_{t}\g s_{t}^{*}(x)}=-\pat q_{t}+\fr{\si_{t}^{2}}2\Delta q_{t}$.

To derive (3), we have
{\footnotesize{}\begin{align*}
\int_{X}s_{t}\left(x\right)\Delta q_{t}dx-\int_{\pa X}s_{t}\left\langle \gr q_{t},d\n\right\rangle  & =\int_{X}s_{t}\left(x\right)\left[\divp{\gr q_{t}}\right]dx-\int_{\pa X}s_{t}\left\langle \gr q_{t},d\n\right\rangle \\
 & =-\int_{X}\left\langle \gr s_{t},\gr q_{t}\right\rangle dx & {\color{gray}\text{integration by parts for divergence theorem}}\\
 & =-\int_{X}\left\langle \gr q_{t},\gr s_{t}\right\rangle dx & {\color{gray}\text{by symmetry}}\\
 & =\int_{X}q_{t}\left(x\right)\Delta s_{t}dx-\int_{\pa X}q_{t}\left\langle \gr s_{t},d\n\right\rangle  & {\color{gray}\text{following the reverse of the first 2 steps}}
\end{align*}}
\hl{Finally, note that the term $\fr{\si_{t}^{2}}2\int_{0}^{1}\oint_{\pa X}q_{t}\left\langle \gr s_{t},d\n\right\rangle dt$ can be dropped if $q_{t}$ vanishes on the boundary $\pa X$.} 
\end{proof}
For each distributional path $q_t$, the \emph{entropic kinetic energy} term only depends on the true entropic action $\tilde{s}_t^*$ and is defined as
\begin{align}
        \K_\text{eAM}(\g s_t^*)&\coloneqq\fr 12\int_{0}^{{1}}\EE_{q_{t}\left(x\right)}\norm{\g \tilde{s}_t^*(x)}^{2}dt \,.\label{eq:entropic_kinetic_energy}
\end{align}
Thus, minimizing $\L_{\text{eAM}}\left(s\right)$ can be interpreted as maximizing a variational lower bound on the entropic kinetic energy.

\subsection{Unbalanced Action Matching Proofs}
\label{app:uam_proof}
\uamexist*
\begin{proof}[Proof of existence and uniqueness]
The existence and uniqueness of the solution could be argued by observing that
\begin{align*}
\pat q_{t} & =-\divp{q_{t}\g\hat{s}_{t}^{*}}+\hat{s}_{t}^{*}q_{t} & \text{in \ensuremath{X}}\\
\left\langle \g\hat{s}_{t}^{*},\n\right\rangle  & =0 & \text{on \ensuremath{\pa X}}
\end{align*}
is an elliptic PDE with the Neumann boundary condition, and that it is a classical fact that these PDEs have a solution under mild conditions on $q_{t}$ \citep{mikhailov1976partial}.

\end{proof}
\uamloss*
\begin{proof}
{\footnotesize{}
\begin{align*}
 & \textsc{u-action-gap}\left(s_t,\hat{s}_t^{*}\right)\\
 & =\fr 12\int_{0}^{1}\EE_{q_{t}\left(x\right)}\norm{\gr s_t-\gr\hat{s}_t^{*}}^{2}dt+\fr 12\int_{0}^{1}\EE_{q_{t}\left(x\right)}\norm{s_t-\hat{s}_t^{*}}^{2}dt\\
 & =\fr 12\int_{0}^{1}\int_{X}q_{t}\left(x\right)\norm{\gr s_t-\gr\hat{s}_t^{*}}^{2}dxdt+\fr 12\int_{0}^{1}\int_{X}q_{t}\left(x\right)\norm{s_t-\hat{s}_t^{*}}^{2}dxdt\\
 & =\int_{0}^{1}\int_{X}q_{t}\left(x\right)\left[\fr 12\norm{\gr s_t}^{2}+\fr 12s_t^{2}\right]dxdt-\int_{0}^{1}\int_{X}q_{t}\left(x\right)\left\langle \gr s_{t}(x),\gr\hat{s}_{t}^{*}(x)\right\rangle dxdt-\int_{0}^{1}\int_{X}q_{t}\left(x\right)s_{t}(x)\hat{s}_{t}^{*}(x)dxdt\\
 & \hphantom{=}+\overbrace{\fr 12\int\EE_{q_{t}\left(x\right)}\norm{\gr\hat{s}_{t}^{*2}}^{2}dt}^{\K_{\text{uAM}}}+\overbrace{\fr 12\int\EE_{q_{t}\left(x\right)}\hat{s}_{t}^{*2}dt}^{\G_{\text{uAM}}}\\
 & =\int_{0}^{1}\int_{X}q_{t}\left(x\right)\left[\fr 12\norm{\gr s_{t}}^{2}+\fr 12s_t^{2}\right]dxdt-\int_{0}^{1}\int_{X}\left\langle \gr s_{t}(x),q_{t}\left(x\right)\gr\hat{s}_{t}^{*}(x)\right\rangle dxdt-\int_{0}^{1}\int_{X}q_{t}\left(x\right)s_{t}(x)\hat{s}_{t}^{*}(x)dxdt+\K_{\text{uAM}}+\G_{\text{uAM}}\\
 & \stackrel{(1)}{=}\int_{0}^{1}\int_{X}q_{t}\left(x\right)\left[\fr 12\norm{\gr s_{t}}^{2}+\fr 12s_t^{2}\right]dxdt+\int_{0}^{1}\int_{X}s_{t}(x)\left[\gr\cd\left(q_{t}\left(x\right)\gr\hat{s}_{t}^{*}(x)\right)-q_{t}\left(x\right)\hat{s}_{t}^{*}(x)\right]dxdt-\int_{0}^{1}\oint_{\pa X}q_{t}\left(x\right)s_{t}(x)\cancelto{0}{\left\langle \gr s_{t}^{*},d\n\right\rangle dt}\\
 & \hphantom{=}+\K_{\text{uAM}}+\G_{\text{uAM}}\\
 & \stackrel{(2)}{=}\int_{0}^{1}\int_{X}q_{t}\left(x\right)\left[\fr 12\norm{\gr s_{t}}^{2}+\fr 12s_t^{2}\right]dxdt-\int_{0}^{1}\left(\int_{X}s_{t}(x)\fr{\pa}{\pa t}q_{t}\left(x\right)dx\right)dt+\K_{\text{uAM}}+\G_{\text{uAM}}\\
 & \stackrel{(3)}{=}\int_{0}^{1}\EE_{q_{t}\left(x\right)}\left[\fr 12\norm{\gr s_{t}\left(x\right)}^{2}+\fr 12s_t^{2}\right]dt-\left(\eval{\EE_{q_{t}\left(x\right)}\left[s_{t}(x)\right]}_{t={0}}^{t={1}}-\int_{X}\EE_{q_{t}\left(x\right)}\left[\fr{\pa s_{t}\left(x\right)}{\pa t}\right]dt\right)+\K_{\text{uAM}}+\G_{\text{uAM}}\\
 & =\int_{0}^{1}\EE_{q_{t}\left(x\right)}\left[\fr 12\norm{\gr s_{t}\left(x\right)}^{2}+\fr{\pa s_{t}\left(x\right)}{\pa t}+\fr 12s_t^{2}\right]dt-\EE_{q_{1}\left(x\right)}\left[s_{1}(x)\right]+\EE_{q_{0}\left(x\right)}\left[s_{0}(x)\right]+\K_{\text{uAM}}+\G_{\text{uAM}}\\
 & =\L_{\text{uAM}}\left(s\right)+\K_{\text{uAM}}+\G_{\text{uAM}}
\end{align*}}
where in (1), we have used integration by parts for divergence operator $\int_{X}\left\langle \g g,\vec f\right\rangle dx=\oint_{\pa X}\left\langle \vec fg,d\vec{n}\right\rangle -\int_{X}g\left(\g\cd\vec f\right)dx$ and \hl{that $\eval{\left\langle \gr s_{t}^{*},d\n\right\rangle }_{\pa X}=0$ due to the Neumann boundary condition (see \cref{app:uam_proof})},
in (2) we have used: $\pat q_{t}=-\divp{q_{t}\g s^{*}}+s^{*}q_{t}$,
and in (3) we have integration by parts.%
\end{proof}
For each distributional path $q_t$, the \emph{unbalanced kinetic energy} and \emph{unbalanced growth energy} terms only depend on the true unbalanced action $\hat{s}_t^*$ and are defined as
\begin{align}
    \K_\text{uAM}(\g \hat{s}_t^*)&\coloneqq\fr 12\int_{0}^{{1}}\EE_{q_{t}\left(x\right)}\norm{\g \hat{s}_t^*(x)}^{2}dt \,. \qquad\qquad
    \G_\text{uAM}(\hat{s}_t^*)\coloneqq\fr 12\int_{0}^{{1}}\EE_{q_{t}\left(x\right)}{ \hat{s}_t^*(x)}^{2}dt \,.\label{eq:kinetic_energy_uam}        
\end{align}
Thus, minimizing the $\L_{\text{uAM}}\left(s\right)$ can be interpreted as maximizing a variational lower bound on the summation of unbalanced kinetic energy and growth energy.

\section{Action Matching and Optimal Transport}\label{app:am_ot}
In this section, we describe various connections between Action Matching and optimal transport.   First, we describe how Action Matching (\cref{app:infinitesimal}), entropic AM (\cref{app:infinitesimal_eam}), and unbalanced AM (\cref{app:infinitesimal_uam}) can be understood as solving `local' versions of optimal transport problems between infinitesimally close distributions, where connections with dynamical OT formulations play a key role \citep{benamou2000computational, chen2016relation, chizat2018interpolating, liero2016optimal}.    In \cref{app:lagr_am}, we generalize AM by considering Lagrangian action cost functions beyond the squared Euclidean OT cost and standard kinetic energy underlying the other results in this paper.   Finally, in \cref{app:grad_flows}, we interpret AM as learning a linear potential functional whose Wasserstein-2 gradient flow traces the given density path $q_t$.

\subsection{Action Matching as Infinitesimal Optimal Transport}\label{app:infinitesimal}

Using \citet{ambrosio2008gradient} Prop. 8.4.6 (see also  \citet{villani2009optimal} Remark 13.10), we can interpret 
$\nabla s_t^*$ in action matching as learning the optimal transport map between two infinitesimally close distributions on the given curve $q_t$, under the squared Euclidean cost.  In particular, by \citet{ambrosio2008gradient} Prop. 8.4.6, we have
\begin{align}
\nabla s_t^* = \lim_{dt \to 0} \frac{1}{dt}(T^*(q_t, q_{t+dt})-\id).  \label{eq:inf_ot}
\end{align}
where $\id$ is the identity mapping and $T^*(q_t, q_{t+dt}) = x_t + \nabla \varphi^*_{t,dt}(x_t)$ is the unique transport map (\citet{gangbo1996geometry} Thm. 4.5) solving the Monge formulation of the Wasserstein-2 distance between neighboring densities $q_t, q_{t+dt}$ 
\small
\begin{align}
    \hspace*{-.2cm} W_2^2(q_t, q_{t+dt}) = \inf_T\bigg\{\int  \|T(x) - x\|^2 q_t(x) dx \; \bigg|\; T_{\#}q_t = q_{t+dt}\bigg\} =  \fr 12 \mathbb{E}_{q_t(x)}\left[\|\nabla \varphi^*_{t,dt}(x)\|^2 \right] \; \,\,\text{s.t.} \,\, \; (x+\nabla \varphi^*_{t,dt})_{\#}q_t = q_{t+dt} \label{eq:w2_monge}
\end{align}
\normalsize
and $T_{\#}q_t$ is the pushforward density of $q_t$ under the map $T$.  Note that the continuity equation suggests the pushforward map $x+ dt \nabla s_t^*(x)$ for small $dt$.
Thus, among all vector fields, the Action Matching objective finds the one that satisfies the continuity equation while minimizing the infinitesimal displacement of samples according to the squared Euclidean cost.   We can also compare \cref{eq:w2_monge} to \cref{eq:s_minimization_w2} below.

 \paragraph{Kinetic Energy and Dynamical Optimal Transport}
To further understand this result, we observe that Action Matching is closely related to the dynamical formulation of optimal transport due to \citet{benamou2000computational}.   
The squared Wasserstein-2 distance between given densities $p_0$ and $p_1$ with densities $q_{0}$ and $q_{1}$ can be expressed as, 
\begin{equation}
\begin{aligned}
W_2^2(\mu, \nu) &= \inf \limits_{q_t} \inf \limits_{v_t} \fr 12\int_{0}^{1} \EE_{q_{t}(x)}\norm{v_{t}(x)}^{2} dt \qquad  
\text{s.t. } \, \, \fr{\pa}{\pa t}q_{t}=-\g\cd\left(q_{t}v_{t}\right), \,\, q_{0} = p_{0} \text{, and }  q_{1} = p_1.
\end{aligned}
\label{eq:benamou_brenier}
\end{equation}
However, in Action Matching, the intermediate densities or distributional path $q_t$ are \textit{fixed} 
via the given samples.    In this case, we can interpret AM as learning a \textit{local} or infinitesimal optimal transport map as in \cref{eq:inf_ot}, although the given $q_t$ may not trace the \textit{global} optimal transport path between $q_0$ and $q_1$.  Nevertheless, 
it can be shown that (\citet{ambrosio2008gradient})
\begin{align}
    \g s_{t}^{*}= \argmin \limits_{v_t}~ \fr 12\int_{0}^{1} \EE_{q_{t}(x)}\left[\norm{v_{t}(x)}^{2}\right] dt \qquad  
\text{s.t. } \, \, \fr{\pa}{\pa t}q_{t}&=-\g\cd\left(q_{t}v_{t}\right). \label{eq:s_minimization_w2}
\end{align}
\normalsize
To show this directly, we introduce a Lagrange multiplier $s_t$ to enforce the continuity equation and integrate by parts with respect to $x$ and $t$ (also see \cref{eq:lagr_derivation_block}),
\small
\begin{align}
\hspace*{-.2cm}
\mathcal{L}(v_t, s_t) 
 &=\sup \limits_{s_t} \inf \limits_{v_t}  \fr 12 \int \limits_{0}^1  \int  \EE_{q_{t}(x)}\left[\norm{v_{t}(x)}^{2}\right] dt  + 
   \int \limits_0^1  \int   s_t(x_t) \Big( \frac{\partial q_t}{\partial t}(x_t) + 
   \nabla \cdot \big( q_t(x_t)  v_t(x_t) \big) \Big)
   dx_t dt 
   \\[2ex]
   &= 
\sup \limits_{s_t} \inf \limits_{v_t} ~  \fr 12\int_{0}^{1} \EE_{q_{t}(x)}\left[\norm{v_{t}(x)}^{2}\right] dt +   \mathbb{E}_{q_t(x_t)}[s_t(x_t)] \Big|^{t=1}_{t=0} - \int\limits_{0}^{1} \int  \EE_{q_{t}(x_t)} \left[\frac{\partial s_t(x_t)}{\partial t} +\langle v_t(x_t), \nabla s_t(x_t) \rangle \right]dt  
    \label{eq:ot_lagr}  
\end{align}
\normalsize
Eliminating the optimization with respect to $v_t$, 
we obtain the optimality condition
\begin{align}
v_t(x) = \nabla s_t(x) \label{eq:v_gf}
\end{align}
Plugging this condition into the Lagrangian in \cref{eq:ot_lagr}, we obtain the AM objective from \cref{prop:am_loss}, $\inf \limits_{s_t} \mathcal{L}_{\textsc{AM}}(s_t)$, where $\mathcal{L}_{\textsc{AM}}(s_t) = \int_{0}^{1} \EE_{q_{t}(x)}\big[ \fr12 \norm{\nabla s_{t}(x)}^{2} + \frac{\partial s_t(x_t)}{\partial t} \big] dt -   \mathbb{E}_{q_t(x_t)}[s_t(x_t)] \big|^{t=1}_{t=0}$ .
This objective is an upper bound on the optimal kinetic energy $\mathcal{K}(\g s_t^*)$, which is evaluated using the optimal vector field $v_t^* = \nabla s_t^*$ in \cref{eq:s_minimization_w2}.

\paragraph{Metric Derivative} Finally, the AM objective and \cref{eq:s_minimization_w2} are related to the \textit{metric derivative} in the 2-Wasserstein space $\mathcal{P}_2(\mathcal{X})$.    By \citet{ambrosio2008gradient} Thm. 8.3.1, for 
$q_t, q_{t+dt}$
along an absolutely continuous density path with $\partial_t q_t= -\nabla \cdot (q_t v_t^*)$, we have
\begin{align}
    |q_t^\prime|^2 = \left( \lim \limits_{dt \rightarrow 0} \frac{W_2(q_{t}, q_{t+dt})}{dt}\right)^2 = \mathbb{E}_{q_t(x)}\left[ \|v_t^*(x)\|^2 \right] = \mathbb{E}_{q_t(x)}\left[ \|\nabla s_t^*(x)\|^2 \right] .
\end{align}
The optimal value of the AM objective at each $t$ thus reflects the infinitesimal squared W2 distance along the given path.

\subsection{Action Matching with Lagrangian Costs}\label{app:lagr_am}
Starting from the Benamou-Brenier dynamical formulation of optimal transport in \cref{eq:benamou_brenier}, we can also formulate action matching with more general Lagrangian action cost functions. 
For a Lagrangian $\lagr(x_t, v_t, t)$ which is strictly convex in the velocity $v_t$, we define the dynamical optimal transport problem %
\begin{align}
   \mathcal{T}(p_0, p_1)
   &= \inf \limits_{q_t} \inf \limits_{v_t} \int \limits_{0}^1  \int  \lagr(x_t, v_t, t) ~q_t(x_t) dx_t  dt \quad\,\, \text{subj:} \quad \frac{\partial q_t}{\partial t}(x_t) = \nabla \cdot \big( q_t(x_t) v_t(x_t) \big) 
   \quad q_0 = p_0 \text{  and  } q_1 = p_1
   .  \label{eq:action_min_def} 
   \end{align}
   See \citet{villani2009optimal} Ch. 7 for in-depth analysis of \cref{eq:action_min_def} as an optimal transport problem, where $\lagr(x_t, v_t, t) = \frac{1}{2}\| v_t \|^2$ recovers the Wasserstein-2 distance.
   For action matching, assume that the intermediate path of $q_t$ is given via its samples.   We now seek to learn a vector field which minimizes the Lagrangian while satifying the continuity equation,
   \begin{align}
   \mathcal{K}_{\ell{\text{AM}}}^* \coloneqq \inf \limits_{v_t} \int \limits_{0}^1  \int \lagr(x_t, v_t, t)~ q_t(x_t) dx_t dt \quad\,\, \text{subj:} \quad \frac{\partial q_t}{\partial t} = - 
   \nabla \cdot \big( q_t v_t \big) %
   . \label{eq:lagr_kinetic}
   \end{align}
   For this Lagrangian cost, we prove the following analogue of \cref{prop:am_loss}, which is recovered for $\lagr(x_t, v_t, t) = \frac{1}{2}\| v_t \|^2$.  We first state a definition.
   \newcommand{\hamilarg}{{a_t}}
   \begin{definition}\label{def:hamil} The convex conjugate of a Lagrangian $\lagr(x_t, v_t, t)$ which is strictly convex in $v_t$ is defined as
   \begin{align}
    \hamil(x_t, \hamilarg, t) &= \sup_{v_t} ~\langle v_t, \hamilarg \rangle - \lagr(x_t, v_t, t).
   \end{align}
   Solving for the optimizing argument yields the dual correspondence $v_t = \nabla_\hamilarg \hamil(x_t, \hamilarg, t)$ and $\hamilarg = \nabla_{v_t} \lagr(x_t, v_t, t)$.
   \end{definition}
\begin{proposition} For a Lagrangian $\lagr(x_t, v_t, t)$ which is strictly convex in the velocity $v_t$, the optimization defining $   \mathcal{K}_{\ell\emph{AM}}^*$ in \cref{eq:lagr_kinetic} can be expressed via the dual optimization, 
\begin{align}
\mathcal{K}_{\ell\emph{AM}}^* &= \inf \limits_{s_t}~ \mathcal{L}_{\ell\emph{AM}}(s_t), \qquad \label{eq:kilbo_lagr2} \\
\text{where} \quad\mathcal{L}_{\emph{$\ell$AM}}(s_t) &\coloneqq \int s_0 (x_0) ~q_0(x_0)dx_0 - \int s_1 (x_1) ~q_1(x_1)dx_1 
   + \int \limits_0^1 \int 
      \left[
      {  \hamil(x_t, \nabla s_t(x_t), t)  } + \frac{\partial s_t(x_t)}{\partial t} \right]  ~q_t(x_t)dx_t dt   \nonumber %
\end{align}
For a variational $s_t$, the corresponding action gap between $s_t$ and the optimal $\bar{s}_t^*$, 
is written using a Bregman divergence %
\begin{align}
\mathcal{K}_{\ell\emph{AM}}^* + \mathcal{L}_{\ell\emph{AM}}(s_t) ~ &= ~ \textsc{action-gap}_\hamil(s_t, \bar{s}_t^*) \\
\text{where} \quad &
\textsc{action-gap}_\hamil(s_t, \bar{s}_t^*) \coloneqq  \int \limits_0^1 \int  D_{H}[ \nabla s_t(x_t) : \nabla \bar{s}_t^*(x_t)] ~q_t(x_t) ~dx_t dt 
\end{align}
where the Bregman divergence $D_{H}$ is defined as
\begin{align}
 D_{H}[ \nabla s_t(x_t) : \nabla \bar{s}_t^*(x_t)] = \hamil(x_t, \nabla s_t, t) - \hamil(x_t, \nabla \bar{s}_t^*, t) - \langle v_t^*(x_t), \nabla s_t (x_t)-  \nabla \bar{s}_t^*(x_t) \rangle. \label{eq:breg}
\end{align}
Using \cref{def:hamil}
and the fact that $v_t^*$ and $\nabla \bar{s}_t^*$ are duals related by $v_t^* = \nabla_{\hamilarg} H(x_t, \nabla \bar{s}^*_t, t)$, the Bregman divergence may also be written in mixed parameterization $D_{H}[ \nabla s_t : \nabla \bar{s}_t^*] = D_{L,H}[ v_t^* : \nabla s_t]$,  with
\begin{align}
D_{L,H}[ v_t^* : \nabla s_t(x_t)] = \lagr(x_t, v_t^*, t) + \hamil(x_t, \nabla s_t, t) - \langle v_t^*(x_t), \nabla s_t(x_t) \rangle. \label{eq:breg2}
\end{align}
\end{proposition}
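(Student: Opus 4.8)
The plan is to establish the three claims---the dual representation \cref{eq:kilbo_lagr2}, the Bregman form of the action gap, and the mixed parameterization \cref{eq:breg2}---by combining Lagrangian duality for the constrained kinetic-energy minimization in \cref{eq:lagr_kinetic} with the Fenchel--Young identity relating $H$ and $L$. For the dual representation I would enforce the continuity constraint with a multiplier $s_t(x)$ and form the Lagrangian $\int_0^1\int L(x_t,v_t,t)\,q_t\,dx_t\,dt + \int_0^1\int s_t\,(\pat q_t + \diver{q_t v_t})\,dx_t\,dt$. Integrating by parts in time turns $\int s_t\,\pat q_t$ into the boundary pair $\EE_{q_1}[s_1]-\EE_{q_0}[s_0]$ minus $\int_0^1\EE_{q_t}[\pa s_t/\pa t]\,dt$, and integrating by parts in space (using the vanishing/Neumann boundary conditions of \cref{app:am_unique_proof}) turns $\int s_t\,\diver{q_t v_t}$ into $-\int_0^1\EE_{q_t}[\inner{\g s_t}{v_t}]\,dt$. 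The inner problem then decouples pointwise as $\inf_{v_t}\left[L(x_t,v_t,t)-\inner{\g s_t}{v_t}\right]=-H(x_t,\g s_t,t)$, which is exactly \cref{def:hamil} with minimizer the dual velocity $v_t=\nabla_a H(x_t,\g s_t,t)$. Collecting terms gives $\inf_{v_t}\mathcal{L}=-\mathcal{L}_{\ell\text{AM}}(s_t)$, so under strong duality (justified by strict convexity of $L$ in $v_t$ and the regularity theory of \citet[Ch.~7]{villani2009optimal}) the optimal value $\mathcal{K}^*_{\ell\text{AM}}$ equals the extremum of $-\mathcal{L}_{\ell\text{AM}}$ over $s_t$, yielding \cref{eq:kilbo_lagr2}; the only delicate point here is tracking the overall sign and inf/sup direction, consistent with the variational-lower-bound reading in the main text.

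For the action-gap identity I would expand the Bregman term \cref{eq:breg} inside the definition of $\textsc{action-gap}_H$. The piece not depending on $s_t$ splits via the Fenchel--Young equality at the dual pair $v_t^*=\nabla_a H(x_t,\g\bar s_t^*,t)$, namely $\inner{v_t^*}{\g\bar s_t^*}-H(x_t,\g\bar s_t^*,t)=L(x_t,v_t^*,t)$, and integrating this over $q_t$ and $t$ reproduces $\mathcal{K}^*_{\ell\text{AM}}=\int_0^1\int L(x_t,v_t^*,t)\,q_t\,dx_t\,dt$. It then remains to identify $\int_0^1\int\left[H(x_t,\g s_t,t)-\inner{v_t^*}{\g s_t}\right]q_t$ with $\mathcal{L}_{\ell\text{AM}}(s_t)$. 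The crux is the transport identity $\EE_{q_1}[s_1]-\EE_{q_0}[s_0]=\int_0^1\EE_{q_t}\left[\pa s_t/\pa t+\inner{\g s_t}{v_t^*}\right]dt$, which I would derive by differentiating $\EE_{q_t}[s_t]$ in $t$, substituting the continuity equation $\pat q_t=-\diver{q_t v_t^*}$, and integrating by parts once in $x$. Substituting it converts the leftover into the boundary and time-derivative terms of $\mathcal{L}_{\ell\text{AM}}$, establishing $\mathcal{K}^*_{\ell\text{AM}}+\mathcal{L}_{\ell\text{AM}}(s_t)=\textsc{action-gap}_H(s_t,\bar s_t^*)$.

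The mixed parameterization \cref{eq:breg2} then follows from the same Fenchel--Young substitution applied inside \cref{eq:breg}: replacing $\inner{v_t^*}{\g\bar s_t^*}-H(x_t,\g\bar s_t^*,t)$ by $L(x_t,v_t^*,t)$ rewrites $D_H[\g s_t:\g\bar s_t^*]$ as $L(x_t,v_t^*,t)+H(x_t,\g s_t,t)-\inner{v_t^*}{\g s_t}=D_{L,H}[v_t^*:\g s_t]$, and non-negativity of $D_H$ (strict convexity of $a\mapsto H(x_t,a,t)$, inherited from $L$) shows the gap vanishes exactly at $\g s_t=\g\bar s_t^*$, so $\bar s_t^*$ attains the extremum in \cref{eq:kilbo_lagr2}. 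The main obstacle I anticipate is analytic rather than algebraic: justifying strong duality and the existence and uniqueness of the optimal potential $\bar s_t^*$ (equivalently the unique curve-tracing velocity $v_t^*=\nabla_a H(x_t,\g\bar s_t^*,t)$) for a general strictly convex Lagrangian, which needs the elliptic-PDE and regularity arguments behind \cref{th:ac_cont_short} and \citet[Ch.~7]{villani2009optimal}, plus verifying that every boundary term genuinely vanishes.
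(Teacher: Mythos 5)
Your proposal is correct and follows essentially the same route as the paper: a Lagrange multiplier $s_t$ for the continuity constraint, integration by parts in $t$ and $x$ with vanishing boundary terms, and recognition of the pointwise infimum over $v_t$ as the Legendre transform $-\hamil(x_t,\nabla s_t,t)$ for the dual form; then Fenchel--Young at the pair $(v_t^*,\nabla\bar s_t^*)$ together with the continuity equation and one more integration by parts for the Bregman action gap. The only (immaterial) difference is direction: you expand $D_\hamil$ and reassemble $\mathcal{K}^*_{\ell\text{AM}}+\mathcal{L}_{\ell\text{AM}}(s_t)$, whereas the paper computes $\mathcal{L}_{\ell\text{AM}}(s_t)-\mathcal{L}_{\ell\text{AM}}(\bar s_t^*)$ and reduces it to $D_\hamil$; and your caution about the sign is warranted, since the derivation actually yields $\mathcal{K}^*_{\ell\text{AM}}=\sup_{s_t}\bigl(-\mathcal{L}_{\ell\text{AM}}(s_t)\bigr)$, which is consistent with the action-gap identity but not with the literal first display of the proposition.
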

For the case of $\lagr(x_t, v_t, t) = \frac{1}{2}\| v_t \|^2$, the Hamiltonian is $\hamil(x_t, \nabla s_t, t) = \frac{1}{2}\| \nabla s_t \|^2$ and the two parameterizations are self-dual with $v_t = \nabla s_t$.     Using this transformation, the Bregman divergence is simply half the squared Euclidean norm, 
\begin{align}
D_{L,H}[v_t^* : \nabla s_t] = \frac{1}{2}\| v^*_t \|^2 + \frac{1}{2}\| \nabla s_t \|^2 - \langle  v^*_t, \nabla s_t \rangle =\frac{1}{2}\| \nabla \bar{s}_t^* \|^2 + \frac{1}{2}\| \nabla s_t \|^2 - \langle \nabla \bar{s}_t^*, \nabla s_t \rangle 
=  \frac{1}{2} \|\nabla \bar{s}_t^*  - \nabla s_t \|^2.
\end{align}
From \cref{def:hamil}, note that in general, the optimality condition translating between $v_t$ and $\nabla s_t$ is $v_t = \nabla_{\hamilarg} \hamil (x_t, \nabla s_t, t)$ or $\nabla s_t = \nabla_{v_t} \lagr (x_t, v_t, t)$.
   \begin{proof}
   Introducing a Lagrange multipler $s_t(x)$ to enforce the continuity equation constraint, we integrate by parts in both $x$ and $t$ using the assumption of the boundary condition $\langle v_t, \vec{n}\rangle = 0$,
   \small
   \begin{equation}
   \begin{aligned}
  \mathcal{K}_{\ell\text{AM}}^* &= \inf \limits_{v_t} \int \limits_{0}^1  \int \lagr(x_t, v_t, t)~ q_t(x_t) dx_t dt \quad\,\, \text{subj:} \quad \frac{\partial q_t}{\partial t}(x_t) = -\nabla \cdot \big( q_t(x_t)  v_t(x_t) \big) \\
  &=\sup \limits_{s_t} \inf \limits_{v_t}  \int \limits_{0}^1  \int  \lagr(x_t, v_t, t) q_t(x_t) dx_t  dt + 
   \int \limits_0^1  \int   s_t(x_t) \Big( \frac{\partial q_t}{\partial t}(x_t) + 
   \nabla \cdot \big( q_t(x_t)  v_t(x_t) \big) \Big)
   dx_t dt 
   \\[2ex]
   &=  \sup \limits_{s_t} \textcolor{azure(colorwheel)}{\inf \limits_{v_t}}\int \limits_{0}^1  \int  \textcolor{azure(colorwheel)}{\lagr(x_t, v_t, t)} ~ q_t(x_t) dx_t  dt +  \int  s_t(x_t) q_t(x_t) dx_t\Big|^{t=1}_{t=0} - \int\limits_{0}^1 \int  \frac{\partial s_t(x_t)}{\partial t} q_t(x_t) dx_t dt  \\
    &\phantom{=\inf\sup} \qquad 
   \int\limits_{0}^1 \oint_{\pa X} q_t(x_t) s_t(x_t) ~\cancelto{0}{\left \langle v_t(x_t), d\vec{n} \right\rangle} dt
    - \int\limits_{0}^1 \int 
      \textcolor{azure(colorwheel)}{\langle v_t(x_t), \nabla s_t(x_t) \rangle} ~ q_t(x_t) dx_t dt \\
      &{=} \sup \limits_{s_t} \int \limits_0^1 \int   -\Big( 
      \textcolor{azure(colorwheel)}{ \sup\limits_{v_t}\, \, \langle v_t(x_t), \nabla s_t(x_t) \rangle - \lagr(x_t, v_t, t)  } \Big) q_t(x_t) dx_t dt -\int \limits_0^1 \int  \frac{\partial s_t(x_t)}{\partial t} q_t(x_t) dx_t dt +  \int  s_t(x_t) q_t(x_t) dx_t\Big|^{t=1}_{t=0} \\
\end{aligned} \label{eq:lagr_derivation_block}
\end{equation}
\normalsize
We can recognize the highlighted terms as a Legendre transform, where we define the Hamiltonian $\hamil$ as the convex conjugate $\hamil(x_t, \nabla s_t(x_t), t) = \sup\limits_{v_t}\, \, \langle v_t(x_t), \nabla s_t(x_t) \rangle - \lagr(x_t, v_t, t)$ of the Lagrangian $\lagr$.   This finally results in 
\begin{equation}
\begin{aligned}
  \mathcal{K}_{\ell\text{AM}}^* &\coloneqq  \sup \limits_{s_t} \int s_1 (x_1) ~q_1(x_1)dx_1 -\int s_0 (x_0) ~q_0(x_0)dx_0 -
    \int \limits_0^1 \int 
      \Big(
      \textcolor{azure(colorwheel)}{  \hamil(x_t, \nabla s_t(x_t), t)  } + \frac{\partial s_t(x_t)}{\partial t} \Big)  ~q_t(x_t)dx_t dt   .
      \label{eq:action_min}\\
      &= -\mathcal{L}_{\text{$\ell$AM}}(s_t^*)
\end{aligned}
\end{equation} 
where \cref{eq:action_min} can be used to define a corresponding action matching objective $\mathcal{L}_{\text{$\ell$AM}}(s_t)$ for a variational $s_t$ (see \cref{eq:kilbo_lagr2}).   

To calculate the action gap, we subtract $\mathcal{L}_{\text{$\ell$AM}}(s_t)$ from the optimal value of the `kinetic energy' in \cref{eq:lagr_kinetic}, using $v_t^*$
\small
\begin{align*}
  \textsc{action-gap}&_\hamil(s_t, \bar{s}_t^*) \\
  &\coloneqq  \mathcal{K}_{\ell\text{AM}}^* + \mathcal{L}_{\text{$\ell$AM}}(s_t) \\[1.25ex]
  &= \mathcal{L}_{\text{$\ell$AM}}(s_t) - \mathcal{L}_{\text{$\ell$AM}}( \bar{s}_t^*) \qquad \text{(using \cref{eq:action_min})}\\
  &=\int s_0 (x_0) ~q_0(x_0)dx_0 - \int s_1 (x_1) ~q_1(x_1)dx_1 +
    \int \limits_0^1 \int 
      \Big(
      {  \hamil(x_t, \nabla s_t(x_t), t)  } + \frac{\partial s_t(x_t)}{\partial t} \Big)  ~q_t(x_t)dx_t dt \\
      &\phantom{===} - \left(
      \int \bar{s}_0^* (x_0) ~q_0(x_0)dx_0 - \int \bar{s}^*_1 (x_1) ~q_1(x_1)dx_1 +
    \int \limits_0^1 \int 
      \Big(
      {  \hamil(x_t, \nabla \bar{s}_t^*(x_t), t)  } + \frac{\partial \bar{s}_t^*(x_t)}{\partial t} \Big)  ~q_t(x_t)dx_t dt \right) \\
      &\overset{(1)}{=}  \int \limits_0^1 \int 
      \Big(
      {   \hamil(x_t, \nabla s_t(x_t), t) - \hamil(x_t, \nabla \bar{s}_t^*(x_t), t)  }   \Big) ~q_t(x_t)dx_t dt + \int \limits_0^1 \int \frac{\partial q_t(x_t)}{\partial t} \big(  -s_t(x_t)+\bar{s}_t^*(x_t) \big) dx_t dt \\
      &\overset{(2)}{=}  \int \limits_0^1 \int 
      \Big(
      { \hamil(x_t, \nabla s_t(x_t), t) - \hamil(x_t, \nabla \bar{s}_t^*(x_t), t)  }   \Big) ~q_t(x_t)dx_t dt - \int \limits_0^1 \int \left( -\nabla \cdot \big( q_t v_t^*(x_t) \big) \right) \cdot \big( s_t(x_t) - \bar{s}_t^*(x_t) \big) dx_t dt \\
      &\overset{(3)}{=} \int \limits_0^1 \int 
      \Big(
       \hamil(x_t, \nabla s_t(x_t), t) - \hamil(x_t, \nabla \bar{s}_t^*(x_t), t) - \langle v_t^*(x_t) , \nabla s_t(x_t) - \nabla \bar{s}_t^*(x_t) \rangle \Big) ~q_t(x_t)dx_t dt \\
       &=  \int \limits_0^1 \int  D_{H}[ \nabla s_t(x_t) : \nabla \bar{s}_t^*(x_t)]~q_t(x_t)dx_t dt
\end{align*}
\normalsize
where in $(1)$ we use the fact that $\int_0^1 \int -\frac{\partial q_t} {\partial t} s_t dx_t dt = \int s_0 ~q_0 dx_0 - \int s_1 q_1 dx_1 + \int_0^1 \int \frac{\partial s_t}{\partial t} ~q_t dx_t dt$ by integration by parts.   In $(2)$, we use the fact that $v_t^*$ satisfies the continuity equation for $q_t$ from \cref{eq:lagr_kinetic}.    In $(3)$, we integrate by parts with respect to $x$ and recognize the resulting expression as the definition of the Bregman divergence from \cref{eq:breg}.  
\end{proof}

\subsection{Entropic Action Matching and Entropy-Regularized Optimal Transport} \label{app:infinitesimal_eam}

Consider the dynamical formulation of entropy-regularized optimal transport \citep{leonard2014survey, chen2016relation, chen2021stochastic}, which involves the same kinetic energy minimization as in \cref{eq:benamou_brenier} but modifying the continuity equation to account for stochasticity,
\begin{align}
    \frac{1}{2}  W_\epsilon(p_0,p_1) = \inf_{v_t}\inf_{q_t} \int_0^1 \frac{1}{2} \mathbb{E}_{q_t(x)}\norm{v_t(x)}^2 dt , \;\;\text{ s.t.  } 
    \deriv{q_t(x)}{t} = 
    -\nabla \cdot \big( q_t(x) v_t(x) \big)
    + \frac{\sigma_t^2}{2}\Delta q_t(x)\, \text{  and } q_0 = p_0, \; q_1 = p_1\,. 
    \label{eq:entropic_ot}
\end{align}
Since we fix the density path $q_t$ in our Action Matching formulation problem, we again omit the optimization over $q_t$ and the marginal constraints,
\begin{align}
    \mathcal{K}_{\text{eAM}}
    \coloneqq  \inf_{v_t} \int_0^1 \frac{1}{2}  \mathbb{E}_{q_t(x)}\norm{v_t(x)}^2 dt , \;\;\text{ s.t.  } 
    \deriv{q_t(x)}{t} = 
    -\nabla \cdot \big( q_t(x) v_t(x) \big)
    + \frac{\sigma_t^2}{2}\Delta q_t(x) \label{eq:k_eam}
\end{align}
Introducing Lagrange multiplier $s_t$ to enforce the Fokker-Planck constraint leads to
\begin{align}
\mathcal{K}_{\text{eAM}}
&=\sup \limits_{s_t} \inf \limits_{v_t}  \int_0^1 \frac{1}{2}  \mathbb{E}_{q_t(x)}\norm{v_t(x)}^2 dt  + 
   \int \limits_0^1  \int   s_t(x_t) \Big( \frac{\partial q_t}{\partial t}(x_t) + 
   \nabla \cdot \big( q_t(x_t) v_t(x_t) \big)
   - \frac{\sigma_t^2}{2}\Delta q_t(x) \Big) dx_t dt . \nonumber
\end{align}
Compared to \cref{eq:ot_lagr} and \cref{eq:lagr_derivation_block}, note that the additional $- s_t(x_t) \frac{\sigma_t^2}{2}\Delta q_t(x)$ term does not depend on $v_t$.   Thus, integrating by parts and eliminating $v_t$ as above yields the condition that $v_t^*$ is a gradient field,
\begin{align}
v_t(x) = \nabla s_t^*(x) %
\end{align}
Substituting into \cref{eq:k_eam} and following derivations as in the proof of \cref{prop:eam_loss} in \cref{app:eam_proof} yields the entropic AM objective, 
\begin{align}
    \mathcal{L}_{\text{eAM}} \coloneqq \inf_{s_t(x)} \; \int dx\; s_0(x) p_0(x) - \int dx\; s_1(x) p_1(x) + \int \limits_0^1 \int \limits_{\mathcal{X}} dt dx \; q_t(x) \;  \bigg[\frac{1}{2} \| \nabla s_t \|^2 + \frac{\sigma_t^2}{2} \Delta s_t + \deriv{s_t}{t}\bigg].
\end{align}

Since $\nabla \tilde{s}_t^*$ is the unique gradient field which satisfies the Fokker-Planck equation for the distributional path of $q_t$ (\cref{prop:entropic_am}), and the solution $v_t^*$ minimizing the kinetic energy in \cref{eq:k_eam} is a gradient field satisfying the Fokker-Planck equation, we conclude that these vector fields are the same, $v_t^* = \nabla \tilde{s}_t^*$.

\subsection{Unbalanced Action Matching and Unbalanced Optimal Transport}\label{app:infinitesimal_uam}

To account for growth or destruction of probability mass across time or optimal transport between positive measures with unequal normalization constant, \citep{chizat2018interpolating, chizat2018unbalanced, kondratyev2016new, liero2016optimal,liero2018optimal} analyze optimal transport problems involving a growth rate $g_t(x)$.    
In particular, the `Wasserstein Fisher-Rao' distance \citep{chizat2018interpolating} is defined by adding a term involving the norm of the growth rate $g_t$ to the \citet{benamou2000computational} dynamical OT formulation in \cref{eq:benamou_brenier} and accounting for the growth term in the modified continuity equation (see \cref{eq:iw_ode}-\ref{eq:growth_continuity})
\begin{align}
   WFR_{\lambda}(p_0, p_1) \coloneqq \inf_{v_t}\inf_{g_t}\inf_{q_t}  \int_0^1 &\; \mathbb{E}_{q_t(x)} 
    \left[ \frac{1}{2} \norm{v_t(x)}^2 + \frac{\lambda}{2}  g_t(x)^2 \right] dt ,\;\;\text{ subj. to} \label{eq:unbalanced_vt_ot} \\
    \deriv{q_t(x)}{t} &= 
    -\nabla \cdot \big( q_t(x) v_t(x) \big)
    + \lambda  g_t(x)q_t(x)\,,  \quad \text{ and } q_0 = p_0, \; q_1 = p_1. \label{eq:modified_continuity}
\end{align}
where the growth term may also be scaled by a multiplier $\lambda$.
If we again fix the path $q_t$, as in Action Matching, we define 
\begin{align}
\mathcal{K}_{\text{uAM}}^* + \mathcal{G}_{\text{uAM}}^* \coloneqq \inf_{v_t}\inf_{g_t}  \int_0^1 &\; \mathbb{E}_{q_t(x)} 
    \left[ \frac{1}{2} \norm{v_t(x)}^2 + \frac{\lambda}{2}  g_t(x)^2 \right] dt \, \, \text{   s.t.  } \, \deriv{q_t(x)}{t} = 
    -\nabla \cdot \big( q_t(x) v_t(x) \big)
    + \lambda g_t(x)q_t(x) \label{eq:k_uam}
\end{align}
Note that \cref{eq:unbalanced_vt_ot} and \cref{eq:k_uam} are each convex optimizations after the change of variables $(q_t, v_t, g_t) \mapsto (q_t,~ q_tv_t, ~q_tg_t)$ \citep{chizat2018interpolating}.  We slightly abuse notation to write integrals as expectations with respect to $q_t(x)$, which may not be a normalized probability measure in general (see below, \citet{liero2016optimal}).

Introducing $s_t$ as a Lagrange multiplier enforcing the modified continuity equation, we obtain the following Lagrangian
\small
\begin{align}
   \mathcal{K}_{\text{uAM}}^* + \mathcal{G}_{\text{uAM}}^* &= \sup_{s_t}\inf_{v_t}\inf_{g_t} \; \int_0^1 \; \mathbb{E}_{q_t(x)} 
    \left[ \frac{1}{2} \norm{v_t(x)}^2 + \frac{\lambda}{2}  g_t(x)^2 \right] dt + \int \limits_0^1  \int   s_t(x) \Big( \frac{\partial q_t}{\partial t}(x) + 
    \nabla \cdot \big( q_t(x) v_t(x) \big)
    - q_t(x) g_t(x) \Big) dx dt \nonumber \\
    &\overset{(1)}{=} \sup_{s_t}\inf_{v_t}\inf_{g_t} \int_0^1 \int \; %
    \left[ \frac{1}{2} \norm{v_t(x)}^2 + \frac{\lambda}{2}  g_t(x)^2 \right] q_t(x) dx dt +  \int  s_t(x) q_t(x) dx\Big|^{t=1}_{t=0} - \int\limits_0^1 \int  \frac{\partial s_t(x)}{\partial t} q_t(x) dx dt  \label{eq:uot_lagr} \\
    &\phantom{\overset{(1)}{=} \sup_{s_t(x)}\inf_{v_t}\inf_{g_t}}   +  \int_0^1 \int \Big[ -\lambda 
 s_t(x) ~ g_t(x) - \langle v_t(x), \nabla s_t(x) \rangle  \Big]~ q_t(x) dx \nonumber
\end{align}
\normalsize
where integrate by parts wrt $x$ and $t$ in $(1)$.    
Finally, eliminating $v_t$ and $g_t$ yields the optimality conditions
\begin{align}
v_t^*(x) = \nabla s_t(x) \qquad g_t^*(x) =  s_t(x).
\end{align}
These optimality conditions show that the action function $s_t$ links the problems of transporting mass via the vector field $v_t$ and creating or destroying mass via the growth rate $g_t$.   
Substituting back into \cref{eq:uot_lagr} and simplifying, we obtain the unbalanced objective $\mathcal{L}_{\text{uAM}}(s_t)$, which is to be minimized as a function of the variational action 
\begin{align}
\mathcal{L}_{\text{{uAM}}_{\lambda}}(s_t)&\coloneqq \EE_{q_{{0}}\left(x\right)}\left[s_{{0}}\left(x\right)\right]-\EE_{q_{{1}}\left(x\right)}\left[s_{{1}}\left(x\right)\right] +\int\EE_{q_{t}\left(x\right)}\left[\fr 12\norm{\g s_{t}\left(x\right)}^{2}+\fr{\pa s_{t}}{\pa t}\left(x\right)+\fr{\lambda}{2} s_t^2 \right]dt\,
\end{align}
As in \cref{prop:uam_loss}, we can define an appropriate action gap involving the growth term, with the optimal $\hat{s}_t^*$ evaluating the kinetic energy $\mathcal{K}_{\text{uAM}}^*(\g \hat{s}_t^*)\coloneqq\fr 12\int_0^1 \EE_{q_{t}\left(x\right)}\norm{\g \hat{s}_{t}^{*}(x)}^{2}dt$ and growth energy $\mathcal{G}_{\text{uAM}}^*(\hat{s}_t^*)\coloneqq\frac{\lambda}{2}\int_0^1\EE_{q_{t}\left(x\right)}[ \hat{s}_{t}^{*}(x)^{2}]dt$ of a given curve $q_t$.

\paragraph{Metric Derivative} 
Finally, the uAM objective and \cref{eq:k_uam} are related to the \textit{metric derivative} in the space of finite positive Borel measures
$\mathcal{M}(\mathcal{X})$ equipped with the Wasserstein Fisher-Rao metric distance in \cref{eq:unbalanced_vt_ot}, where we consider measures with arbitrary mass due to the effect of the growth term.   \citet{liero2016optimal} Thm. 8.16 and 8.17 are analogous to \citet{ambrosio2008gradient} Thm. 8.3.1 for this case.   In particular, for 
$q_t, q_{t+dt}$ along an absolutely continuous (AC) curve of positive measures with $\partial_t q_t= -\nabla \cdot (q_t v_t^*) + q_t g_t^*$, we have \citep{liero2016optimal}
\small
\begin{align}
   \hspace*{-.2cm} |q_t^\prime|^2 = \left( \lim \limits_{dt \rightarrow 0} \frac{WFR_{1}(q_{t}, q_{t+dt})}{dt}\right)^2 = \int \big[ \|v_t^*(x)\|^2 +   \| g_t^*(x)\|^2 \big] q_t(x) dx = \int \big[ \|\nabla \hat{s}_t^*(x)\|^2 +  \| \hat{s}_t^*(x)\|^2 \big] q_t(x) dx 
\end{align}
\normalsize
The optimal uAM objective (\cref{eq:k_uam}) at each $t$ thus reflects the infinitesimal squared WFR distance along an AC curve.

\subsection{Action Matching and Wasserstein Gradient Flows}\label{app:grad_flows}
We can also view dynamics learned by action matching as parameterizing the gradient flow of a time-dependent linear functional on the Wasserstein-2 manifold.  This is in contrast \text{JKOnet} \citep{bunne2022proximal}, which is limited to learning gradient flows for class of time-homogenous linear functionals.
We give a brief review of concepts related to gradient flows, but refer to e.g. \citet{figalli2021invitation} Ch. 3-4 for more details. 
Throughout this section, we let $\mathcal{X} \subseteq \mathbb{R}^d$, consider the space of probability measures $\mathcal{P}_2(\mathcal{X})$ with finite second moment, and identify measures $\mu \in \mathcal{P}_2(\mathcal{X})$ with their densities $d\mu = p dx$.

In order to define a gradient flow, we first need to define an inner product on the tangent space.   
The seminal work of \citet{otto2001geometry} defines the desired Riemannian metric on $T\mathcal{P}_2(\mathcal{X})$. 
Consider two curves $p_t^{(i)}: t \rightarrow \mathcal{P}_2(\mathcal{X})$ passing through a point $q \in \mathcal{P}_2(\mathcal{X})$, with $p_0^{(i)} = q$ and tangent vectors $\dot{p_0}^{(1)}, \dot{p_0}^{(2)} \in T_q \mathcal{P}_2(\mathcal{X})$ for $\dot{p}^{(i)} \coloneqq \frac{\partial p^{(i)}_t}{\partial t}$. 
Using \cref{th:ac_cont_short}, each curve satisfies a continuity equation for a gradient field $\nabla \psi_t^{(i)}(x)$,   
e.g. $\dot{p_t}^{(i)}|_{t=0} = - \divp{p_0^{(i)} \nabla \psi_0^{(i)}}$ at $t=0$.  The inner product on $T_q \mathcal{P}_2(\mathcal{X})$ is defined as 
\begin{align}
\langle \dot{p_t}^{(1)}, \dot{p_t}^{(2)} \rangle_{q} = \int \langle \nabla \psi_0^{(1)}(x), \nabla \psi_0^{(2)}(x) \rangle~ q(x) dx \quad \qquad \big( \text{where} \quad \dot{p_t}^{(i)}|_{t=0} = 
-\divp{p_0^{(i)} ~ \nabla \psi_0^{(i)}} \big)
\end{align}
The gradient of a functional $\mathcal{F}[p_t]$ with respect to the Wasserstein-2 metric is defined as the tangent vector for which the inner product yields the directional derivative along a curve $p_t : t \rightarrow \mathcal{P}_2(\mathcal{X})$ at $t=0$
\begin{align}
\hspace*{-.2cm} \langle &\text{grad}_{W_2} \mathcal{F}[p_t], \dot{p_t} \rangle_{p_0} = \frac{d}{d t}\Big|_{t=0} \mathcal{F}[p_t] \\[1.5ex] 
\hspace*{-.2cm} \text{or, more explictly, as: } \qquad &\text{grad}_{W_2}\mathcal{F}[p_t] = 
- \nabla \cdot \Big(  p_t ~ \nabla \frac{\delta \mathcal{F}[p_t]}{\delta p_t} \Big) ,
\end{align}
where $\frac{\delta \mathcal{F}[p]}{\delta p}$ is the first variation.   For example, the Wasserstein gradient of a \textit{time-dependent} linear functional is given by
\begin{align}
\mathcal{F}_t[p_t] = \int p_t(x) s_t(x) dx \qquad \qquad \text{grad}_{W_2}\mathcal{F}_t[p_t] = 
- \nabla \cdot \big( p_t ~ \nabla s_t \big). 
\end{align}
A negative gradient flow on the Wasserstein manifold is then given, in either continuous or discrete time, by
\begin{align}
\frac{\partial p_t}{\partial t} = \text{grad}_{W_2}\mathcal{F}_t[p_t] = -
 \nabla \cdot \big( p_t ~ \nabla s_t \big).\label{eq:grad_flow} 
\end{align}
Thus, the negative gradient flow of a time-dependent linear functional on $\mathcal{P}_2(\mathcal{X})$ can be modeled using the continuity equation for a vector field $\nabla s_t$.   
Action Matching can now be viewed as learning the functional $\mathcal{F}_t[p_t] = \int p_t(x) s_t(x) dx$ for which a given density path $p_t : t \rightarrow \mathcal{P}_2(\mathcal{X})$ (or $q_t$, in the main text) traces gradient flow on the Wasserstein manifold $\mathcal{P}_2(\mathcal{X})$.

\paragraph{Comparison with \textsc{JKOnet}}
In discrete time, the gradient flow in \cref{eq:grad_flow} can be written as
\begin{align}
p_{t+1} = \argmin \limits_{p}  \mathcal{F}[p] + \frac{1}{2\tau} W_2^2(p, p_{t}) .
\end{align}
 where we write the Wasserstein distance between densities instead of measures.  
The \textsc{JKOnet} method of \citet{bunne2022proximal} uses the above discrete-time approach to learn a potential function $\mathcal{F}[p]$ which drives the observed sample dynamics.    However, they restrict attention to learning the parameters $\theta$ of a \textit{time-homogeneous} linear functional $\mathcal{F}[p] = \int p(x) s(x;\theta) dx$ and their optimization methodology is notably more complex than action matching.

\section{Generative Modeling in Practice}
\label{app:generative_modeling}

\begin{algorithm}
  \caption{Generative Modeling using Action Matching \textcolor{azure(colorwheel)}{(In Practice)}}
  \begin{algorithmic}
    \REQUIRE dataset $\{x^j_t\}_{j=1}^{N_t}, \; x^j_t \sim q_t(x)$, batch-size $n$
    \REQUIRE parameteric model $s_t(x,\theta)$, \textcolor{azure(colorwheel)}{weight schedule $\omega(t)$}
    \FOR{learning iterations}
        \STATE get batch of samples from boundaries: $\{x_0^\ii\}_{i=1}^n \sim q_0(x), \; \{x_1^\ii\}_{i=1}^n \sim q_1(x)$ \\

        \STATE sample times $\{t^i\}_{i=1}^n \sim \textcolor{azure(colorwheel)}{p(t)}$
        \STATE get batch of intermediate samples $\{x_{\sampledt}^\ii\}_{i=1}^n \sim q_{t}(x)$
        \STATE $\text{L}_i =  \bigg[s_{0}(x_0^i)\textcolor{azure(colorwheel)}{\omega(0)} - s_{1}(x_1^i)\textcolor{azure(colorwheel)}{\omega(1)} + \frac{1}{2}\norm{\nabla s_t(x_{t^i}^i)}^2\textcolor{azure(colorwheel)}{\omega(t^i)} + \deriv{s_t(x_{t^i}^i)}{t}\textcolor{azure(colorwheel)}{\omega(t^i)} + s_t(x_{t^i}^i)\textcolor{azure(colorwheel)}{\deriv{\omega(t^i)}{t^i}}\bigg]$
        \STATE $\text{L} = \sum_{i=1}^n\textcolor{azure(colorwheel)}{\frac{1}{p(t^i)}}\text{L}_i$
        \STATE update the model $\theta \gets \text{Optimizer}(\theta, \nabla_\theta\text{L}_\theta)$
    \ENDFOR
    \OUTPUT trained model $s_t(x,\theta^*)$
  \end{algorithmic}
  \label{alg:amgl_practice}
\end{algorithm}

In practice, we found that the naive application of Action Matching (Algorithm \ref{alg:amgl}) for complicated dynamics such as image generation might exhibit poor convergence due to the large variance of objective estimate.
Moreover, the optimization problem
\begin{align}
    \min_{s_t} \fr 12\int \mean_{q_{t}\left(x\right)}\norm{\g s_{t}\left(x\right)-\g s_{t}^{*}\left(x\right)}^{2}dt
\end{align}
might be ill posed due to the singularity of the ground truth vector field $\g s_t^*$.
This happens when the data distribution $q_0$ is concentrated close to a low dimensional manifold, and the final distribution $q_1$ has a much higher intrinsic dimensionality (e.g., Gaussian distributions). In this case, the deterministic velocity vector field must be very large (infinite in the limit), so that it can pull apart the low dimensional manifold to transform it to higher dimensions.

We now discuss an example of this behavior, when the data distribution is a mixture of delta functions.
Consider the sampling process
\begin{align}
    x_t = f_t(x_0) + \sigma_t \eps, \;\; x_0 \sim \pi(x), \;\; \eps \sim \Normal(x\cond 0,1)\,,
\end{align}
where the target distribution is a mixture of delta-functions
\begin{align}
    \pi(x) = \frac{1}{N}\sum_i^N \delta(x-x^i).
\end{align}
Denoting the distribution of $x_t$ as $q_t(x)$, we can solve the continuity equation
\begin{align}
    \deriv{q_t}{t} = - \diver{q_t v_t}
\end{align}
analytically (see Appendix \ref{app:sparse_data}) by finding one of the many possible solutions
\begin{align}
    v_t =~&\frac{1}{\sum_i q_t^i(x)}\sum_i q_t^i(x)\bigg[(x-f_t(x^i))\deriv{}{t}\log \sigma_t +\deriv{f_t(x^i)}{t} \bigg], \;\; q^i_t(x) = \Normal(x\cond f_t(x^i), \sigma_t^2).
\end{align}
Note that $v_t$ is not gradient field in general, and thus is not the solution of action matching. However, it can be written as
\begin{align}
    v_t(x) =~&\sum_i \frac{q_t^i(x)}{\sum_i q_t^i(x)} \g s_t^i(x), \quad \text{where}\quad s^i_t(x) = \frac{1}{2}(x-f_t(x^i))^2\deriv{}{t}\log \sigma_t +\bigg\langle\deriv{f_t(x^i)}{t},x\bigg\rangle.\nonumber
\end{align}
Given that the density of Gaussian distributions drop exponentially fast, we can conclude that for small values of $t$ around each $x^i$, $\frac{q_t^j(x)}{\sum_j q_t^j(x)}$ is close to $1$ if $i=j$, and close to $0$ if $i\ne j$. Thus, $v_t(x)$ around each $x^i$ can be locally approximated with the gradient vector field $\g s_t^i(x)$.
Now suppose $\g s^*_t(x)$ is the solution of action matching, i.e., the unique gradient vector field that solves the continuity equation in every region, including regions around each $x^i$. Given the uniqueness of gradient vector fields that solve continuity equation, we can conclude that $\g s_t^i(x)$ locally matches $\g s^*_t(x)$ around each $x^i$.

For generative modeling, it's essential that $q_0 = \pi(x)$; hence, $\lim_{t\to 0}\sigma_t = 0$ and $\lim_{t\to 0}f_t(x) = x$.
Assuming that $\sigma_t^2$ is continuous and differentiable at $0$, in the limit, around each $x^i$, we have
\begin{align}
    \text{ for } \; t\to 0\,,\;\; \norm{\g s_{t}^*\left(x\right)}^{2} \propto \frac{1}{\sigma_t^2}, \;\; \text{ and}\;\; \fr 12\mean_{q_{t}\left(x\right)}\norm{\g s_{t}^*\left(x\right)}^{2} \propto \frac{1}{\sigma_t^2}.
\end{align}
Thus, the loss can be properly defined only on the interval $t\in (\delta,1]$, where $\delta > 0$.
In practice, we want to set $\delta$ as small as possible, i.e., we ideally want to learn $s_t$ on the whole interval $t\in[0,1]$.
We can prevent learning the singularity functions just by re-weighting the objective in time as follows
\begin{align}\label{appeq:weighted_objective}
    \fr 12\int \mean_{q_{t}\left(x\right)}\norm{\g s_{t}\left(x\right)-\g s_{t}^{*}\left(x\right)}^{2}dt \longrightarrow \fr 12\int \textcolor{azure(colorwheel)}{\omega(t)} \mean_{q_{t}\left(x\right)}\norm{\g s_{t}\left(x\right)-\g s_{t}^{*}\left(x\right)}^{2}dt.
\end{align}
To give an example, we can take $\sigma_t = \sqrt{t}$ and $f_t(x) = x\sqrt{1-t}$, then $\omega(t) = (1-t)t^{3/2}$ cancels out the singularities at $t=0$ and $t=1$.

The second modification of the original Algorithm \ref{alg:amgl} is the sampling of time-steps for the estimation of the time integral.
Namely, the optimization of \Cref{appeq:weighted_objective} is equivalent to the minimization of the following objective
\begin{align}
    \L_{\text{AM}} \left(s\right)=~&\underbrace{ \omega(1)\mean_{q_{1}\left(x\right)}\left[s_{1}\left(x\right)\right]-\omega(0)\mean_{q_{0}\left(x\right)}\left[s_{0}\left(x\right)\right]\vphantom{\left[\fr{\pa}{\pa}\right]}}_{\text{weighted action-increment}}  \\
    &+\underbrace{\int_{0}^{1}\mean_{q_{t}\left(x\right)}\left[\fr 12\omega(t)\norm{\g s_{t}\left(x\right)}^{2}+\omega(t)\fr{\pa s_{t}\left(x\right)}{\pa t} + s_t(x)\fr{\pa \omega(t)}{\pa t}\right]dt}_{\text{weighted smoothness}}\,,
\end{align}
which consists of two terms.
Estimation of the weighted action-increment involves only sampling from $q_{0}$ and $q_{1}$, while the weighted smoothness term estimate depends on the distribution of time samples $p(t)$, i.e.,
\begin{align}
    \int_{0}^{1}\underbrace{\textcolor{azure(colorwheel)}{\frac{p(t)}{p(t)}}}_{\text{=1}}\mean_{q_{t}\left(x\right)}\left[\fr 12\omega(t)\norm{\g s_{t}\left(x\right)}^{2}+\omega(t)\fr{\pa s_{t}\left(x\right)}{\pa t} + s_t(x)\fr{\pa \omega(t)}{\pa t}\right]dt  \\
    = \mean_{t\sim \textcolor{azure(colorwheel)}{p(t)}} \mean_{x\sim q_t(x)} \textcolor{azure(colorwheel)}{\frac{1}{p(t)}}\left[\fr 12\omega(t)\norm{\g s_{t}\left(x\right)}^{2}+\omega(t)\fr{\pa s_{t}\left(x\right)}{\pa t} + s_t(x)\fr{\pa \omega(t)}{\pa t}\right].
\end{align}
Note that $p(t)$ can be viewed as a proposal importance sampling distribution, and thus every choice of it results in an unbiased estimate of the original objective function.
Thus, we can design $p(t)$ to reduce the variance of the weighted smoothness term of the objective.
In our experiments, we observed that simply taking $p(t)$ proportionally to the standard deviation of the corresponding integrand significantly reduces the variance, i.e.,
\begin{align}
    p(t) \propto \sqrt{\mean_{x\sim q_t}(\zeta_t - \mean_{x\sim q_t}\zeta_t)^2},\;\;  \zeta_t = \fr 12\omega(t)\norm{\g s_{t}\left(x\right)}^{2}+\omega(t)\fr{\pa s_{t}\left(x\right)}{\pa t} + s_t(x)\fr{\pa \omega(t)}{\pa t}.
\end{align}
We implement sampling from this distribution by aggregating the estimated variances throughout the training with exponential moving average, and then followed by linear interpolation between the estimates.

\section{Sparse Data Regime}
In this section, we find velocity vector fields that satisfy the continuity equation in the case where the data distribution $q_0$ is a delta function or a mixture of delta functions; and the conditional $k_t(x_t\cond x)$ is a Gaussian distribution.
\label{app:sparse_data}
\subsection{Delta Function Data Distribution}
We start with the case where the dataset consists only of a single point $x_0 \in \mathbb{R}^d$
\begin{align}
    q_0(x) = \delta(x-x_0), \;\;\; k_t(x_t\cond x) = \Normal(x_t\cond f_t(x),\sigma_t^2).
\end{align}
Then the distribution at time $t$ is
\begin{align}
    q_t(x) = \int dx'\; q_0(x')k_t(x\cond x') = \Normal(x\cond f_t(x_0),\sigma_t^2).
\end{align}
The ground truth vector field $v$ comes from the continuity equation
\begin{align}
    \deriv{q_t}{t} = - \diver{q_t v} \;\implies\; \deriv{}{t}\log q_t = - \langle\nabla\log q_t, v\rangle -  \diver{v}.
\end{align}
For our dynamics, we have
\begin{align}
    \deriv{}{t}\log q_t =~& \deriv{}{t}\bigg[-\frac{d}{2}\log(2\pi\sigma_t^2) - \frac{1}{2\sigma_t^2}\norm{x-f_t(x_0)}^2\bigg] \\
    =~& - d\deriv{}{t}\log\sigma_t + \frac{1}{\sigma_t^2}\norm{x-f_t(x_0)}^2\deriv{}{t}\log\sigma_t + \frac{1}{\sigma_t^2}\bigg\langle x-f_t(x_0), \deriv{f_t(x_0)}{t}\bigg\rangle\\
     =~& - d\deriv{}{t}\log\sigma_t + \frac{1}{\sigma_t^2}\bigg\langle x-f_t(x_0), (x-f_t(x_0))\deriv{}{t}\log\sigma_t + \deriv{f_t(x_0)}{t}\bigg\rangle;\\
     \nabla\log q_t =~& -\frac{1}{\sigma_t^2}(x-f_t(x_0));\\
     \deriv{}{t}\log q_t =~&- d\deriv{}{t}\log\sigma_t -\bigg\langle \nabla\log q_t, (x-f_t(x_0))\deriv{}{t}\log\sigma_t + \deriv{f_t(x_0)}{t}\bigg\rangle.
\end{align}
Matching the corresponding terms in the continuity equation, we get
\begin{align}
    v = (x-f_t(x_0))\deriv{}{t}\log\sigma_t + \deriv{f_t(x_0)}{t}.
\end{align}
We note that since the above vector field is gradient field, it is the unique vector field that the action matching would recover.
\subsection{Mixture of Delta Functions Data Distribution}
For the mixture of delta-functions, we denote
\begin{align}
    q_0(x) = \frac{1}{N}\sum_i^N \delta(x-x^i), \;\; q_t(x) = \frac{1}{N}\sum_i^N q^i_t(x), \;\; q^i_t(x) = \Normal(x\cond f_t(x^i), \sigma_t^2).
\end{align}
Due to the linearity of the continuity equation w.r.t. $q$, we have
\begin{align}
    \sum_i \deriv{q^i_t}{t} = \sum_i  \diver{q^i_t v} \;\implies\; \sum_i q^i_t\bigg(\deriv{}{t} \log q^i_t + \langle\nabla\log q^i_t, v\rangle +  \diver{v}\bigg) = 0.
    \label{appeq:mix_continuity}
\end{align}
We first solve the equation for $\deriv{f_t}{t} = 0$, then for $\deriv{}{t}\log \sigma_t = 0$ and join the solutions.

For $\deriv{f_t}{t} = 0$, we look for the solution in the following form
\begin{align}
    v_\sigma = \frac{A}{\sum_i q^i_t} \sum_{i}\nabla q^i_t, \;\; q^i_t(x) = \Normal(x\cond f^i_t(x^i), \sigma_t^2).
\end{align}
Then we have
\begin{align}
    \diver{v_\sigma} =~& \bigg\langle \nabla \frac{A}{\sum_i q^i_t}, \sum_i \nabla q^i_t\bigg\rangle + \frac{A}{\sum_i q^i_t} \sum_i \nabla^2 q^i_t \\
    =~& - \frac{A}{(\sum_i q^i_t)^2} \norm{\sum_i \nabla q^i_t}^2 + \frac{A}{\sum_i q^i_t}\sum_i q^i_t\bigg[\norm{\nabla\log q^i_t}^2 - \frac{d}{\sigma_t^2}\bigg],\\
    \big(\sum_i q^i_t\big) \diver{v_\sigma} =~& - \frac{A}{\sum_i q^i_t} \norm{\sum_i \nabla q^i_t}^2 + A\sum_i q^i_t\bigg[\norm{\nabla\log q^i_t}^2 - \frac{d}{\sigma_t^2}\bigg],
\end{align}
and from \eqref{appeq:mix_continuity} we have
\begin{align}
    \sum_i q^i_t\bigg(-d \deriv{}{t} \log \sigma_t + \bigg\langle \nabla\log q^i_t, v_\sigma + \sigma_t^2\deriv{}{t}\log \sigma_t \nabla\log q^i_t\bigg\rangle +  \diver{v_\sigma}\bigg) = 0.
\end{align}
From these two equations we have
\begin{align}
    \sum_i q^i_t \diver{v_\sigma} &~=-\frac{A}{\sum_i q^i_t} \norm{\sum_i \nabla q^i_t}^2 + A\sum_i q^i_t\bigg[\norm{\nabla\log q^i_t}^2 - \frac{d}{\sigma_t^2}\bigg] = \\
    &~= \sum_i q^i_t \bigg(d \deriv{}{t} \log \sigma_t\bigg) - \frac{A}{\sum_i q^i_t}\norm{\sum_i \nabla q^i_t}^2 - \sigma_t^2\deriv{}{t}\log \sigma_t\sum_i q^i_t\norm{\nabla\log q^i_t}^2.
\end{align}
Thus, we have
\begin{align}
    A = -\sigma_t^2\deriv{}{t}\log \sigma_t.
\end{align}
For $\deriv{}{t}\log \sigma_t = 0$, we simply check that the solution is
\begin{align}
    v_f = \frac{1}{\sum_i q^i_t}\sum_i q^i_t \deriv{f_t(x^i)}{t}.
    \label{appeq:v_f_solution}
\end{align}
Indeed, the continuity equation turns into
\begin{align}
    \sum_i q^i_t \bigg(\bigg\langle\nabla\log q^i_t, v_f-  \deriv{f_t(x^i)}{t}\bigg\rangle +  \diver{v_f}\bigg) = 0.
\end{align}
From the solution and the continuity equation we write $\sum_i q^i_t \diver{v_f}$ in two different ways.
\begin{align}
    \sum_i q^i_t \diver{v_f} =~& -\frac{1}{\sum_i q^i_t}\bigg\langle\sum_i \nabla q^i_t, \sum_i q^i_t \deriv{f_t(x^i)}{t}\bigg\rangle + \sum_i\bigg\langle\nabla q^i_t, \deriv{f_t(x^i)}{t}\bigg\rangle\\
    =~& - \bigg\langle\sum_i\nabla q^i_t, v_f\bigg\rangle + \sum_i \bigg\langle\nabla q^i_t, \deriv{f_t(x^i)}{t}\bigg\rangle
\end{align}
Thus, we see that \eqref{appeq:v_f_solution} is indeed a solution.

Finally, unifying $v_\sigma$ and $v_f$, we have the full solution
\begin{align}
    v =~& -\bigg(\deriv{}{t}\log \sigma_t\bigg)\frac{\sigma_t^2}{\sum_i q^i_t} \sum_{i}\nabla q^i_t + \frac{1}{\sum_i q^i_t}\sum_i q^i_t \deriv{f_t(x^i)}{t}, \;\; q^i_t(x) = \Normal(x\cond f_t(x^i), \sigma_t^2),\\
    v =~&\frac{1}{\sum_i q^i_t}\sum_i q^i_t\bigg[(x-f_t(x^i))\deriv{}{t}\log \sigma_t +\deriv{f_t(x^i)}{t} \bigg].
\end{align}

\section{Experiments Details}

\subsection{Schrödinger Equation Simulation}
\label{app:schrodinger}

\begin{figure*}[t]
    \centering
    \includegraphics[width=0.8\textwidth]{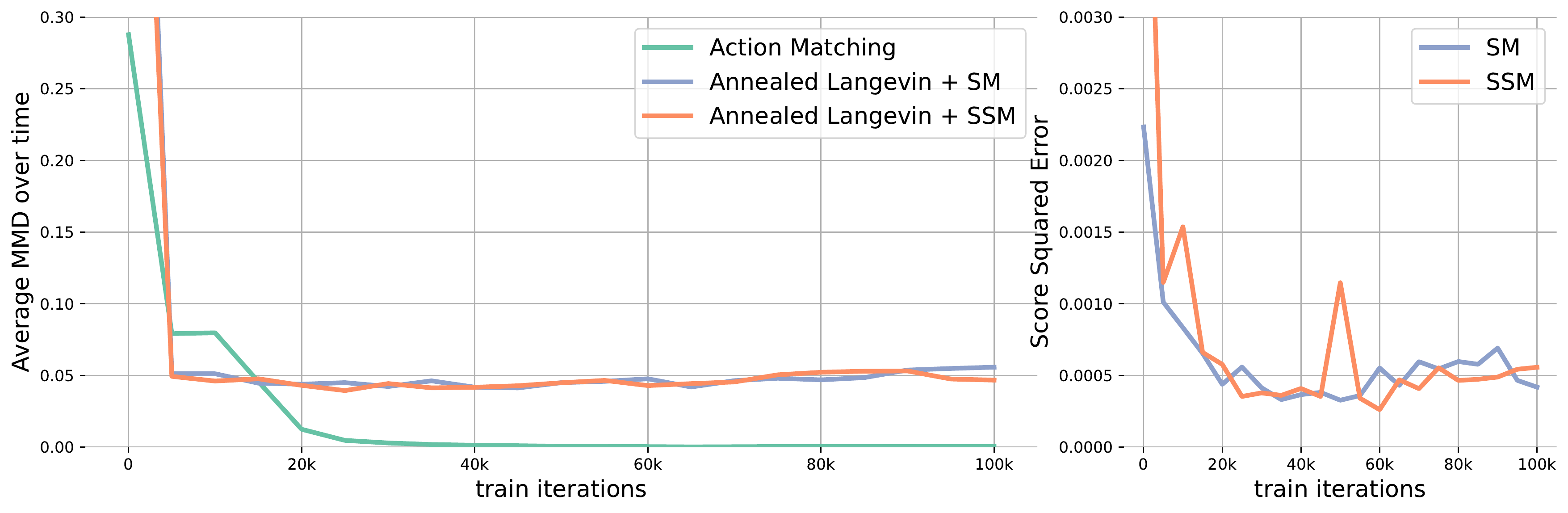}
    \caption{On the left, we demonstrate performance of various algorithms in terms of average MMD over the time of dynamics. The MMD is measured between generated samples and the training data. On the right, we report squared error of the score estimation for the score-based methods.}
    \label{fig:mmd_se}
\end{figure*}

For the initial state of the dynamics 
\begin{align}
    i\deriv{}{t}\psi(x,t) = -\frac{1}{\norm{x}}\psi(x,t) -\frac{1}{2}\nabla^2\psi(x,t)\,,
\end{align}
we take the following wavefunction
\begin{align}
     \psi(x,t=0) \propto \psi_{32-1}(x) + \psi_{210}(x), \;\;\text{ and } \;\; q_{t=0}^*(x) = |\psi(x,t=0)|^2,
\end{align}
where $n,l,m$ are quantum numbers and $\psi_{nlm}$ is the eigenstate of the corresponding Hamiltonian (see \cite{griffiths2018introduction}).
For all the details on sampling and the exact formulas for the initial state, we refer the reader to the code \href{https://github.com/necludov/action-matching}{github.com/necludov/action-matching}.
We evolve the initial state for $T=14\cdot 10^3$ time units in the system $\hbar=1, m_e=1, e=1, \eps_0 = 1$ collecting the dataset of samples from $q_t^*$. For the time discretization, we take $10^3$ steps; hence, we sample every $14$ time units.

To evaluate each method, we collect all the generated samples from the distributions $q_t,\; t\in [0,T]$ comparing them with the samples from the training data.
For the metric, we measure the Maximum Mean Discrepancy \citep{gretton2012kernel} between the generated samples and the training data at $10$ different timesteps $t=\frac{k}{10}T, \; k=1,\ldots,10$ and average the distance over the timesteps.
For the Annealed Langevin Dynamics, we set the number of intermediate steps for $M=5$, and select the step size $dt$ by minimizing MMD using the exact scores $\nabla \log q_t(x)$.

For all methods, we use the same architecture, which is a multilayer perceptron with $5$ layers $256$ hidden units each.
The architecture $h(t,x)$ takes $x \in \mathbb{R}^3$ and $t \in \mathbb{R}$ and outputs 3-d vector, i.e. $h(t,x): \mathbb{R}\times\mathbb{R}^3 \to \mathbb{R}^3$.
For the score-based models it already defines the score, while for action matching we use $s_t(x) = \norm{h(t,x)-x}^2$ as the model and the vector field is defined as $\nabla s_t(x)$.

In \cref{fig:mmd_se} we plot the convergence of Average MMD over time for Action Matching and the baselines. Despite that both SM and SSM accurately recover the ground truth scores for the marginal distributions (see the right plot in \cref{fig:mmd_se}), one cannot efficiently use them for the sampling from the ground truth dynamics.

\begin{algorithm}[h]
  \caption{Annealed Langevin Dynamics for the Schrödinger Equation}
  \begin{algorithmic}
    \REQUIRE score model $s_t(x) = \nabla\log q_t(x)$, step size $dt$, number of intermediate steps $M$
    \REQUIRE initial samples $x_0^i \in \mathbb{R}^d$
    \FOR{time steps $t \in (0,T]$}
        \FOR{intermediate steps $j \in 1,\ldots,M$}
            \STATE $\eps^i \sim \Normal(0,\mathbf{1})$
            \STATE $x_t^i = x_t^i + \frac{dt}{2}\nabla\log q_t(x_t^i) + \sqrt{dt}\cdot\eps^i$
        \ENDFOR
        \STATE save samples $x_t^i \sim q_t(x)$
    \ENDFOR
    \OUTPUT samples $\{x_t^i\}_{t=0}^T$
  \end{algorithmic}
  \label{alg:ALD}
\end{algorithm}

\newpage
\subsection{Generative Modeling}
\label{app:implementation}

For the architecture of the neural network parameterizing $s_t$, we follow \citep{salimans2021should} with a small modification. 
Namely, we parameterize $s_t(x)$ as $\inner{\text{unet}(t,x)}{x}$, where $\text{unet}(t,x)$ is the output of the U-net architecture \citep{ronneberger2015u}.
For the U-net architecture, we follow \citep{song2020score}.
We consider the same U-net architecture for the baseline to parameterize $\nabla\log q_t$.

For diffusion, we take VP-SDE from \citep{song2020score}, which corresponds to $\alpha_t = \exp(-\frac{1}{2}\int \beta(s)ds)$ and $\sigma_t = \sqrt{1-\exp(-\int \beta(s)ds)}$, where $\beta(s) = 0.1 + 19.9t$. 
All images are normalized to the interval $[-1, 1]$.

For the baseline, we managed to generate the images only taking into account the noise variance of the current distribution $q_t$ as proposed in \citep{song2019generative}. 
For propagating samples in time we select the time step $dt = 10^{-2}$ and perform $10$ sampling steps for every $q_t$.
We additionally run $100$ sampling steps for the final distribution. In total we run $1000$ steps to generate images.
See \cref{alg:ALD_baseline} for the pseudocode.

\begin{algorithm}[t]
  \caption{Annealed Langevin Dynamics for the Image Generation}
  \begin{algorithmic}
    \REQUIRE score model $s_t(x) = \nabla\log q_t(x)$, step size $dt$, number of intermediate steps $M$
    \REQUIRE initial samples $x_0^i \in \mathbb{R}^d$
    \FOR{time steps $t \in (0,1)$}
        \STATE $\alpha = \alpha_1(1-t)^2$
        \FOR{intermediate steps $j \in 1,\ldots,M$}
            \STATE $\eps^i \sim \Normal(0,\mathbf{1})$
            \STATE $x_t^i = x_t^i + \frac{\alpha}{2}\nabla\log q_t(x_t^i) + \sqrt{\alpha}\cdot\eps^i$
        \ENDFOR
    \ENDFOR
    \STATE $\alpha = \alpha_1$
    \FOR{intermediate steps $j \in 1,\ldots,M$}
        \STATE $\eps^i \sim \Normal(0,\mathbf{1})$
        \STATE $x_1^i = x_1^i + \frac{\alpha}{2}\nabla\log q_1(x_1^i) + \sqrt{\alpha}\cdot\eps^i$
    \ENDFOR
    \OUTPUT samples $x_1^i$
  \end{algorithmic}
  \label{alg:ALD_baseline}
\end{algorithm}

\begin{figure}
    \centering
    \includegraphics[width=\textwidth]{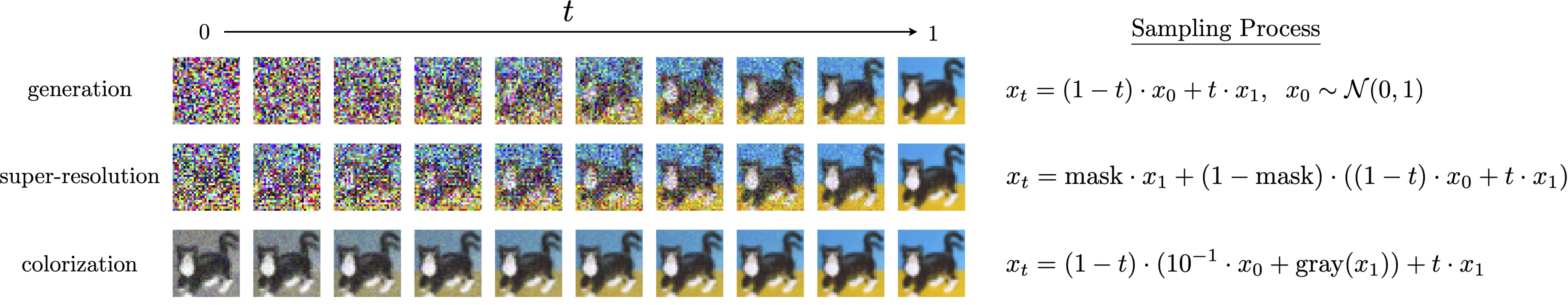}
    \caption{Examples of different noising processes used for different vision tasks. The processes interpolate between the prior distribution at $t=0$ and the target distribution $t=1$. For all the processes $x_0 \sim \Normal(0,1)$.}
    \label{fig:evolutions}
\end{figure}

\newpage
\subsection{Unbalanced Action Matching}
\label{app:uam}

We showcase Unbalanced Action Matching on a toy data for which we consider a mixture of gaussians, i.e., 
\begin{align}
    q_t(x) = \alpha_t \Normal(-5, 1) + (1-\alpha_t) \Normal(5, 1)\,, 
\end{align}
and change $\alpha_t$ linearly from $0.2$ to $0.8$.
In \cref{fig:uam}, we demonstrate the data samples and the samples generated by Unbalanced Action Matching starting from the ground truth samples at time $t=0$ and reweighting particles according to \cref{eq:uam_ode_solution_weights}.

Instead of attempting to transport particles from one mode to another, Unbalanced Action Matching is able to model this change of probability mass using the growth term $g_t(x) = s_t(x)$.

\begin{figure}[t]
    \centering
    \includegraphics[width=\textwidth]{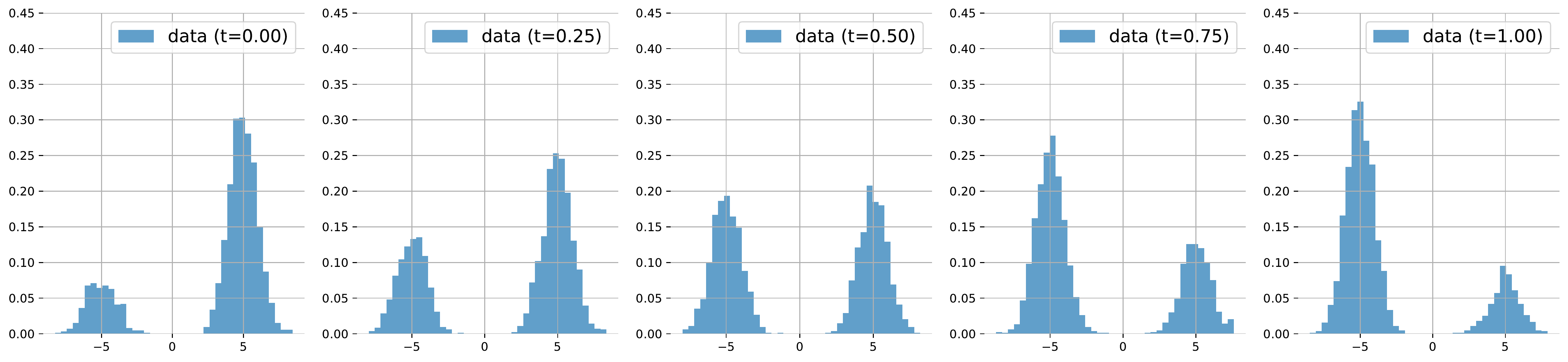}\\
    \includegraphics[width=\textwidth]{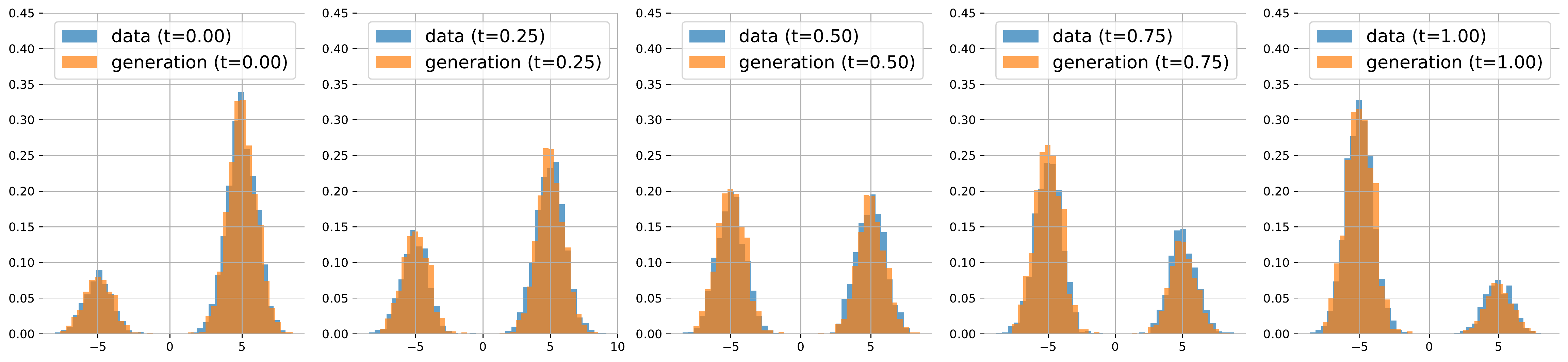}
    \caption{The histograms of the training data (top row) changing in time and the histograms of the generated samples by Unbalanced Action Matching (bottom row).}
    \label{fig:uam}
\end{figure}

\newpage
\section{Generated Images}
\label{app:images}

\begin{figure}[h]
    \centering
    \includegraphics[width=0.9\textwidth]{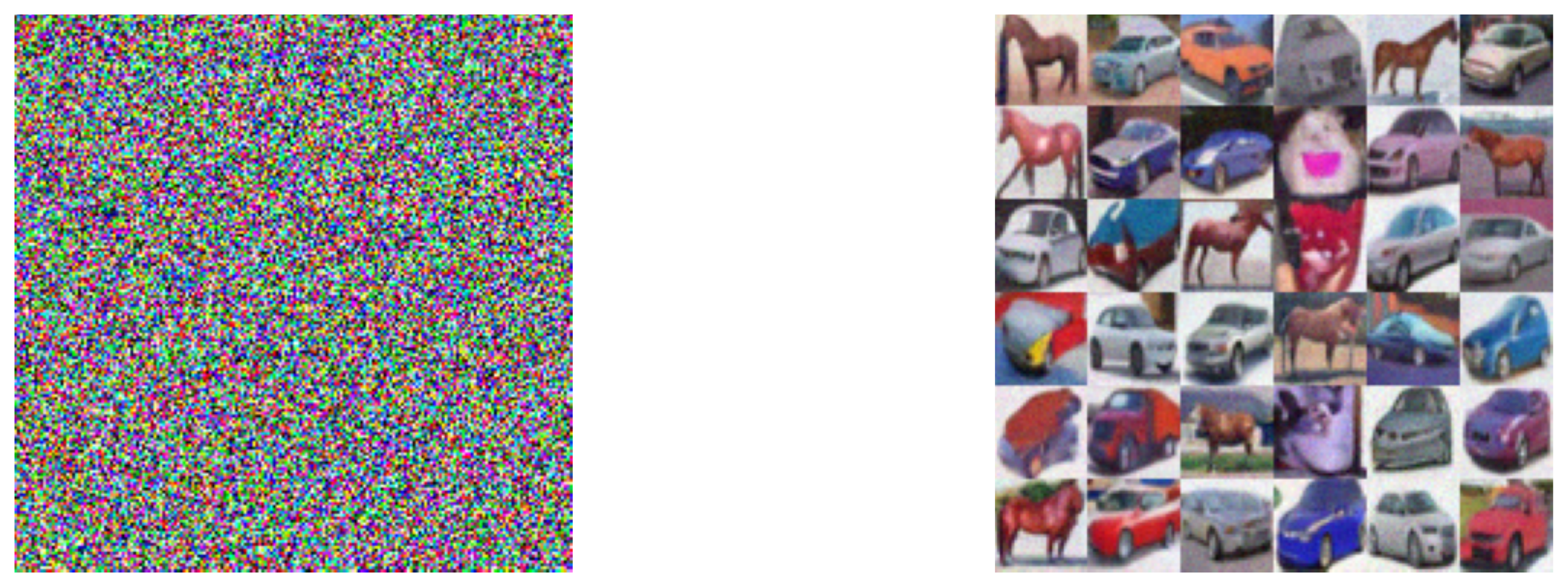}
    \caption{Images generated (right) by the baseline (ALD + SSM) from the noise (left).}
    \label{fig:baseline_pictures}
\end{figure}
\begin{figure}[h]
    \centering
    \includegraphics[width=0.9\textwidth]{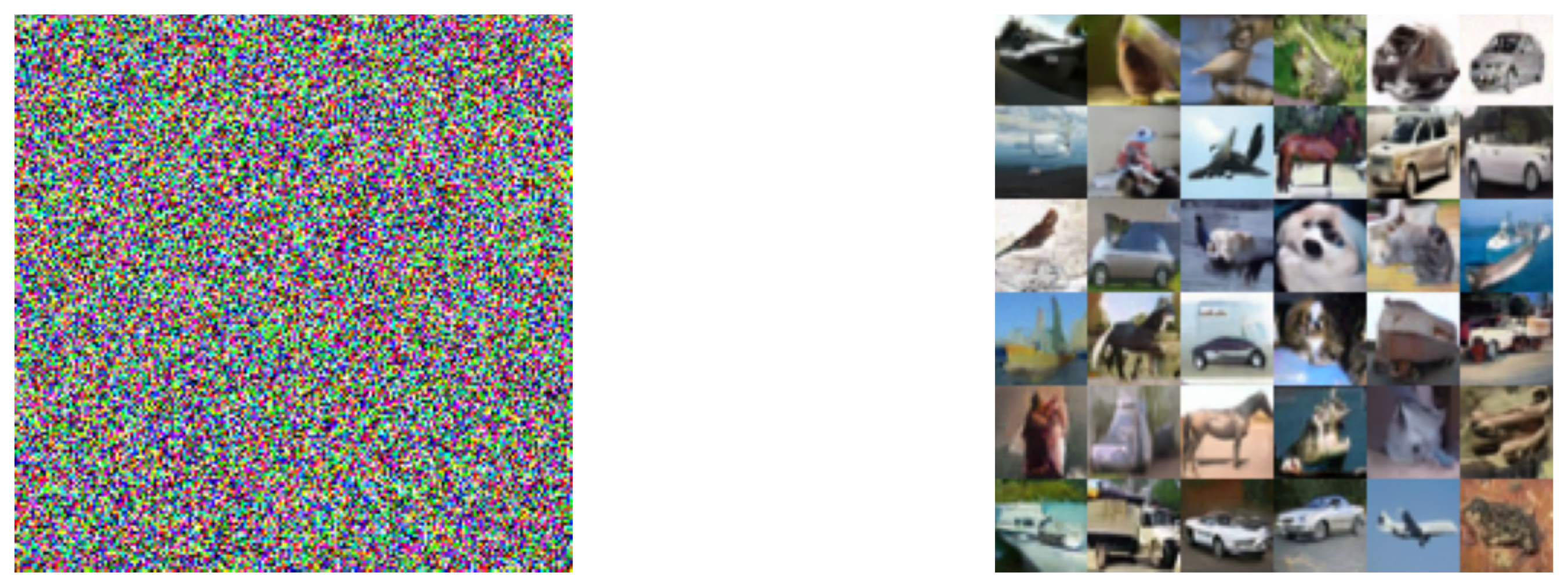}
    \caption{Images generated (right) by Action Matching from the noise (left).}
    \label{fig:am_generation_pictures}
\end{figure}
\begin{figure}[h]
    \centering
    \includegraphics[width=0.9\textwidth]{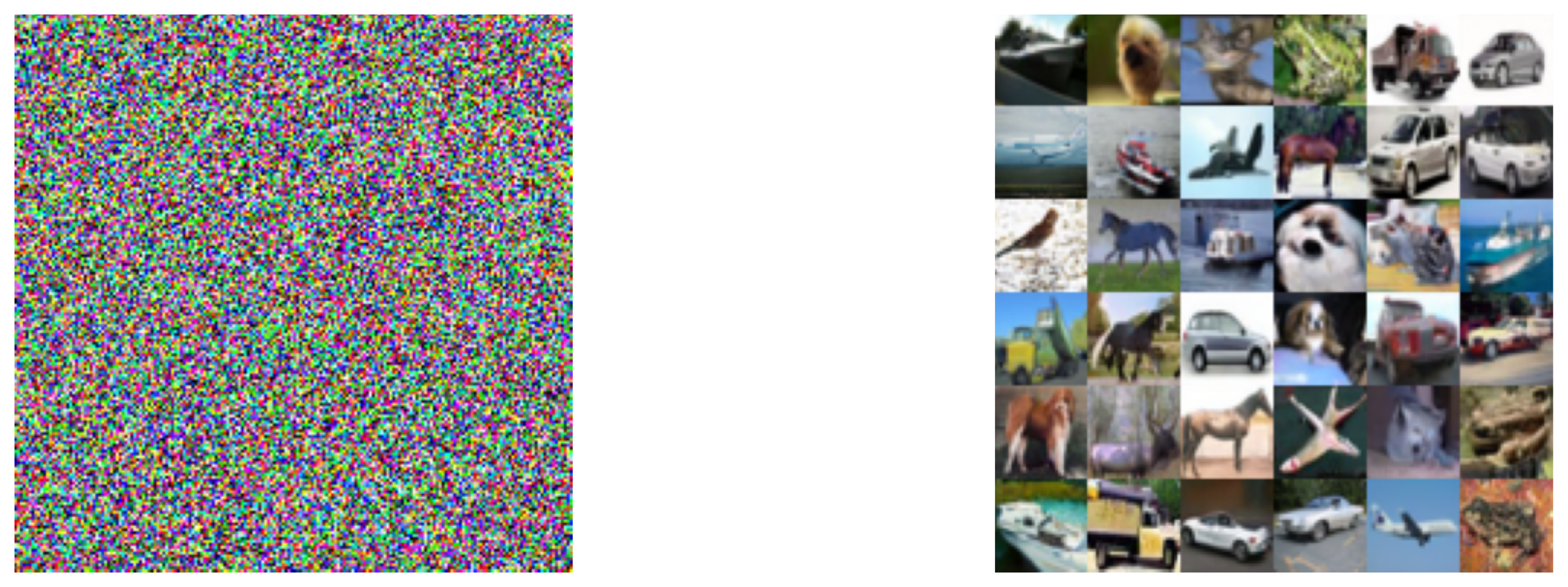}
    \caption{Images generated (right) by VP-SDE from the noise (left).}
    \label{fig:vpsde_generation_pictures}
\end{figure}
\begin{figure}[h]
    \centering
    \includegraphics[width=0.9\textwidth]{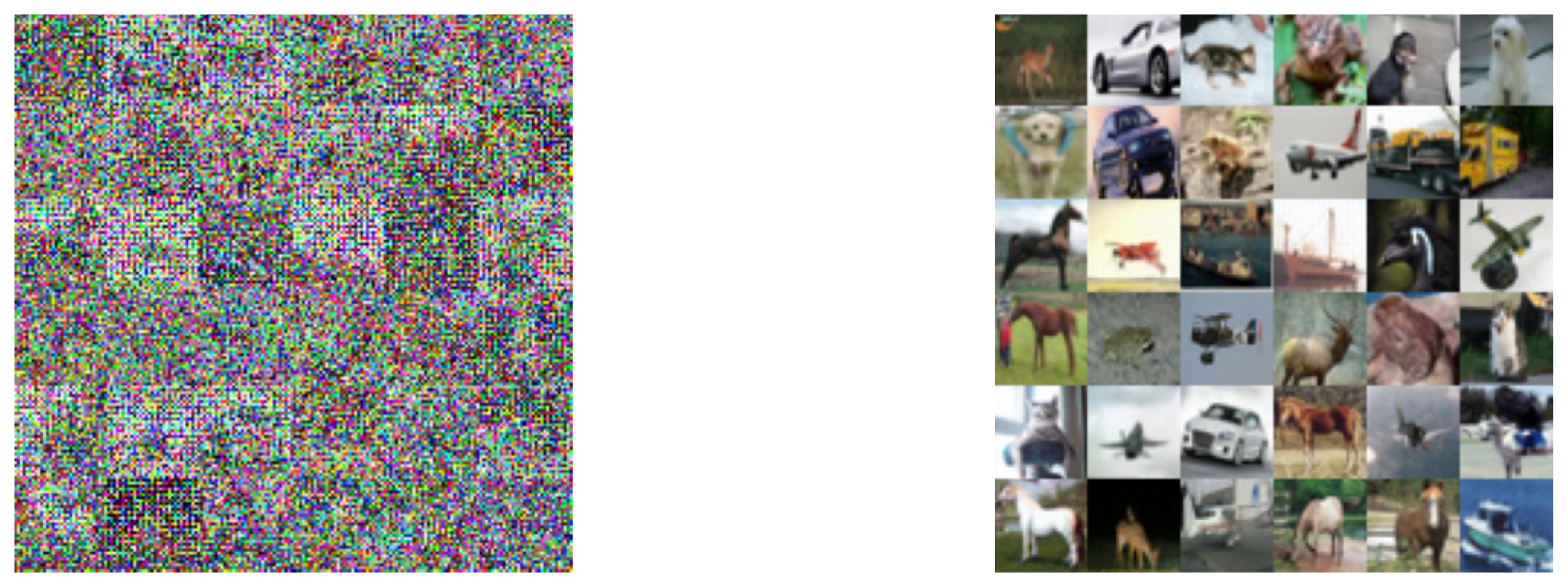}
    \caption{Images generated (right) by Action Matching from the lower resolution images (left).}
    \label{fig:superres_pictures}
\end{figure}
\begin{figure}[h]
    \centering
    \includegraphics[width=0.9\textwidth]{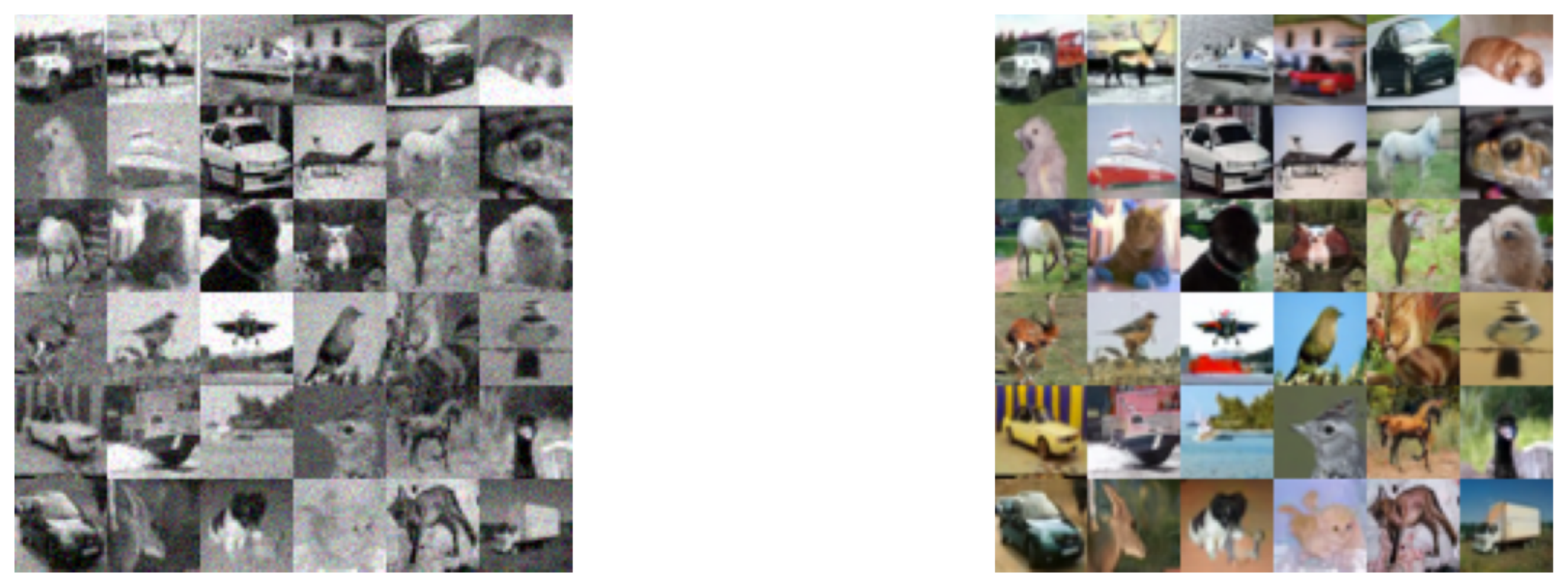}
    \caption{Images colored (right) by Action Matching from the grayscale images (left).}
    \label{fig:color_pictures}
\end{figure}

\end{document}